\documentclass{article}

% if you need to pass options to natbib, use, e.g.:
%     \PassOptionsToPackage{numbers, compress}{natbib}
% before loading neurips_2025

% ready for submission
\usepackage[final]{neurips_2025}
% to compile a preprint version, e.g., for submission to arXiv, add add the
% [preprint] option:
%     \usepackage[preprint]{neurips_2025}

% to compile a camera-ready version, add the [final] option, e.g.:
%     \usepackage[final]{neurips_2025}

% to avoid loading the natbib package, add option nonatbib:
%    \usepackage[nonatbib]{neurips_2025}

\usepackage[utf8]{inputenc} % allow utf-8 input
\usepackage[T1]{fontenc}    % use 8-bit T1 fonts
\usepackage{hyperref}       % hyperlinks
\usepackage{url}            % simple URL typesetting
\usepackage{longtable} 
\usepackage{booktabs}       % professional-quality tables
\usepackage{amsfonts}       % blackboard math symbols
\usepackage{nicefrac}       % compact symbols for 1/2, etc.
\usepackage{microtype}      % microtypography
\usepackage{xcolor}         % colors
\usepackage{tabularx}
\usepackage{multirow}
\usepackage{amsmath,amssymb,amsthm}
\usepackage{graphicx}
\usepackage{wrapfig}
\usepackage{enumitem}
\usepackage{mathtools}
\usepackage{pifont}
\usepackage{algorithm}
\usepackage{algorithmic}
\usepackage{caption}

\theoremstyle{plain}
\newtheorem{theorem}{Theorem}
\newtheorem{lemma}{Lemma}
\newtheorem{corollary}{Corollary}

\theoremstyle{definition}

\theoremstyle{remark}
\newtheorem{remark}{Remark}

\title{Beyond Pairwise Connections: Extracting High-Order Functional Brain Network Structures under Global Constraints}

% The \author macro works with any number of authors. There are two commands
% used to separate the names and addresses of multiple authors: \And and \AND.
%
% Using \And between authors leaves it to LaTeX to determine where to break the
% lines. Using \AND forces a line break at that point. So, if LaTeX puts 3 of 4
% authors names on the first line, and the last on the second line, try using
% \AND instead of \And before the third author name.

\author{%
  % --- Author Names with Superscripts ---
  Ling Zhan$^{1,2}$, \quad
  Junjie Huang$^1$, \quad
  Xiaoyao Yu$^1$, \quad
  Wenyu Chen$^{1,3}$, \quad
  Tao Jia$^{1,2,4}$\thanks{Corresponding author.}
  \\
  \vspace{0.1cm} % Optional: adds a little vertical space for clarity
  % --- Numbered Affiliations ---
  $^1$College of Computer and Information Science, Southwest University, Chongqing, China \\
  $^2$Chongqing Key Laboratory of Brain-Inspired Cognitive Computing\\ and Educational Rehabilitation for Children with Special Needs, \\ Chongqing Normal University, Chongqing, China\\
  $^3$School of Science, Guangxi University of Science and Technology, Guangxi, China \\
  $^4$College of Computer and Information Science, Chongqing Normal University, Chongqing, China \\
  \vspace{0.1cm}
  % --- Emails ---
  \texttt{zl0327@email.swu.edu.cn, junjiehuang@swu.edu.cn, xiaoyaoyu@email.swu.edu.cn,} \\
  \texttt{cwy610@gxust.edu.cn, tjia@swu.edu.cn}
}

\begin{document}

\maketitle

\begin{abstract}
  Functional brain network (FBN) modeling often relies on local pairwise interactions, whose limitation in capturing high-order dependencies is theoretically analyzed in this paper. Meanwhile, the computational burden and heuristic nature of current hypergraph modeling approaches hinder end-to-end learning of FBN structures directly from data distributions. To address this, we propose to extract high-order FBN structures under global constraints, and implement this as a \textbf{G}lobal \textbf{C}onstraints oriented \textbf{M}ulti-resolution (\textbf{GCM}) FBN structure learning framework. It incorporates $4$ types of global constraint (signal synchronization, subject identity, expected edge numbers, and data labels) to enable learning FBN structures for $4$ distinct levels (sample/subject/group/project) of modeling resolution. Experimental results demonstrate that \textbf{GCM} achieves up to a $30.6\%$ improvement in relative accuracy and a $96.3\%$ reduction in computational time across $5$ datasets and $2$ task settings, compared to $9$ baselines and $10$ state-of-the-art methods. Extensive experiments validate the contributions of individual components and highlight the interpretability of \textbf{GCM}. This work offers a novel perspective on FBN structure learning and provides a foundation for interdisciplinary applications in cognitive neuroscience. Code is publicly available on \url{https://github.com/lzhan94swu/GCM}.
\end{abstract}

\section{Introduction}
Functional brain networks (FBNs), which model inter-regional coordination as graphs, have become a central approach for understanding cognition, behavior, and mental disorders~\cite{sporns2013structure,bassett2018nature}. They are typically inferred from multivariate time series (MTS) recorded by fMRI or EEG~\cite{sameshima2014methods,kirch2015detection}, with the goal of capturing the inter-regional dependencies embedded in neural dynamics~\cite{friston2005models,smith2011network,luppi2024systematic}. However, because these dependencies are only indirectly observable, reliable network construction remains challenging~\cite{zong2024new,wang2023bayesian}. 

Traditional methods construct FBNs by estimating pairwise statistical interactions (e.g., correlation, coherence) between regional signals~\cite{cliff2023unifying,watanabe2013pairwise,cohen2009pearson,baumgratz2014quantifying}. Recent machine learning models extract latent features from MTS via neural encoders, but still rely on pairwise computations to form adjacency matrices for downstream graph analysis~\cite{du2024survey}. Despite current shift toward end-to-end modeling~\cite{zong2024new,owen2021high,zhang2017hybrid}, fully connected graphs could reinforce their performance in prediction according to our empirical study (\textbf{Appendix~\ref{app:ISA}}). This suggests that the initial structures generated by pairwise methods may be suboptimal or even detrimental. As illustrated by the XOR example in Figure~\ref{Compare} (Top-left), a brain region C might activate only when its two driving regions, A and B, are in opposite states (one active, one inactive)—a relationship that cannot be described by any pairwise correlation between (A,C) or (B,C) alone. This type of irreducible, multi-way dependency is a typical example of a high-order interaction. Recent evidence from hypergraph modeling also suggests that it is necessary to model high-order interactions in the brain~\cite{varley2023partial,scagliarini2022quantifying,santoro2023higher}, which cannot be fully captured by pairwise interactions as proved in \textbf{Section~\ref{sec:limitation}}. However their inferred FBNs are agnostic to analysis tasks and thus need further statistical processing. Meanwhile, these methods are practically limited by their strong assumptions or prior knowledge demands on dynamics~\cite{young2021hypergraph,casadiego2017model,wegner2024nonparametric} and high computational complexity~\cite{delabays2025hypergraph,malizia2024reconstructing}. 

\begin{figure}[!tp]
  \centering
  \includegraphics[width=0.9\textwidth]{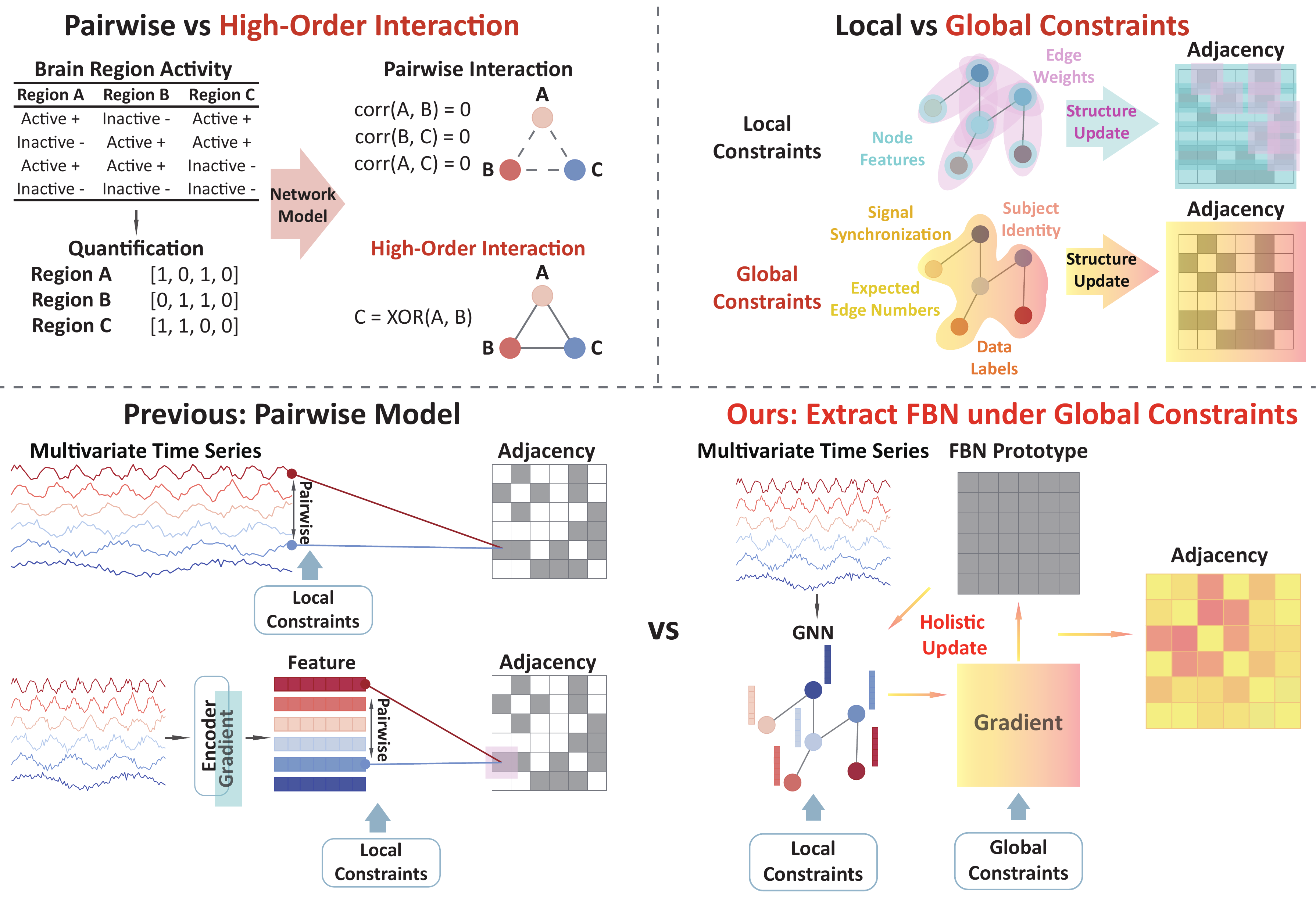}
  \captionsetup{skip=1pt}
  \caption{An illustration of the two paradigms for FBN modeling. \textbf{Top}: Conceptual diagrams illustrating the core theoretical differences. The top-left provides a concrete example of a high-order (XOR) interaction to demonstrate how pairwise models fail to capture patterns that high-order models can identify. The top-right contrasts the scope of Local Constraints (acting on single nodes or edges) versus Global Constraints (acting on the entire network). \textbf{Bottom}: Comparison between the previous pairwise paradigm with our proposed \textbf{GCM} framework, which treats the adjacency matrix as a single, learnable object holistically shaped by global objectives.}
  \label{Compare}
\end{figure}

Motivated by the observation that high-level cognitive demands shape the organization of functional brain networks~\cite{park2013structural,bassett2017network}, we propose to extract FBNs from MTS under global constraints, as illustrated in Figure~\ref{Compare} (Top-right). In this paradigm, we distinguish between two types of constraints. Local constraints operate on a node-pair basis, such as using Pearson correlation to determine a single edge weight, where the final graph is an aggregation of many independent, local decisions. In contrast, global constraints are top-down priors that shape the entire network structure holistically, guiding the optimization of the adjacency matrix as a single entity rather than a collection of independent edges. In this way, FBNs can be learned efficiently and constrained by input MTS and global priors directly, without relying on predefined dynamics~\cite{wang2023bayesian}.

Therefore, we design the \textbf{G}lobal \textbf{C}onstraints oriented \textbf{M}ulti-resolution (\textbf{GCM}) framework as an implementation of this paradigm. As illustrated in Figure~\ref{Compare} (Bottom), comparing to previous pairwise models, in \textbf{GCM}, FBNs are treated as structured entities optimized under the holistic influence of our global constraints, derived from both data and task semantics, rather than byproducts of local interactions. To ensure interpretability, \textbf{GCM} requires clear semantic specifications on the learned structures and global constraints. As for structure, it is essential to define the modeling resolution, i.e., the unit of representation on which the graph structure operates~\cite{van2019cross,dadi2019benchmarking}. Depending on the analytic goal, \textbf{GCM} supports four distinct resolutions: sample (e.g., short-term activity~\cite{avena2018communication}), subject (e.g., individual traits~\cite{finn2015functional}), group (e.g., cohort-level patterns~\cite{lu2024brain}), and project (entire dataset, e.g., population-level abstractions~\cite{fan2016human}). To the best of our knowledge, the semantic distinction between these resolutions has not been well-addressed. However, without such explicit resolution specification the learned FBNs risk semantic ambiguity or misalignment with downstream tasks~\cite{poldrack2008guidelines,bishop2006pattern}. Our theoretical and empirical analysis validate the necessity of this formulation.%, which highlight the importance of resolution-aware modeling. the learning objective becomes ill-posed, and

As a support to the concept of global constraints, a recent work in control theory and cognitive science has proposed a perceptual control architecture~\cite{cochrane2025control}. In this framework, organismal functions adapt to environmental disturbances by integrating diverse sensory feedback into coordinated internal responses. Inspired by this view, \textbf{GCM} models the brain's functional structure as an adaptive variable shaped by multiple global constraints as signals.  These constraints, which we define as global because they influence the entire network structure simultaneously, reflect a distinct form of global feedback, while their joint influence is integrated through a unified gradient. Specifically, our framework integrates four types of global constraints. (1) \textbf{Signal Synchronization}: The model is end-to-end, meaning the learned graph is inherently constrained by the high-order temporal dynamics within the raw signals. (2) \textbf{Subject Identity}: This constraint imposes a consistent intra-subject signature, or "neural fingerprint," through contrastive regularization~\cite{garrison2015stability}. (3) \textbf{Expected Edge Numbers}: An adaptive constraint ensures global sparsity in the learned network~\cite{barabasi2013network}. (4) \textbf{Data Labels}: Task labels guide the structure to align across samples via a supervised loss. These constraint signals are integrated through gradient and fed back through back-propagation, with Gumbel-Sigmoid~\cite{jang2017categorical} enabling differentiable sampling over binary graphs. Meanwhile, to retain local co-activation patterns, \textbf{GCM} integrates a graph neural network (GNN) as its backbone. Our empirical results report the contribution of each module.

Our contributions are threefold. First, we provide a theoretical proof that pairwise-based models are mathematically incapable of recovering the high-order interactions present in MTS, thereby establishing the necessity of a new paradigm, such as our proposal of learning FBNs under global constraints. Second, we implement this paradigm as \textbf{GCM} that directly learns discrete FBNs under global constraints specific to four semantically separated modeling resolutions. Third, extensive empirical evaluation confirms that \textbf{GCM} not only improves classification performance but also enhances the interpretability and semantic alignment of the learned FBNs.

\section{Related Work}
Because the idea of high-order FBN structure is addressed in hyper network modeling, and the design of \textbf{GCM} is inspired by the ideas of Brain Graph Interpretation (BGI)~\cite{zheng2024brainib} and Graph Structure Learning (GSL)~\cite{wang2023prose}, we briefly introduce the related works along these aspects in this section.

\paragraph{Hyper Network Modeling}
Research on FBNs aims to model and understand the cooperative functions of the brain and has prospered with the development of complex network theory~\cite{smith2011network}. Pairwise-based network modeling is a relatively mature field and has been approached from numerous perspectives utilizing diverse tools both in network science and machine learning~\cite{wang2023bayesian,power2010development,castaldo2023multi,qiao2018data, kan2022fbnetgen}. Comparatively, hypergraph inference is still in its early stages, but the field is evolving rapidly~\cite{wegner2024nonparametric,zang2024stepwise,tabar2024revealing}. Recent efforts span probabilistic modeling grounded in historical connectivity~\cite{young2021hypergraph,contisciani2022inference} or contagion traces~\cite{wang2022full}, as well as optimization-based formulations adapted to MTS~\cite{malizia2024reconstructing}. Despite these advances, their computational overhead presents a major bottleneck for scaling to real-world datasets~\cite{delabays2025hypergraph,malizia2024reconstructing}. More recently, a pioneering work proposes an end-to-end model for learning a fixed number of hyperedges from individual samples~\cite{qiu2024learning}. In contrast, our proposed \textbf{GCM}framework is explicitly designed to learn a single, generalizable FBN that represents an entire cohort (e.g., at the subject or group resolution), a distinct goal focused on cross-sample interpretability.

\paragraph{Brain Graph Interpretation}
Brain Graph Interpretation (BGI) methods aim to identify and select important edges in FBNs that contribute most to predictive outcomes~\cite{ghalmane2020extracting}. These approaches typically construct FBNs based on pairwise interactions and learn gradient-updated masks to filter structures. Some methods refine the masks dynamically during training~\cite{li2023interpretable,qu2025integrated}, others generate predictive subgraphs to facilitate interpretation~\cite{zheng2024brainib,zheng2024ci,luo2024knowledge}, integrate multiple modalities for cross-validation~\cite{qu2025integrated,gao2024comprehensive,qu2023interpretable}, or employ attention mechanisms as trainable criteria for edge selection~\cite{xu2024contrastive,hu2021gat,kim2021learning}. These methods rely on pairwise-constructed FBNs and apply post-hoc explanations without structural optimization. Consequently, their explanations are often resolution-agnostic, lacking explicit differentiation across semantic levels. In contrast, our framework directly models functional structures under global constraints at multiple modeling resolutions.

\paragraph{Graph Structure Learning}
Graph Structure Learning (GSL) treats input network structures as noisy or incomplete and aims to refine them based on node features and initial graphs~\cite{wang2023prose}. Traditional GSL research focuses on defending against adversarial attacks~\cite{ijcai2019p669,yuan2025dg}, later evolving toward task-specific structure optimization~\cite{ijcai2022reg}, mostly for node-level tasks on single graphs~\cite{wu2022nodeformer,ghiasi2025enhancing,zheng2024mulan,ng2024structure,qiu2024refining,xie2025robust}. Recent benchmarks~\cite{li2024gslb} highlight that only a few models, such as VIB-GSL~\cite{sun2022graph} and HGP-SL~\cite{zhang2019hierarchical}, address structure learning across multiple graphs. In brain network modeling, Zong et al.~\cite{zong2024new} introduced GSL to FBN learning by aligning brain regions using a diffusion model and constructing structures based on feature correlations.

Unlike prior methods, \textbf{GCM} supports end-to-end extraction of high-order FBN structures by explicitly encoding modeling resolutions and global constraints, aspects often neglected in existing approaches.

\section{Preliminaries}

\begin{wrapfigure}{r}{0.5\textwidth}
  \centering
  \includegraphics[width=0.45\textwidth]{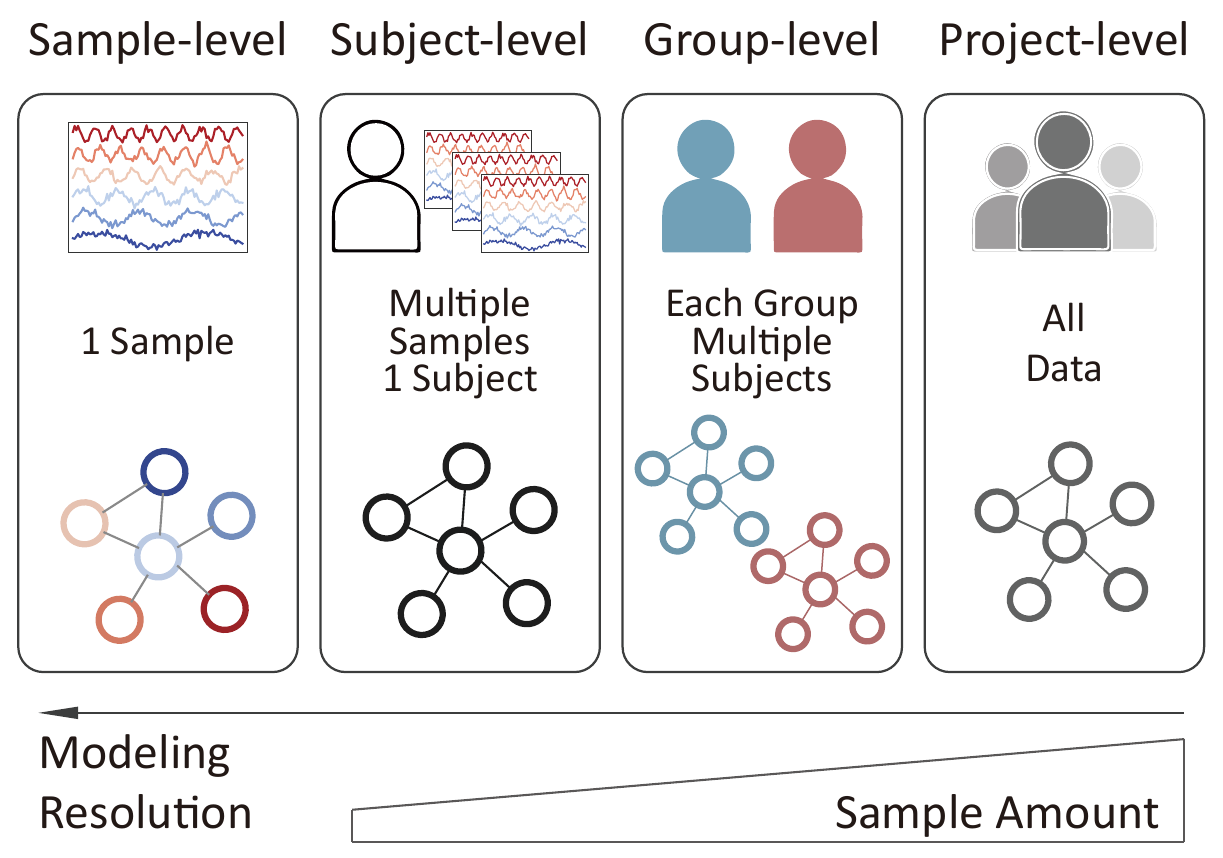} % 替换为你的图片文件名，路径包含figs文件夹
  \captionsetup{skip=1pt}
  \caption{Illustration of the four modeling resolutions defined in this work. Each resolution corresponds to a distinct level of structural abstraction and data aggregation.}
  \label{Frame}
\end{wrapfigure}

We begin by defining a “sample” as the minimal unit of MTS (e.g., a single EEG or fMRI fragment). Formally, each sample is a matrix $\mathbf{X} \in \mathcal{X}$ with shape $N \times T$, where $N$ is the number of nodes and $T$ is the number of time stamps. Each \textbf{FBN structure} is an undirected graph $\mathcal{G} = (\mathcal{V}, \mathcal{E})$ with $|\mathcal{V}| = N$, where $\mathcal{V}$ denotes node set and $\mathcal{E}$ denotes edge set.

\paragraph{Levels of modeling resolution.} We consider four levels of modeling resolution, which specify how many samples are aggregated when forming a single FBN:

\emph{Sample level}: Each sample corresponds to one FBN, representing the network within a single recording (e.g. dynamic FBN)~\cite{avena2018communication}. \emph{Subject level}: All samples from the same subject share one FBN, reflecting a relatively stable brain state (e.g. brain fingerprinting)~\cite{finn2015functional}. \emph{Group level}: Samples from a group (e.g., a clinical cohort) form one FBN, capturing a representative pattern of that group (e.g. cognitive states)~\cite{lu2024brain}. \emph{Project level}: The entire dataset yields a single FBN, often used as a statistical template for the project (e.g. brain atlas)~\cite{fan2016human}. Formally, a research level defines a \emph{mapping function} $\phi: \mathbf{X} \mapsto \mathcal{G}$, which aggregates one or more samples into an FBN. \textbf{Appendix~\ref{app:non_equiv}} rigorously proves that sample, subject, group, and project FBNs are semantically non‑equivalent.

\paragraph{FBN Structure Learning.} We model FBN structure learning problem as extracting connectivity structures from MTS specific to the modeling resolution. Formally, let $\mathcal{P}(\mathcal{X})$ denote the distribution of samples and $\mathcal{P}(\mathcal{Y})$ the corresponding label distribution. Our goal is to learn a parameterized family of graph distributions $\mathcal{P}(\mathcal{G} \mid \phi, \theta(\phi))$, where $\theta(\phi)$ are learnable parameters conditioned on the chosen research level mapping $\phi$. Each distribution is induced by the adjacency matrix distribution $\mathcal{P}(\mathcal{A} \mid \phi, \theta(\phi))$, where adjacency matrices $\mathbf{A} \in \mathbb{R}^{N \times N}$ encode expected connectivity probabilities $\mathbb{E}_{\mathbf{X}, y}[p(e_{u,v} \mid \phi, \theta(\phi))]$. Here $e_{u,v}$ denotes an edge between nodes $u$ and $v$ in the FBN, and $p(e_{u,v} \mid \phi, \theta(\phi))$ flexibly models connectivity without assuming specific parametric forms. Thus, by learning $\mathcal{P}(\mathcal{G} \mid \phi, \theta(\phi))$ under different modeling resolutions $\phi$, we enable a \emph{multi-resolution} analysis of functional brain networks.

\section{The Limitation of Pairwise Network Model}\label{sec:limitation}
In this section, we formally prove that pairwise network model cannot fully catch the high-order coherence encoded in MTS. Our analysis aligns with and extends prior findings on linear consensus dynamics~\cite{neuhauser2020multibody} and triadic interactions~\cite{delabays2025hypergraph}.
\subsection{Definitions}
Let $\mathbf X(t)=\bigl[X_1(t),\dots,X_N(t)\bigr]^\top$ be a $N$–dimensional, zero‑mean, finite‑variance MTS. For each lag $\omega\in\mathbb Z$ denote the second‑order moment $\Sigma_{ij}(\omega)\;=\;\operatorname{Cov}\!\bigl[X_i(t),X_j(t+\omega)\bigr]$. Higher‑order joint structure is captured by the $k$‑th order
\textit{cumulant}\footnote{%
  Any other faithful measure of $k$‑variable dependence (e.g.\ joint
  mutual information, total correlation) would serve equally well.}
\begin{equation}
  \kappa_{i_1\dots i_k}(\omega_1,\dots,\omega_{k-1})
  \;=\;\operatorname{cum}\!\bigl[X_{i_1}(t),
     X_{i_2}(t+\omega_1),\dots,X_{i_k}(t+\omega_{k-1})\bigr].
\end{equation}

\paragraph{Proof.}
A pairwise network is an adjacency matrix
$\mathbf A\in\mathbb R^{N\times N}$ whose
entries are generated by $A_{ij}\;=\;f_{ij}(X_i,X_j)$, where $f_{ij}$ is a specific edge rule depending only on the bivariate distribution of $(X_i,X_j)$.

\paragraph{Higher‑order dependence.}
We say that $\mathbf X$ exhibits \textit{higher‑order dependence} if
there exists $k\ge 3$ and indices $i_1,\dots,i_k$ such that
$\kappa_{i_1\dots i_k}(\cdot)\neq 0$.

% -------------------------------------------------------------
\subsection{The Expressive Limit of Pairwise Network Models}
\label{sec:impossibility}

\begin{lemma}[Second‑order sufficiency]
\label{lem:gaussian}
A random vector $\mathbf Z$ is multivariate Gaussian \emph{iff} all cumulants of order $k\ge 3$ vanish\textup{~\cite{lukacs1970characteristic}}. Consequently, only for Gaussian data is the joint distribution fully determined by second‑order statistics.
\end{lemma}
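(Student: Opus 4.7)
The plan is to prove the biconditional through the cumulant generating function $K_{\mathbf Z}(\mathbf t)=\log\mathbb E[\exp(i\mathbf t^\top \mathbf Z)]$, exploiting the fact that joint cumulants appear as the coefficients of its Taylor series about the origin:
\[
  K_{\mathbf Z}(\mathbf t) \;=\; \sum_{k\ge 1} \frac{i^{k}}{k!}\sum_{i_1,\dots,i_k} \kappa_{i_1\dots i_k}\, t_{i_1}\cdots t_{i_k}.
\]
This reformulation reduces the lemma to showing that $K_{\mathbf Z}$ truncates at degree two \emph{exactly} when $\mathbf Z$ is Gaussian.

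For the forward direction I would substitute the standard multivariate Gaussian characteristic function $\phi_{\mathbf Z}(\mathbf t)=\exp(i\mathbf t^\top\mu-\tfrac12\mathbf t^\top\Sigma\mathbf t)$, read off $K_{\mathbf Z}(\mathbf t)=i\mathbf t^\top\mu-\tfrac12\mathbf t^\top\Sigma\mathbf t$, and observe that every mixed partial derivative of order $k\ge 3$ is identically zero, so each $\kappa_{i_1\dots i_k}$ with $k\ge 3$ vanishes.

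The converse is the delicate step and is where I expect the main obstacle. Assuming $\kappa_{i_1\dots i_k}=0$ for all $k\ge 3$, the Taylor expansion collapses to a polynomial of degree at most two in a neighborhood of the origin. Extending this equality globally on $\mathbb R^N$ requires \emph{Marcinkiewicz's theorem}, which asserts that no log-characteristic function can be a polynomial of degree strictly greater than two; contrapositively, if the local expansion terminates at degree two it must do so globally. I would then identify the resulting quadratic form with $i\mathbf t^\top\mu-\tfrac12\mathbf t^\top\Sigma\mathbf t$, argue that $\Sigma$ must be positive semidefinite (otherwise $|\phi_{\mathbf Z}|$ is unbounded), and conclude via uniqueness of characteristic functions that $\mathbf Z\sim\mathcal N(\mu,\Sigma)$.

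The ``consequently'' clause then follows immediately: two distributions sharing mean and covariance agree on the quadratic part of $K$, and they coincide if and only if all higher cumulants agree as well. Because the biconditional just established pins down the vanishing of higher cumulants to the Gaussian case alone, second-order statistics can be a complete description of the joint law only in that regime. The central analytic commitment of the proof is the appeal to Marcinkiewicz's theorem; the remainder is either explicit algebraic computation or a direct invocation of the uniqueness theorem for characteristic functions.
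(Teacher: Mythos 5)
The paper itself does not prove this lemma; it simply cites Lukacs (1970) and uses the statement as a black box in the proof of Theorem~\ref{thm:limit}, so your attempt should be judged as a standalone reconstruction of the classical fact. Your forward direction is fine: the Gaussian log-characteristic function is exactly $i\mathbf t^\top\mu-\tfrac12\mathbf t^\top\Sigma\mathbf t$, so all mixed derivatives of order $k\ge 3$ vanish.

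The converse, however, contains a genuine gap at the step ``the Taylor expansion collapses to a polynomial of degree at most two \emph{in a neighborhood of the origin}.'' Cumulants are only the derivatives of $K_{\mathbf Z}$ at $\mathbf t=\mathbf 0$; knowing that all derivatives of order $\ge 3$ vanish does \emph{not} imply that $K_{\mathbf Z}$ coincides with its quadratic Taylor polynomial on any neighborhood unless you already know the characteristic function is analytic there, and analyticity is not automatic for a distribution merely possessing all moments (this is precisely the moment-indeterminacy phenomenon: a smooth characteristic function is in general not determined by its derivatives at the origin). Compounding this, Marcinkiewicz's theorem is not a local-to-global extension principle: it states that $\exp(P(\mathbf t))$ with $\deg P>2$ is never a characteristic function, which is the tool you would need if you had a global polynomial representation of too-high degree, not a device for upgrading a formal local identity to a global one. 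The standard repair (and essentially the argument in Lukacs' book that the paper cites) is to argue through moments rather than through a local expansion: since moments are universal polynomials in the cumulants, vanishing of all cumulants of order $\ge 3$ forces every moment of $\mathbf Z$ to equal the corresponding moment of $\mathcal N(\mu,\Sigma)$; the Gaussian moment sequence satisfies Carleman's condition (equivalently, its growth guarantees a characteristic function analytic in a strip), so the Gaussian moment problem is determinate, and therefore $\mathbf Z\sim\mathcal N(\mu,\Sigma)$. With that substitution your outline, including the ``consequently'' clause, goes through; as written, the neighborhood claim and the appeal to Marcinkiewicz do not.
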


\begin{theorem}[The limitation of Pairwise Network Model]
\label{thm:limit}
Let $\mathbf X^{(1)}$ and $\mathbf X^{(2)}$ be two $d$‑dimensional time series $(d\ge 3)$ satisfying $\Sigma^{(1)}_{ij} \omega)\;=\;\Sigma^{(2)}_{ij}(\omega), \forall i,j,\forall \omega\in\Omega,$ for some finite lag set $\mathcal T$ actually used by the edge rule. Assume further that there exists $k\ge 3$ with
\begin{equation}
  \kappa^{(1)}_{i_1\dots i_k}(\cdot)\;\neq\;
  \kappa^{(2)}_{i_1\dots i_k}(\cdot)
  \quad \text{for some}\; i_1,\dots,i_k .
\end{equation}
Then \textbf{every} pairwise network generator $A_{ij}=f_{ij}(X_i,X_j)$ produces $\mathbf A^{(1)}=\mathbf A^{(2)},$ while the two joint processes $\mathbf X^{(1)}\not\stackrel{\mathrm{law}}{=}\mathbf X^{(2)}$. \textup{(Proof in \textbf{Appendix~\ref{sec:xor}})}.
\end{theorem}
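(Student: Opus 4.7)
The plan is a two-step reduction followed by a concrete existence argument. The key structural observation is that, as stated in the preliminaries, every pairwise generator $f_{ij}$ is a functional of only the bivariate information of $(X_i,X_j)$ indexed by the lag set $\Omega$. Because the edge rule in the theorem's hypothesis consumes only the second-order statistics $\{\Sigma_{ij}(\omega):\omega\in\Omega\}$, the first equality in the hypothesis guarantees that these inputs coincide between $\mathbf{X}^{(1)}$ and $\mathbf{X}^{(2)}$. Hence $f_{ij}$ returns identical values for every pair, and $\mathbf{A}^{(1)}=\mathbf{A}^{(2)}$ follows immediately.

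To establish the second conclusion, I would appeal to the definition of cumulants as deterministic functionals of the joint law: if two processes disagree on any cumulant, their joint laws necessarily differ. The second hypothesis supplies exactly such a discrepancy at some order $k\ge 3$, forcing $\mathbf X^{(1)}\not\stackrel{\mathrm{law}}{=}\mathbf X^{(2)}$. Lemma~\ref{lem:gaussian} is the background justification that this situation is genuinely achievable rather than vacuous: only in the Gaussian case does second-order information determine the joint law, so for non-Gaussian MTS one is free to have matching covariances alongside differing higher-order cumulants.

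The main obstacle, and the part I expect Appendix~\ref{sec:xor} to carry, is showing that the two hypotheses can be \emph{simultaneously} realized by an explicit $d$-dimensional process. The natural witness is the XOR triad: let $X_1,X_2$ be i.i.d.\ Bernoulli$(1/2)$, set $X_3^{(1)}=X_1\oplus X_2$, and let $X_3^{(2)}$ be an independent Bernoulli$(1/2)$. A direct computation confirms that every bivariate marginal is uniform on $\{0,1\}^2$ and every $\Sigma_{ij}(0)$ vanishes in both processes, while the third-order cumulant $\kappa_{123}$ is nonzero only for process~1. Lifting this triad to the stationary MTS framework via i.i.d.\ temporal replication preserves the equality of $\Sigma_{ij}(\omega)$ across all lags, and padding with independent coordinates extends the construction to $d>3$. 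One subtlety worth flagging is that $f_{ij}$ is defined in the preliminaries as depending on the full bivariate distribution rather than only on second-order moments; this does not break the argument, because the XOR construction equalises the bivariate marginals entirely, not merely their covariances.
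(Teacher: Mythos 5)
Your proposal is correct and follows essentially the same route as the paper: matched second-order inputs make every pairwise edge rule output identical values (hence $\mathbf A^{(1)}=\mathbf A^{(2)}$), the differing cumulant of order $k\ge 3$ separates the joint laws via Lemma~\ref{lem:gaussian}, and your XOR triad is precisely the constructive example the paper appends to its proof in Appendix~\ref{sec:xor}. The only difference is one of emphasis — the paper treats the theorem as a conditional statement with the XOR process as an illustration of non-vacuity, whereas you foreground realizability and additionally note that the XOR witness equalizes the full bivariate marginals (not just covariances), a point the paper leaves implicit.
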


\begin{corollary}
\textbf{\textup{Theorem}}\textup{~\ref{thm:limit}} remains valid for \emph{weighted} graphs (let $g$ be identity), directed graphs, or any edge rule that aggregates over a finite lag set~$\Omega$.
\end{corollary}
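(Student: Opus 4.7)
The plan is to split the theorem into an algebraic half (matching adjacency matrices) and a distributional half (non-equivalence in law), then exhibit an explicit witness so that the statement is non-vacuous. The algebraic half is a one-liner under the stated hypothesis, the distributional half is a direct corollary of Lemma~\ref{lem:gaussian} in its cumulant-characterization form, and the witness is the XOR construction already previewed in Figure~\ref{Compare}.

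For the algebraic half, I would observe that any edge value $A_{ij} = f_{ij}(X_i, X_j)$ depends on the bivariate process $(X_i, X_j)$ only through the finite collection of second-order moments $\{\Sigma_{ij}(\omega) : \omega \in \Omega\}$ that the rule actually consults. The hypothesis $\Sigma^{(1)}_{ij}(\omega) = \Sigma^{(2)}_{ij}(\omega)$ for every $i,j$ and every $\omega \in \Omega$ then forces $A^{(1)}_{ij} = A^{(2)}_{ij}$ pointwise. This also immediately settles the Corollary, since the specific functional form of $f_{ij}$ (weighted, thresholded, directed, or aggregated across $\Omega$) never enters the deduction.

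For the distributional half, I would invoke Lemma~\ref{lem:gaussian} in contrapositive form: assuming the joint laws are determined by their moments (standard exponential-moment or boundedness conditions suffice), the joint distribution of a stationary MTS is encoded in the full collection of cumulants $\{\kappa_{i_1 \dots i_k}\}_{k \ge 1}$. Since the hypothesis supplies an index tuple where the $k$-th cumulant $(k \ge 3)$ already disagrees between $\mathbf{X}^{(1)}$ and $\mathbf{X}^{(2)}$, the two processes cannot coincide in law. To show the hypotheses are simultaneously satisfiable, I would then exhibit an XOR-type witness: with $N=3$, take $(X_1^{(1)}, X_2^{(1)})$ i.i.d.\ symmetric $\{\pm 1\}$ and set $X_3^{(1)} = X_1^{(1)} X_2^{(1)}$, while $\mathbf{X}^{(2)}$ has three mutually independent components with the same marginals. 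A short check gives $\Sigma^{(1)}_{ij}(0) = \Sigma^{(2)}_{ij}(0)$ (off-diagonals vanish by pairwise independence, diagonals equal one), whereas $\kappa^{(1)}_{123} = \mathbb{E}[X_1 X_2 X_3] = 1 \ne 0 = \kappa^{(2)}_{123}$. Lifting this snapshot to a stationary MTS by i.i.d.\ repetition across $t$ preserves the match on $\Sigma_{ij}$ for any finite lag set and the mismatch of the triadic cumulant.

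The subtlest point, and the one I expect to require the most care, is reconciling the definition of a pairwise rule with the hypothesis. A hostile reading of ``depending only on the bivariate distribution of $(X_i, X_j)$'' would allow $f_{ij}$ to probe bivariate cumulants of order $\ge 3$ of the pair, which the hypothesis on $\Sigma_{ij}$ alone does not control. To close this loophole I would either restrict the admissible rules to those that consult only the finite family $\{\Sigma_{ij}(\omega) : \omega \in \Omega\}$ --- matching the Corollary's phrasing and standard practice (correlation, coherence, partial correlation) --- or strengthen the hypothesis to agreement of all bivariate marginals, which in the XOR witness above still holds. Either convention removes the ambiguity, and the two halves of the argument then combine to deliver the claimed impossibility.
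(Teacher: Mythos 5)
Your proposal is correct and follows essentially the same route as the paper: the paper's own proof of Theorem~\ref{thm:limit} (Appendix~\ref{sec:xor}) likewise argues that matched second-order moments over $\Omega$ force $\mathbf A^{(1)}=\mathbf A^{(2)}$ while a mismatched cumulant of order $k\ge 3$ forces $\mathbf X^{(1)}\not\stackrel{\mathrm{law}}{=}\mathbf X^{(2)}$, uses the same XOR witness, and treats the corollary as immediate because nothing in that deduction uses unweightedness, undirectedness, or a single lag. Your explicit closing of the loophole about rules that probe higher-order bivariate statistics (restricting $f_{ij}$ to consult only $\{\Sigma_{ij}(\omega):\omega\in\Omega\}$, or matching full bivariate marginals as the XOR example does) is in fact more careful than the paper, which simply asserts that $f_{ij}$ depends only on second-order information.
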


\section{Methodology}
We introduce the proposed \textbf{GCM} in this section. It consists of three modules: (i) a prototype-based graph generator with Gumbel–Sigmoid relaxation, (ii) a batch-wise binarization algorithm (\textbf{BBA}) that enforces hard edge sparsity, and (iii) a multi-objective training loss aligned with task labels, subject identity, and sparsity priors. \textbf{Appendix~\ref{app:pc}} summarizes pseudocode for \textbf{GCM} and \textbf{BBA}.

% ------------------------------------------------------------
\paragraph{Prototype Graph and Continuous Relaxation.}
\label{subsec:proto} We learn a binarized functional brain network (FBN)
$\mathbf A\!\in\!\{0,1\}^{N\times N}$ from the conditional distribution
$\mathcal P(\mathcal A\mid\phi,\theta(\phi))$, where
$\phi$ parameterises the graph sampler and
$\theta(\phi)$ denotes GNN weights.
Instead of drawing $\mathbf A$ directly, \textbf{GCM} first samples a
prototype matrix $\hat{\mathbf A}\!\sim\!\mathcal P(\hat{\mathcal A}\mid\phi,\theta(\phi))$ and symmetrises it by $\mathbf A = \sigma\!\Bigl(\hat{\mathbf A}+\hat{\mathbf A}^{\!\top}\Bigr),$ where $\sigma(\cdot)$ is the element‑wise sigmoid so that $\mathbf A_{ij}\!\in\![0,1]$. We initialise $\hat{\mathbf A}_{ij}\!\stackrel{\text{i.i.d.}}{\sim}\!\mathcal U(0,1)$.

To enable back‑propagation through discrete edges, we adopt the Gumbel–Softmax (Concrete) relaxation~\cite{jang2017categorical}.
Let $\mathbf M_{ij}\!=\!-\log(-\log u),\ u\!\sim\!\mathcal U(0,1)$;
the relaxed adjacency is $\tilde{\mathbf A}=\sigma\!((\mathbf A + \mathbf M)/\tau)$ where $\tau$ is the temperature. Gradients w.r.t. $\hat{\mathbf A}$ and $\theta$ are obtained via the chain rule.

% ------------------------------------------------------------
\paragraph{Batch Binarization Algorithm (BBA).}
\label{subsec:bba} Given a mini‑batch
$\tilde{\mathbf A}\!\in\!\mathbb R^{B\times N\times N}$,
\textbf{BBA} keeps the $k^{(b)}$ largest (row‑major flatted) entries in each
$\tilde{\mathbf A}^{(b)}$, where $k^{(b)}=\frac12 \sum_{i=1}^N \sum_{j\neq i} \tilde{\mathbf{A}}^{(b)}$ is the expected edge number of $\tilde{\mathbf{A}}^{(b)}$ updated during the training process to ensure an adaptive constraint of sparsity, and $b$ is the index of network within the batch:
\begin{equation}
  \mathring{\mathbf Q}^{(b)}
    =
  \mathbf m^{(b)}\odot\mathbf Q^{(b)},
  \quad
  \mathbf m^{(b)}_j
    =
  \mathbf 1\!\{j\!\in\!\text{Top}\text{-}k^{(b)}\},
  \label{eq:bba}
\end{equation}
followed by reshaping back to obtain a binarized batch
$\{\mathcal G^{(b)}\}_{b=1}^{B}$.

% ------------------------------------------------------------
\paragraph{Graph Representation Learning.}
\label{subsec:representation} For each sample $(\mathbf X,\mathcal G)$, node signals
$\mathbf X\!\in\!\mathbb R^{N\times T}$ are fed to a $L$‑layer GNN
backbone~\cite{thomas2017gcn}.
After message passing, a permutation‑invariant readout
$\rho(\cdot)$ (mean, sum, max, or Transformer pooling
~\cite{buterez2022graph}) produces a graph embedding
$\mathbf e\!\in\!\mathbb R^{d_e}$.

% ------------------------------------------------------------
\paragraph{Training Objectives.}
\label{subsec:training} \emph{Label‑aligned loss:}
A linear classifier predicts class logits
$\hat{\mathbf y}_i = \text{softmax}(\mathbf W^\top\mathbf e_i+\mathbf b)$,
optimised by cross‑entropy
$\mathcal L_1 = -\sum_i y_i\log\hat y_i$. \emph{Subject identity contrast:} Samples from the same subject are positive,
others negative:
\begin{equation}
\begin{aligned}
  \mathcal L_2
  = -\frac1{|\mathcal P_{\mathrm{pos}}|}
        \sum_{(i,j)\in\mathcal P_{\mathrm{pos}}}
        \log \sigma\!\Bigl(\tfrac{\mathbf e_i^\top\mathbf e_j}{\tau_{cl}}\Bigr)
    -\frac1{|\mathcal P_{\mathrm{neg}}|}
        \sum_{(i,k)\in\mathcal P_{\mathrm{neg}}}
        \log \sigma\!\Bigl(-\tfrac{\mathbf e_i^\top\mathbf e_k}{\tau_{cl}}\Bigr),
\end{aligned}
  \label{eq:contrast}
\end{equation}
with temperature $\tau_{cl}$. \emph{Sparse prior:} We impose an $\ell_1$ surrogate on $\hat{\mathbf A}$ as $\mathcal{R}=\bigl\lVert\hat{\mathbf A}\bigr\rVert_1$.
The total objective is
$\mathcal L = \mathcal L_1 + \alpha\mathcal L_2 + \beta\mathcal R$,
optimised jointly over $(\theta,\hat{\mathbf A})$ using Adam.

\begin{theorem}[Existence of a Higher‑Order‑Exact \textbf{GCM}]
\label{thm:expressivity}
Let $\mathbf X(t)\in\mathbb R^{N}$ be a stochastic process and
$y\in\{1,\dots,C\}$ obey
\begin{equation}
  y=g_\star\!
        \bigl(\psi\bigl(X_{i_1}(t),\dots,X_{i_k}(t)\bigr)\bigr),
  \quad k\ge 3,
\end{equation}
where
(i) $\psi:\mathbb R^{k}\!\to\!\mathbb R$ is \emph{measurable};
(ii) $g_\star$ is a \emph{class‑separable} measurable map
(assuming $\exists$ disjoint measurable pre‑images for each class).
For any $\varepsilon>0$ there exist
\(\phi^\varepsilon,\theta^\varepsilon\),
temperature \(\tau^\varepsilon>0\),
and depth \(L=\max\{1,\mathrm{diam}(\mathbf A^\star)\}\)
such that the \textbf{\textup{GCM}} classifier
\(f_{\theta^\varepsilon,\phi^\varepsilon}\) satisfies
\[
\Pr\!\bigl[f_{\theta^\varepsilon,\phi^\varepsilon}(\mathbf X(t)) \ne y\bigr]
\,\le\,\varepsilon .
\]
\end{theorem}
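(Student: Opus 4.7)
The plan is to construct an explicit \emph{witness configuration} $(\mathbf{A}^\star,\theta^\varepsilon,\phi^\varepsilon,\tau^\varepsilon)$ that realizes the target classifier, and to show that each module of \textbf{GCM} can be driven to approximate this witness up to an error of at most $\varepsilon/3$. I would take $\mathbf{A}^\star$ to be a minimal connected graph on the relevant node set $\{i_1,\dots,i_k\}$ (for concreteness, a star rooted at $i_1$), so that $\mathrm{diam}(\mathbf{A}^\star)\le 2$ and each of the $k$ relevant nodes can exchange messages within $L$ hops. This object is not part of the data but serves as the ``ideal'' adjacency that \textbf{GCM} must learn to emit.

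First I would handle the \emph{structural} component. By setting the prototype logits $\hat{\mathbf{A}}^{\varepsilon}_{ij}$ equal to a large positive constant $+M$ on the edges of $\mathbf{A}^\star$ and $-M$ elsewhere, the Gumbel--Sigmoid relaxation at temperature $\tau^\varepsilon$ concentrates on the deterministic indicator of $\mathbf{A}^\star$ as $M/\tau^\varepsilon\to\infty$. With $M$ and $\tau^\varepsilon$ chosen appropriately, the relaxed batch $\tilde{\mathbf{A}}^{(b)}$ has its top-$k^{(b)}$ entries aligned with $\mathbf{A}^\star$ with probability at least $1-\varepsilon/3$, so \textbf{BBA} produces the desired binarized graph on every sample in the batch. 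This reduces the problem to showing that a deterministic GNN operating on $\mathbf{A}^\star$ can approximate the composition $g_\star\circ\psi$.

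Next I would invoke a universal approximation argument. Because $\mathbf{A}^\star$ connects $\{i_1,\dots,i_k\}$ within $L\ge\mathrm{diam}(\mathbf{A}^\star)$ hops, after $L$ layers of message passing at least one node holds an aggregate that depends injectively---via a Deep Sets--style encoding realizable by a GCN with a sufficiently wide hidden layer followed by the permutation-invariant readout $\rho$~\cite{buterez2022graph}---on the multiset $\{X_{i_1}(t),\dots,X_{i_k}(t)\}$. Truncating $\mathbf{X}$ to a compact set of probability mass at least $1-\varepsilon/3$ and applying Lusin's theorem to the measurable map $\psi$ yields a continuous surrogate $\tilde{\psi}$ agreeing with $\psi$ outside a set of measure at most $\varepsilon/3$; standard universal approximation then furnishes GNN weights $\theta^\varepsilon$ so that the embedding $\mathbf{e}$ uniformly approximates $\tilde\psi(X_{i_1},\dots,X_{i_k})$ on this compact set. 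The assumed class-separability of $g_\star$ (disjoint measurable pre-images) ensures the existence of a linear head whose softmax assigns the correct class label on this approximated embedding. A union bound over the three failure events---incorrect edge selection, input truncation, and function approximation---yields the advertised $\Pr[f\ne y]\le\varepsilon$.

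The hardest step will be reconciling the \emph{discreteness} imposed by \textbf{BBA} with the \emph{smooth} approximation guarantees of the GNN: one must verify that the specific top-$k^{(b)}$ binarization recovers exactly $\mathbf{A}^\star$ rather than competing spurious high-scoring edges, which requires a uniform margin bound between the logits of $\mathbf{A}^\star$-edges and all others across the batch. A secondary subtlety is that $\psi$ is only assumed measurable, so the Lusin compactification step and the universal-approximation depth/width must be balanced jointly against $\tau^\varepsilon$ in order to drive all three residuals to zero simultaneously rather than at incompatible rates.
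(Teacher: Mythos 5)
Your overall architecture matches the paper's proof almost exactly: saturate the prototype logits to $\pm M$ so the Gumbel--Sigmoid/Concrete sampler concentrates on an oracle graph $\mathbf A^\star$ connecting $S=\{i_1,\dots,i_k\}$ (the paper uses a $k$-clique with $L=1$ rather than your star with $L\le 2$, an immaterial difference), then argue that message passing on $\mathbf A^\star$ injectively encodes the relevant signals, apply truncation to a compact set plus universal approximation for $\psi$, attach a linear head using class-separability of $g_\star$, and union-bound three $\varepsilon/3$ failure events. Your Lusin-theorem step for handling merely measurable $\psi$ is in fact somewhat more careful than the paper's corresponding lemma, and your worry about a logit margin for the top-$k$ binarization is a legitimate but easily dispatched point given bounded-with-high-probability Gumbel noise.

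There is, however, one genuine gap in your aggregation step. You claim it suffices that some node's aggregate ``depends injectively \ldots on the multiset $\{X_{i_1}(t),\dots,X_{i_k}(t)\}$.'' But $\psi:\mathbb R^k\to\mathbb R$ is an arbitrary measurable function of the \emph{ordered} tuple, not a symmetric function: e.g.\ $\psi(x_1,x_2,x_3)=x_1-x_2$ is not determined by the unordered multiset $\{x_1,x_2,x_3\}$, so no readout of a multiset-injective aggregate can approximate it, no matter how wide the hidden layer. With a standard GCN (shared weights across nodes) and a permutation-invariant readout $\rho$, node identity is invisible unless it is injected into the features. The paper closes exactly this hole with its ID-tagged DeepSets lemma: each node carries $\mathbf z_v=[X_v(t),\mathbf e_v]$ with a unique, fixed ID embedding $\mathbf e_v$, so the sum $\sum_{v\in S}\phi(\mathbf z_v)$ is injective on the multiset of \emph{tagged} features, which does determine the ordered tuple $(X_{i_1},\dots,X_{i_k})$ and hence lets the readout approximate $\psi$. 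Your proposal needs either this ID-tagging device (or node-specific weights, or some other symmetry-breaking mechanism); without it, the universal-approximation step only covers permutation-symmetric $\psi$ and the theorem as stated is not proved.
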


This theory ensures the existence and convergence of \textbf{GCM}. Our formal proof of \textbf{Theorem}~\ref{thm:expressivity} in \textbf{Appendix~\ref{app:proof2}} shows that \textbf{GCM} can recover arbitrary high‑order rules. These justify both our multi-resolution design and the use of global optimization.

\section{Experiments}
To demonstrate the advantage of \textbf{GCM}, we utilize it with other baselines in each classification task. The FBN structure learned by \textbf{GCM} is used as the computing graph for a GNN classifier. If \textbf{GCM} outperforms other baselines using the same GNN layer, the difference can only be explained by the underlying network structure, hence proving that \textbf{GCM} learns a more accurate and comprehensive representation of FBN. For the task at the sample level, we also consider several none-GNN based methods for their prevalence in neuroscience. The performance advances of \textbf{GCM} over them validate the benefits of combining GNN with extracting high-order network information.

{
\begin{table}[htpb]
\centering
\caption{Statistics of the Data Sets}
\resizebox{0.6\textwidth}{!}{
\begin{tabular}{@{}lcccccc@{}}
\toprule
     & $\#$sample & $\#$subject & $\#$class & $\#$node& $\#$feature & Class-Type \\ \midrule
DynHCP$_{\text{Activity}}$  & 260,505 & 7,443 & 7  & 100 & 100 & sample  \\
DynHCP$_{\text{Age}}$  &  36,278 & 1,067 & 3  & 100 & 100 & subject  \\
DynHCP$_{\text{Gender}}$  &  36,720 & 1,080 & 2  & 100 & 100 & subject   \\\midrule
Cog State &  90,000 & 60    & 5  & 61  & 300 & sample   \\
SLIM &   3,246 & 541   & 2  & 236 & 30  & subject  \\\bottomrule
\end{tabular}
}
\label{datasetsta}
\end{table}
}
\subsection{Experiment Settings}\label{ESet}
\paragraph{Data} We use $5$ open-source real-world datasets, including DynHCP Activity/Age/Gender~\cite{said2023neurograph}, Cog State~\cite{wang2022test}, and SLIM~\cite{liu2017longitudinal}. These datasets cover both fMRI and EEG, the most commonly used brain imaging modalities, for testing broad applicability. The overall statistics of the data are summarized in Table~\ref{datasetsta}. Description and preprocessing of the data are detailed in \textbf{Appendix~\ref{app:ps}}.

\paragraph{Baselines} We choose traditional methods (\textbf{LR}, \textbf{SVM}, \textbf{Random Forest} (\textbf{RF}), \textbf{XGBoost}, \textbf{MLP}), which are commonly used in neuroscience domain, to establish baseline performance. GNN families (\textbf{GCN}~\cite{thomas2017gcn}, \textbf{SAGE}~\cite{hamilton2017inductive}, \textbf{GAT}~\cite{petar2018gat}, \textbf{GIN}~\cite{xu2018how}) are included to highlight the improvement of \textbf{GCM} against GNN backbones. To position our work within the current frontier of structured graph learning, we include representative SOTA methods from both the BGI line (\textbf{IBGNN}~\cite{cui2022interpretable}, \textbf{IBGNN+}~\cite{cui2022interpretable}, \textbf{IGS}~\cite{li2023interpretable}, \textbf{IC-GNN}~\cite{zheng2024ci}, \textbf{BrainIB}~\cite{zheng2024brainib}) and the GSL line (\textbf{VIB-GSL}~\cite{sun2022graph}, \textbf{HGP-SL}~\cite{zhang2019hierarchical}, \textbf{DGCL}~\cite{zong2024new}). To further benchmark our method against the SOTAs, we incorporate two key baselines. First, we include the Brain Network Transformer (\textbf{BNT})~\cite{kan2022brain}  due to its remarkable predictive performance. Second, we include \textbf{Hybrid}~\cite{qiu2024learning}, a conceptually related model that learns a fixed quantity of high-order interactions. (Details in \textbf{Appendix~\ref{app:ib}})

\paragraph{Tasks} Brain activity prediction tasks are commonly categorized into inter-subject and intra-subject classification~\cite{cai2023mbrain,behrouz2024unsupervised}. In inter-subject classification, all subjects and their samples are split into training, validation, and test sets, with predictions targeting the test set subjects’ labels. In intra-subject classification, each subject’s samples are split into training, validation, and test sets, with predictions focused on the test set labels. We align the baselines with the proposed four modeling resolution levels for fair comparison, as defined in \textbf{Preliminaries}, based on their design. For all levels, the task is to predict the label of each individual sample. For GNN-based methods, the learned FBNs used as computation graphs, where each sample serves as the feature of a node. \textbf{Appendix~\ref{app:ps}} Contains more detailed reproducibility settings such as hyper parameters and experiment environment.

\subsection{Experimental Results}
We report the most important results to support the superior performance of the proposed \textbf{GCM} and the interpretability of the learned FBN in the main body. We also provide more detailed analyses including \textbf{The Role of Initial Structure}(\textbf{Appendix~\ref{app:ISA}}), \textbf{Results Using Additional Metrics} (\textbf{Appendix~\ref{app:addm}}), \textbf{FBN Structure Visualization}(\textbf{Appendix~\ref{app:VFS}}) and \textbf{Sensitivity Analysis}(\textbf{Appendix~\ref{app:SA}}) for a more comprehensive assessment of \textbf{GCM}.
\paragraph{Multi-Resolution Structure Learning: Sample Level}

\begin{table}[!htpb]
\centering
\caption{Sample-Level Structure Learning Results (Accuracy$_{\pm\text{Std}}$)}
\resizebox{\textwidth}{!}{
\begin{tabular}{l|cccccccccccc}
\toprule
Method & \multicolumn{2}{c}{DynHCP$_{\text{Activity}}$} & \multicolumn{2}{c}{DynHCP$_{\text{Age}}$} & \multicolumn{2}{c}{DynHCP$_{\text{Gender}}$} & \multicolumn{2}{c}{CogState} & \multicolumn{2}{c}{SLIM} \\
\cmidrule(r){2-3} \cmidrule(r){4-5} \cmidrule(r){6-7} \cmidrule(r){8-9} \cmidrule(r){10-11}
& Intra & Inter & Intra & Inter & Intra & Inter & Intra & Inter & Intra & Inter \\
\midrule
LR & 0.939$_{\pm0.026}$ & 0.821$_{\pm0.091}$ & 0.809$_{\pm0.008}$ & 0.357$_{\pm0.061}$ & 0.879$_{\pm0.005}$ & 0.625$_{\pm0.092}$ & 0.276$_{\pm0.025}$ & 0.218$_{\pm0.066}$ & 0.521$_{\pm0.013}$ & 0.539$_{\pm0.070}$ \\
RF & 0.945$_{\pm0.002}$ & 0.825$_{\pm0.001}$ & 0.812$_{\pm0.001}$ & 0.409$_{\pm0.002}$ & 0.824$_{\pm0.001}$ & 0.629$_{\pm0.001}$ & 0.288$_{\pm0.001}$ & 0.264$_{\pm0.001}$ & 0.635$_{\pm0.003}$ & 0.626$_{\pm0.003}$ \\
XGBoost & 0.959$_{\pm0.001}$ & 0.832$_{\pm0.001}$ & 0.743$_{\pm0.002}$ & 0.388$_{\pm0.001}$ & 0.798$_{\pm0.001}$ & 0.611$_{\pm0.002}$ & 0.307$_{\pm0.003}$ & 0.282$_{\pm0.002}$ & 0.623$_{\pm0.002}$ & 0.610$_{\pm0.002}$ \\
SVM & 0.952$_{\pm0.035}$ & 0.842$_{\pm0.008}$ & 0.827$_{\pm0.017}$ & 0.394$_{\pm0.016}$ & 0.882$_{\pm0.011}$ & 0.631$_{\pm0.072}$ & 0.321$_{\pm0.018}$ & 0.270$_{\pm0.041}$ & 0.567$_{\pm0.025}$ & 0.554$_{\pm0.014}$ \\
MLP & 0.951$_{\pm0.014}$ & \underline{0.847}$_{\pm0.010}$ & 0.788$_{\pm0.019}$ & 0.372$_{\pm0.032}$ & 0.903$_{\pm0.009}$ & 0.643$_{\pm0.074}$ & \underline{0.376}$_{\pm0.009}$ & 0.273$_{\pm0.076}$ & 0.518$_{\pm0.022}$ & 0.549$_{\pm0.042}$ \\ \midrule
GCN & 0.774$_{\pm0.022}$ & 0.736$_{\pm0.068}$ & 0.423$_{\pm0.022}$ & \underline{0.455}$_{\pm0.053}$ & 0.650$_{\pm0.017}$ & 0.543$_{\pm0.045}$ & 0.290$_{\pm0.023}$ & 0.303$_{\pm0.014}$ & 0.552$_{\pm0.012}$ & 0.543$_{\pm0.072}$ \\
SAGE & 0.791$_{\pm0.005}$ & 0.794$_{\pm0.032}$ & 0.464$_{\pm0.019}$ & 0.411$_{\pm0.034}$ & 0.633$_{\pm0.025}$ & 0.594$_{\pm0.060}$ & 0.334$_{\pm0.012}$ & 0.301$_{\pm0.022}$ & \underline{0.657}$_{\pm0.013}$ & 0.596$_{\pm0.053}$ \\
GAT & 0.140$_{\pm0.000}$ & 0.138$_{\pm0.000}$ & 0.212$_{\pm0.000}$ & 0.202$_{\pm0.004}$ & 0.542$_{\pm0.003}$ & 0.532$_{\pm0.004}$ & 0.296$_{\pm0.010}$ & 0.305$_{\pm0.009}$ & 0.622$_{\pm0.029}$ & 0.602$_{\pm0.018}$ \\
GIN & 0.480$_{\pm0.062}$ & 0.495$_{\pm0.058}$ & 0.449$_{\pm0.013}$ & 0.401$_{\pm0.065}$ & 0.572$_{\pm0.012}$ & 0.565$_{\pm0.030}$ & 0.308$_{\pm0.017}$ & 0.280$_{\pm0.023}$ & 0.534$_{\pm0.023}$ & 0.465$_{\pm0.040}$ \\ \midrule
VIB-GSL & 0.925$_{\pm0.006}$ & 0.772$_{\pm0.017}$ & 0.600$_{\pm0.060}$ & 0.429$_{\pm0.031}$ & 0.781$_{\pm0.029}$ & 0.574$_{\pm0.034}$ & 0.313$_{\pm0.014}$ & 0.303$_{\pm0.010}$ & 0.529$_{\pm0.007}$ & 0.537$_{\pm0.013}$ \\
HGP-SL & 0.715$_{\pm0.010}$ & 0.647$_{\pm0.053}$ & 0.529$_{\pm0.012}$ & 0.404$_{\pm0.022}$ & 0.766$_{\pm0.006}$ & 0.612$_{\pm0.003}$ & 0.286$_{\pm0.005}$ & 0.270$_{\pm0.008}$ & 0.556$_{\pm0.013}$ & 0.552$_{\pm0.009}$ \\
DGCL & 0.831$_{\pm0.013}$ & 0.631$_{\pm0.017}$ & 0.758$_{\pm0.037}$ & 0.392$_{\pm0.033}$ & 0.860$_{\pm0.043}$ & 0.615$_{\pm0.017}$ & 0.316$_{\pm0.011}$ & 0.275$_{\pm0.007}$ & 0.552$_{\pm0.009}$ & 0.539$_{\pm0.017}$ \\ \midrule
IBGNN & 0.908$_{\pm0.004}$ & 0.586$_{\pm0.278}$ & 0.768$_{\pm0.025}$ & 0.397$_{\pm0.008}$ & 0.858$_{\pm0.027}$ & 0.591$_{\pm0.025}$ & 0.343$_{\pm0.008}$ & 0.300$_{\pm0.012}$ & 0.648$_{\pm0.031}$ & 0.617$_{\pm0.027}$ \\
IC-GNN & 0.878$_{\pm0.053}$ & 0.744$_{\pm0.092}$ & 0.678$_{\pm0.105}$ & 0.425$_{\pm0.024}$ & 0.851$_{\pm0.065}$ & 0.658$_{\pm0.067}$ & 0.318$_{\pm0.077}$ & 0.313$_{\pm0.053}$ & 0.638$_{\pm0.103}$ & 0.614$_{\pm0.072}$ \\
BrainIB & 0.926$_{\pm0.043}$ & 0.839$_{\pm0.018}$ & 0.818$_{\pm0.035}$ & 0.445$_{\pm0.019}$ & 0.873$_{\pm0.011}$ & 0.662$_{\pm0.020}$ & 0.332$_{\pm0.013}$ & \underline{0.325}$_{\pm0.021}$ & 0.644$_{\pm0.023}$ & \underline{0.631}$_{\pm0.012}$ \\ \midrule
BNT & \textbf{0.974}$_{\pm0.002}$ & 0.839$_{\pm0.002}$ & \textbf{0.945}$_{\pm0.004}$ & 0.386$_{\pm0.023}$ & \textbf{0.976}$_{\pm0.003}$ & \textbf{0.678}$_{\pm0.008}$ & \textbf{0.396}$_{\pm0.006}$ & 0.337$_{\pm0.015}$ & 0.553$_{\pm0.012}$ & 0.529$_{\pm0.007}$ \\
Hybrid & 0.801$_{\pm0.013}$ & 0.766$_{\pm0.023}$ & 0.525$_{\pm0.008}$ & 0.442$_{\pm0.015}$ & 0.788$_{\pm0.007}$ & 0.592$_{\pm0.009}$ & 0.293$_{\pm0.017}$ & 0.286$_{\pm0.021}$ & 0.537$_{\pm0.008}$ & 0.522$_{\pm0.018}$ \\
\textbf{GCM$_\text{sample}$} & \underline{0.963}$_{\pm0.001}$ & \textbf{0.852}$_{\pm0.021}$ & \underline{0.853}$_{\pm0.005}$ & \textbf{0.463}$_{\pm0.017}$ & \underline{0.937}$_{\pm0.002}$ & \underline{0.668}$_{\pm0.022}$ & 0.352$_{\pm0.014}$ & \textbf{0.356}$_{\pm0.021}$ & \textbf{0.680}$_{\pm0.014}$ & \textbf{0.658}$_{\pm0.035}$ \\
\bottomrule
\end{tabular}}
\label{tab:sample_level}
\end{table}

The sample level is the most common scenario in brain activity prediction, as it focuses on individual samples and their corresponding brain network structures. At the sample level, \textbf{GCM} consistently outperforms or matches traditional methods and SOTAs as shown in Table~\ref{tab:sample_level}. These results highlight that \textbf{GCM} excels in capturing the underlying brain activity structure, surpassing Pearson-based FBNs, which are limited by their reliance on linear correlations. On the one hand, the results prove that catching higher-order synchronization of MTS is beneficial to downstream prediction. On the other hand, this demonstrates the effectiveness of \textbf{GCM} in learning a more robust and flexible structure that better represents the data’s complexity. GSL and BGI SOTAs gain an overall superiority against GNN-based methods, providing the necessity of FBN structure learning. An interesting phenomenon is that traditional methods exhibit a strong competitiveness on several datasets. This explains why these methods are still popular in current neuroscience domain.

\paragraph{Multi-Resolution Structure Learning: Subject, Group, and Project Levels}

\begin{table}[!htpb]
\centering
\caption{Multi-Resolution Structure Learning Results (Accuracy$_{\pm\text{Std}}$)}
\resizebox{\textwidth}{!}{
\begin{tabular}{l|cccccccccccc}
\toprule
Method & \multicolumn{2}{c}{DynHCP$_{\text{Activity}}$} & \multicolumn{2}{c}{DynHCP$_{\text{Age}}$} & \multicolumn{2}{c}{DynHCP$_{\text{Gender}}$} & \multicolumn{2}{c}{CogState} & \multicolumn{2}{c}{SLIM} \\
\cmidrule(r){2-3} \cmidrule(r){4-5} \cmidrule(r){6-7} \cmidrule(r){8-9} \cmidrule(r){10-11}
& Intra & Inter & Intra & Inter & Intra & Inter & Intra & Inter & Intra & Inter \\
\midrule
IGS & 0.861$_{\pm0.005}$ & 0.856$_{\pm0.004}$ & 0.808$_{\pm0.022}$ & 0.460$_{\pm0.034}$ & 0.882$_{\pm0.138}$ & 0.608$_{\pm0.047}$ & 0.326$_{\pm0.005}$ & 0.323$_{\pm0.011}$ & 0.541$_{\pm0.013}$ & 0.652$_{\pm0.027}$ \\
\textbf{GCM$_\text{subject}$} & \textbf{0.963}$_{\pm0.002}$ & \textbf{0.857}$_{\pm0.019}$ & \textbf{0.889}$_{\pm0.005}$ & \textbf{0.473}$_{\pm0.031}$ & \textbf{0.939}$_{\pm0.002}$ & \textbf{0.670}$_{\pm0.019}$ & \textbf{0.413}$_{\pm0.008}$ & \textbf{0.364}$_{\pm0.007}$ & \textbf{0.681}$_{\pm0.005}$ & \textbf{0.674}$_{\pm0.010}$ \\
\midrule
IBGNN+ & 0.765$_{\pm0.301}$ & 0.453$_{\pm0.346}$ & 0.786$_{\pm0.033}$ & 0.380$_{\pm0.020}$ & 0.881$_{\pm0.005}$ & 0.596$_{\pm0.024}$ & 0.356$_{\pm0.012}$ & 0.315$_{\pm0.014}$ & \textbf{0.667}$_{\pm0.023}$ & \textbf{0.644}$_{\pm0.040}$ \\
\textbf{GCM$_\text{project}$} & \textbf{0.969}$_{\pm0.002}$ & \textbf{0.869}$_{\pm0.004}$ & \textbf{0.878}$_{\pm0.002}$ & \textbf{0.462}$_{\pm0.029}$ & \textbf{0.933}$_{\pm0.007}$ & \textbf{0.674}$_{\pm0.028}$ & \textbf{0.437}$_{\pm0.010}$ & \textbf{0.419}$_{\pm0.023}$ & 0.655$_{\pm0.011}$ & \textbf{0.644}$_{\pm0.034}$ \\
\midrule
\textbf{GCM$_\text{group}$} & 0.971$_{\pm0.001}$ & 0.855$_{\pm0.007}$ & 0.891$_{\pm0.007}$ & 0.489$_{\pm0.029}$ & 0.958$_{\pm0.002}$ & 0.686$_{\pm0.025}$ & 0.493$_{\pm0.008}$ & 0.454$_{\pm0.013}$ & 0.692$_{\pm0.046}$ & 0.694$_{\pm0.034}$ \\
\bottomrule
\end{tabular}}
\label{tab:multi_resolution}
\end{table}

Comparing to sample level analysis, methods in these three levels are less discussed in previous FBN structure learning works. Thus only two competitors respectively aiming for subject-level and project-level are included. As shown in Table~\ref{tab:multi_resolution}, \textbf{GCM} outperforms these two methods on most conditions. This not only shows that \textbf{GCM} adapts seamlessly to higher-level modeling resolutions, but also proves that encoding and utilizing global constraints is crucial for tasks in these level of resolutions. It should be noted that we isolate the results of group-level \textbf{GCM}. On the one hand, there is no previous discussion particularly aiming at this scenario. On the other hand, FBN learned in this scenario may suffer the argument of label leakage under the context of machine learning settings. However, the target in this scenario is to learn possible FBN structures that can better reflect the structural distinction between categories. Thus, the accuracy is not only a judgement for method, but an index to investigate the confidence that the learned FBNs reflect the non-linear relationships between the input data and labels. From this point of view, although \textbf{GCM}$_\text{group}$ literally outperforms all the baselines, there's a large room for improvement to learn reliable FBNs in this scenario.

\paragraph{Computational Analysis}
\begin{figure}[!ht]
  \centering
  \includegraphics[width=0.85\textwidth]{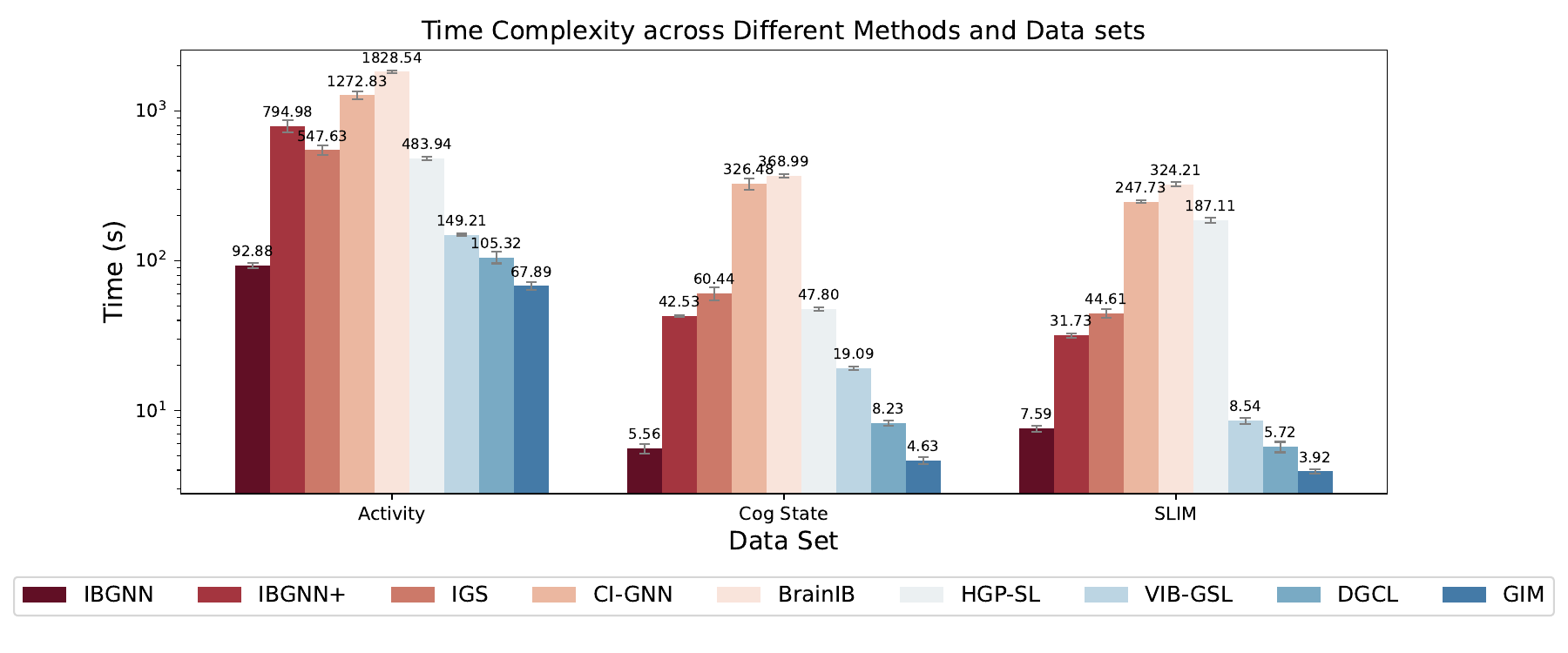} % 替换为你的图片文件名，路径包含figs文件夹
  \caption{Time consumption of each method.}
  \label{time}
\end{figure}

\textbf{GCM}’s computational efficiency is primarily determined by the GNN component (here, \textbf{SAGE}), incurring $\mathcal{O}(N^2)$ complexity. Figure~\ref{time} compares computation times of BGI and GSL methods, including \textbf{GCM}, across datasets. All \textbf{GCM} variants share identical costs, so they appear as one category. We omit DynHCP$_\text{Age}$ and DynHCP$_\text{Gender}$ since they match DynHCP$_\text{Activity}$ in graph dimensions and SLIM in class counts. The superior efficiency of \textbf{GCM} is clearly reflected in the results, owing to its streamlined design.

\subsection{Ablation Studies}
\begin{table}[h!]
\centering
\caption{Ablation Study of Structural Modeling Choices (ACC)}
\label{density}
\resizebox{0.95\textwidth}{!}{%
\begin{tabular}{@{}llccccccc|cc|cc|cc@{}}
\toprule
& & \multicolumn{7}{c}{\textbf{Sample-level Ablations}} & \multicolumn{2}{c}{\textbf{Subject-level}} & \multicolumn{2}{c}{\textbf{Group-level}} & \multicolumn{2}{c}{\textbf{Project-level}} \\
\cmidrule(lr){3-9} \cmidrule(lr){10-11} \cmidrule(lr){12-13} \cmidrule(lr){14-15}
Dataset & Task & Dense & 0.05\% & 0.1\% & 0.5\% & 1\% & 5\% & \textbf{GCM} & Dense & \textbf{GCM} & Dense & \textbf{GCM} & Dense & \textbf{GCM} \\
\midrule
\multirow{2}{*}{DynHCP$_{\text{Activity}}$} & Inter & 0.843 & 0.792 & 0.808 & 0.827 & 0.835 & 0.837 & \textbf{0.852} & 0.849 & \textbf{0.857} & \textbf{0.886} & 0.855 & 0.857 & \textbf{0.869} \\
                                & Intra & 0.950 & 0.895 & 0.927 & 0.943 & 0.949 & 0.951 & \textbf{0.963} & 0.962 & \textbf{0.963} & 0.971 & 0.971          & 0.959 & \textbf{0.969} \\
\midrule
\multirow{2}{*}{DynHCP$_{\text{Age}}$}      & Inter & 0.408 & 0.381 & 0.402 & 0.418 & 0.420 & 0.428 & \textbf{0.463} & 0.425 & \textbf{0.473} & 0.432 & \textbf{0.489} & 0.418 & \textbf{0.462} \\
                                & Intra & 0.790 & 0.828 & 0.838 & 0.837 & 0.842 & 0.845 & \textbf{0.853} & 0.848 & \textbf{0.889} & 0.891 & \textbf{0.896} & 0.794 & \textbf{0.878} \\
\midrule
\multirow{2}{*}{DynHCP$_{\text{Gender}}$}   & Inter & 0.657 & 0.605 & 0.628 & 0.631 & 0.633 & 0.653 & \textbf{0.668} & 0.662 & \textbf{0.670} & 0.673 & \textbf{0.686} & 0.637 & \textbf{0.674} \\
                                & Intra & 0.889 & 0.883 & 0.891 & 0.912 & 0.925 & 0.928 & \textbf{0.937} & 0.921 & \textbf{0.939} & \textbf{0.961} & 0.958 & 0.894 & \textbf{0.933} \\
\midrule
\multirow{2}{*}{Cog State}      & Inter & \textbf{0.359} & 0.311 & 0.313 & 0.321 & 0.325 & 0.327 & 0.356          & 0.357 & \textbf{0.364} & 0.345 & \textbf{0.493} & 0.350 & \textbf{0.419} \\
                                & Intra & \textbf{0.356} & 0.322 & 0.333 & 0.343 & 0.348 & 0.348 & 0.352          & 0.392 & \textbf{0.413} & 0.482 & \textbf{0.493} & 0.399 & \textbf{0.437} \\
\midrule
\multirow{2}{*}{SLIM}           & Inter & 0.656 & 0.604 & 0.622 & 0.631 & 0.632 & 0.639 & \textbf{0.658} & 0.670 & \textbf{0.674} & 0.683 & \textbf{0.694} & \textbf{0.657} & 0.644 \\
                                & Intra & 0.661 & 0.618 & 0.632 & 0.637 & 0.645 & 0.649 & \textbf{0.680} & 0.665 & \textbf{0.681} & 0.677 & \textbf{0.692} & \textbf{0.670} & 0.655 \\
\bottomrule
\end{tabular}%
}
\end{table}

\paragraph{Impact of Batch Binarization Algorithm (BBA)}
Table~\ref{density} presents an ablation study of our \textbf{BBA} to validate its key design choices. First, the binarized \textbf{GCM} model consistently outperforms a non-binarized "Dense" variant across most tasks and resolutions. This confirms that binarization acts as a crucial regularizer, forcing the model to learn a sparse structural scaffold and preventing overfitting. Second, \textbf{GCM} with its adaptive \textbf{BBA} also surpasses all fixed-threshold methods. While the performance of fixed thresholds plateaus as density increases, \textbf{BBA} avoids this issue by adaptively tuning sparsity for each FBN; nonetheless, a $5\%$ fixed density offers a reasonable trade-off for simpler applications. These advantages are consistently observed across all resolutions (sample, subject, group, and project), validating the \textbf{BBA} module as a critical and robust component for learning high-performing FBN structures.

\paragraph{Impact of Subject Consistency Contrast}
The impact of subject identity is reflected by training strategies on \textbf{GCM}. Results are shown in Figure~\ref{contrast}. Specifically, \textbf{GCMi} represents \textbf{GCM} without individual consistency loss $\mathcal{L}_2$, and \textbf{GCMd} refers to \textbf{GCM} without sparsity regularization $\mathcal{R}$. Removing $\mathcal{L}_2$ and $\mathcal{R}$ results in a moderate performance drop under most conditions. However, for some settings, such as in group level and project level prediction in inter-subject prediction, the degradation becomes unbearable. This emphasizes the necessity of maintaining subject consistency and sparsity in the FBN structure.

\subsection{Case Study: Sex Difference}

\begin{figure}[!htpb]
\centering
\begin{minipage}{0.53\textwidth}
  \centering
  \includegraphics[width=\textwidth]{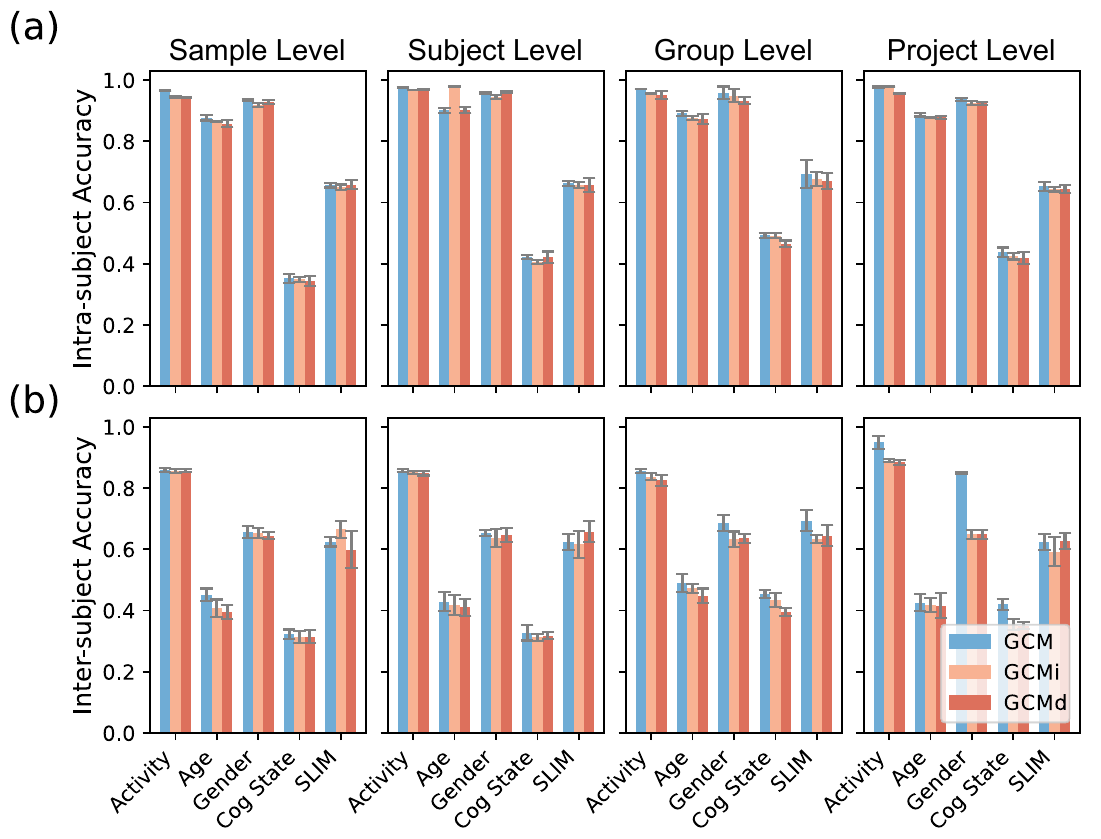}
  \caption{Accuracy of \textbf{GCM} using different training strategies.}
  \label{contrast}
\end{minipage}\hfill
\begin{minipage}{0.43\textwidth}
  \centering
  \includegraphics[width=\textwidth]{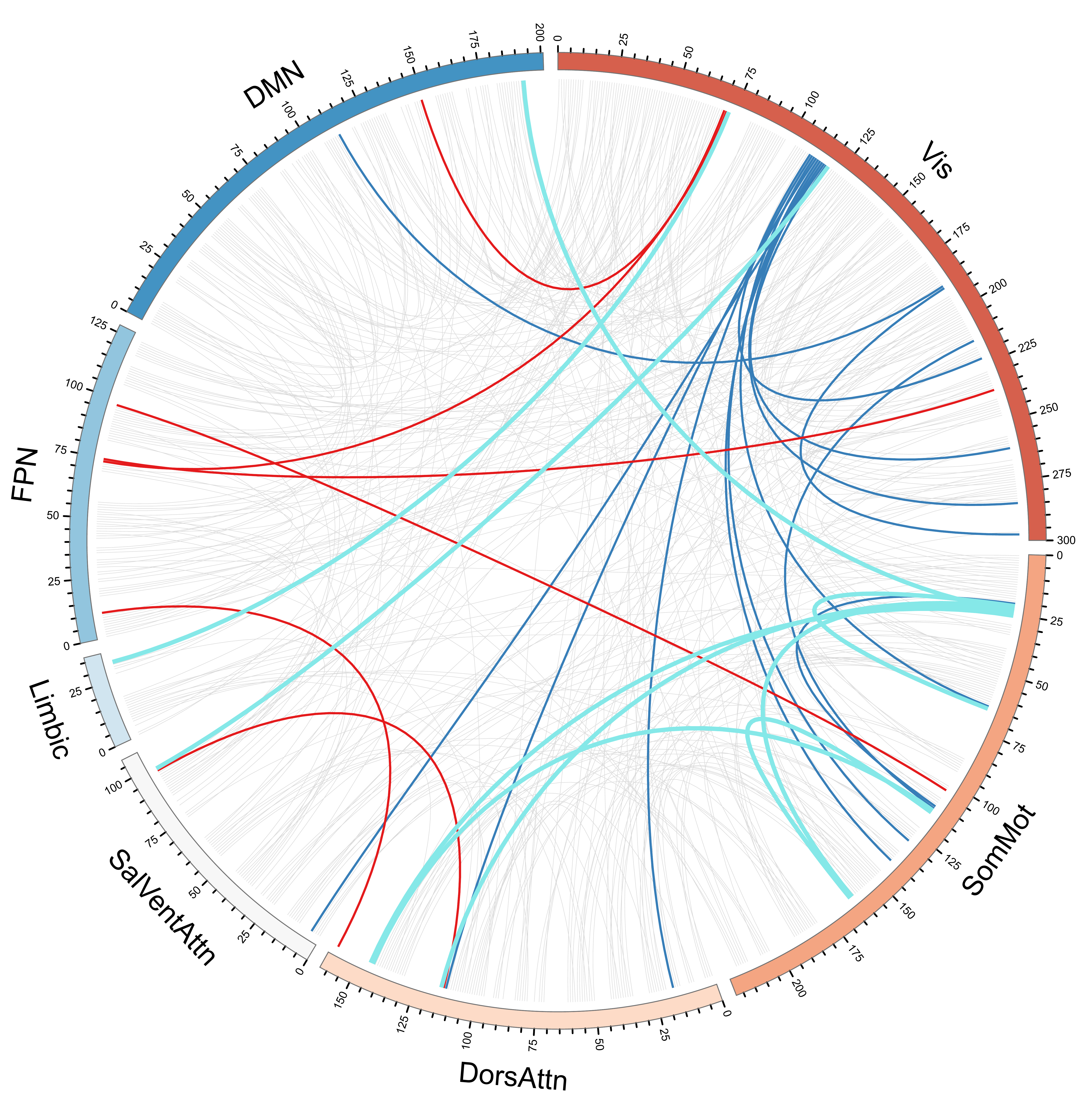}
  \caption{FBN of DynHCP$_\text{Gender}$ combining four modeling resolutions, with brain regions as chunks and edges as chords.}
  \label{VisComp}
\end{minipage}
\end{figure}

To illustrate the necessity and interpretability of our proposed four-level brain network division, we visualize FBNs learned by \textbf{GCM} on DynHCP$_\text{Gender}$ in the intra-subject setting, where model confidence (accuracy) is highest. For sample and subject levels, we average the networks of male and female subjects to obtain group-wise representations. Following the original release~\cite{said2023neurograph}, we retain the top 10\% of edges by weight and visualize combined networks at the sample, subject, and group levels. ROIs are grouped into seven brain regions based on the Schaefer atlas~\cite{schaefer2018local}.

As shown in Figure~\ref{VisComp}, the common edges between male and female specific to each resolution are depicted in gray. The edges particularly shown in female and male across resolutions are depicted in red and blue, respectively. Common edges between female and male across resolutions are highlighted in cyan. The results align with previous studies in the following three aspects: (1) Higher-order synergies exhibit non-random organization, integrating multiple brain regions into coherent systems that appear independent under conventional correlation-based analyses~\cite{varley2023partial}. (2) Under each modeling resolution, most of the edges are shared among female and male~\cite{serio2024sex}. (3) Male FBN shows higher edge weights in edges linking to vision networks~\cite{chen2024sex} while female FBN shows a more complex pattern~\cite{murray2021sex}. The limited amount of common edges sharing across different levels further shows the uninterchangeability between modeling resolutions as illustrated in \textbf{Theorem~\ref{thm:inequiv}}, which is the first time to be clarified to the best of our knowledge. Comparisons between FBNs learned by \textbf{GCM} and baselines are provided in \textbf{Appendix~\ref{app:VFS}}.

\section{Convergence Analysis}\label{app:ConvA}
To demonstrate the convergence of \textbf{GCM} on intra- and inter-subject classification tasks, as proved in \textbf{Theorem}~\ref{thm:expressivity}, we present the training loss curves for cross-entropy loss ($\mathcal{L}_1$) and contrastive loss ($\mathcal{L}_2$) in Figure~\ref{loss}. Both losses eventually converge on each dataset. It is interesting that $\mathcal{L}_1$ shows temporary vibrations, aligning with the structural reconfiguration and path reselection process observed in FBN evolution~\cite{cochrane2025control}. $\mathcal{L}_2$ exhibits a slower, linear decline, reflecting a cold start in subject-level learning. These results highlight opportunities in future to enhance the utilization of global constraint.

\begin{figure}[!ht]
  \centering
  \includegraphics[width=0.7\textwidth]{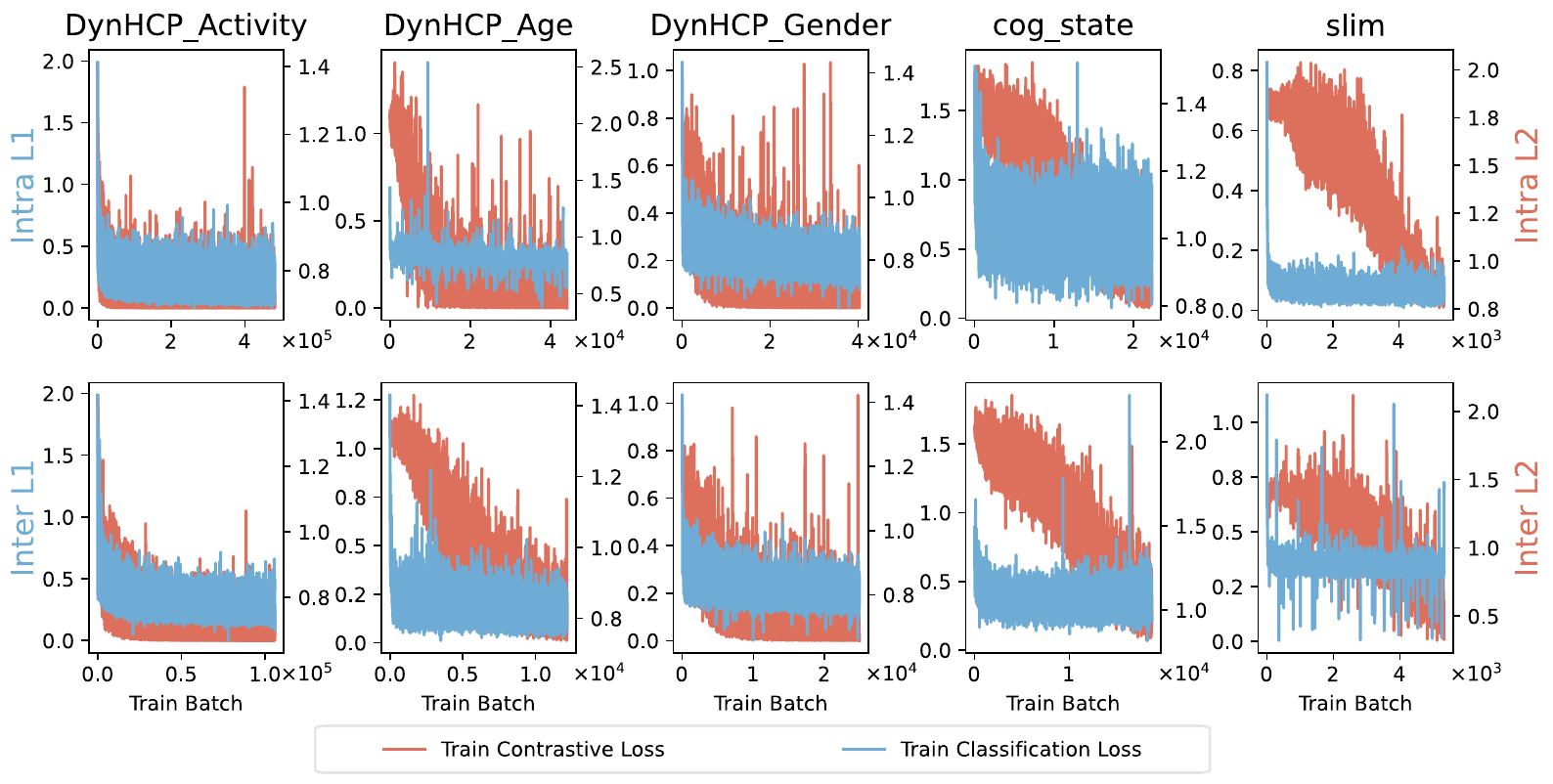} % 替换为你的图片文件名，路径包含figs文件夹
  \captionsetup{skip=2pt}
  \caption{Training loss of \textbf{GCM} for each dataset reaches a steady state after training on several batches.}
  \label{loss}
\end{figure}

\section{Discussion and Conclusion}\label{DisCon}
Our results support the theoretical expectation that global constraints enhance structural guidance, surpassing pairwise-based methods in aligning information flow across raw data, macro attributes, and the learned FBNs.  The learned high-order patterns align with previous conclusions in hyper graph modeling, proves the interpretability of the learned FBNs. Notably, we observe that functional brain networks learned at different resolutions exhibit distinct structural patterns, validating that these modeling resolutions are not interchangeable. While the proposed \textbf{GCM} framework offers strong empirical and theoretical benefits, it also has limitations and potential societal impacts as summarized in \textbf{Appendix~\ref{app:Limit}} and \textbf{Appendix~\ref{Societal}}, respectively.

Instead of treating prediction accuracy as a unified metric for model performance~\cite{cui2022interpretable}, we reinterpret accuracy as a confidence index that reflects the reliability of the learned FBNs. This formulation offers a principled way to compare models across different analytical objectives and renders overfitted FBNs at the group level as deliberate and meaningful modeling outcomes rather than incorrect label leakage. This further extends traditional statistical comparisons of individual- and population-level connectivity into a unified structural learning framework~\cite{alexander2012discovery,davison2016individual}.

Moreover, our study offers a theoretically grounded implementation of the proposed global updating modeling, which provides a reliable framework for FBN structure learning. By analyzing the limitations of existing approaches, we provide a theoretical complement to MTS-based network modeling field~\cite{sameshima2014methods,ye2022learning}. Moreover, by exploiting how evaluation metrics are utilized, and by rethinking overfitting phenomena in group-level modeling as a reliable modeling results, our work introduces a novel lens for applying machine learning in neuroscience and cognitive science studies. Future work includes addressing the limiations above, or extending \textbf{GCM} to multi-modal and multi-view scenarios~\cite{fang2023comprehensive}. Further exploration on this topic not only facilitates brain science but also sheds lights on interdisciplinary research.

\section*{Acknowledgements}
This work is supported by Natural Science Foundation of China (No. 62402398, No. 72374173), the Fundamental Research Funds for the Central Universities (No. SWU-XDJH202303, No. SWU-KR24025), University Innovation Research Group of Chongqing (No. CXQT21005), China Scholarship Council (CSC) program (No. 202306990091, No. 202406990056) and Chongqing Graduate Research Innovation Project (No. CYB22129, No. CYB240088). The experiments are supported by the High Performance Computing clusters at Southwest University.

\bibliographystyle{unsrt}  % 推荐使用 plainnat 以兼容 natbib 的 author-year 样式
\bibliography{ref}     % 不带 .bib 后缀
%%%%%%%%%%%%%%%%%%%%%%%%%%%%%%%%%%%%%%%%%%%%%%%%%%%%%%%%%%%%
\section*{NeurIPS Paper Checklist}

%%% END INSTRUCTIONS %%%

\begin{enumerate}

\item {\bf Claims}
    \item[] Question: Do the main claims made in the abstract and introduction accurately reflect the paper's contributions and scope?
    \item[] Answer: \answerYes{} % Replace by \answerYes{}, \answerNo{}, or \answerNA{}.
    \item[] Justification: Our abstract and introduction accurately reflect the paper's contributions and scope, namely our endeavor to address the limitation of pairwise-based network modeling in catching global synchronization for functional brain network learning.
    \item[] Guidelines:
    \begin{itemize}
        \item The answer NA means that the abstract and introduction do not include the claims made in the paper.
        \item The abstract and/or introduction should clearly state the claims made, including the contributions made in the paper and important assumptions and limitations. A No or NA answer to this question will not be perceived well by the reviewers.
        \item The claims made should match theoretical and experimental results, and reflect how much the results can be expected to generalize to other settings.
        \item It is fine to include aspirational goals as motivation as long as it is clear that these goals are not attained by the paper.
    \end{itemize}

\item {\bf Limitations}
    \item[] Question: Does the paper discuss the limitations of the work performed by the authors?
    \item[] Answer: \answerYes{} % Replace by \answerYes{}, \answerNo{}, or \answerNA{}.
    \item[] Justification: We discuss the limitation of \textbf{GCM} in \textbf{Appendix~\ref{app:Limit}}.
    \item[] Guidelines:
    \begin{itemize}
        \item The answer NA means that the paper has no limitation while the answer No means that the paper has limitations, but those are not discussed in the paper.
        \item The authors are encouraged to create a separate "Limitations" section in their paper.
        \item The paper should point out any strong assumptions and how robust the results are to violations of these assumptions (e.g., independence assumptions, noiseless settings, model well-specification, asymptotic approximations only holding locally). The authors should reflect on how these assumptions might be violated in practice and what the implications would be.
        \item The authors should reflect on the scope of the claims made, e.g., if the approach was only tested on a few datasets or with a few runs. In general, empirical results often depend on implicit assumptions, which should be articulated.
        \item The authors should reflect on the factors that influence the performance of the approach. For example, a facial recognition algorithm may perform poorly when image resolution is low or images are taken in low lighting. Or a speech-to-text system might not be used reliably to provide closed captions for online lectures because it fails to handle technical jargon.
        \item The authors should discuss the computational efficiency of the proposed algorithms and how they scale with dataset size.
        \item If applicable, the authors should discuss possible limitations of their approach to address problems of privacy and fairness.
        \item While the authors might fear that complete honesty about limitations might be used by reviewers as grounds for rejection, a worse outcome might be that reviewers discover limitations that aren't acknowledged in the paper. The authors should use their best judgment and recognize that individual actions in favor of transparency play an important role in developing norms that preserve the integrity of the community. Reviewers will be specifically instructed to not penalize honesty concerning limitations.
    \end{itemize}

\item {\bf Theory assumptions and proofs}
    \item[] Question: For each theoretical result, does the paper provide the full set of assumptions and a complete (and correct) proof?
    \item[] Answer: \answerYes{} % Replace by \answerYes{}, \answerNo{}, or \answerNA{}.
    \item[] Justification: We prove \textbf{Theorem~\ref{thm:limit}} in \textbf{Section~\ref{sec:impossibility}}. More information is provided in \textbf{Appendix~\ref{app:non_equiv}}. The proof of \textbf{Theorem~\ref{thm:expressivity}} is provided in \textbf{Appendix~\ref{app:proof2}}.
    \item[] Guidelines:
    \begin{itemize}
        \item The answer NA means that the paper does not include theoretical results.
        \item All the theorems, formulas, and proofs in the paper should be numbered and cross-referenced.
        \item All assumptions should be clearly stated or referenced in the statement of any theorems.
        \item The proofs can either appear in the main paper or the supplemental material, but if they appear in the supplemental material, the authors are encouraged to provide a short proof sketch to provide intuition.
        \item Inversely, any informal proof provided in the core of the paper should be complemented by formal proofs provided in appendix or supplemental material.
        \item Theorems and Lemmas that the proof relies upon should be properly referenced.
    \end{itemize}

    \item {\bf Experimental result reproducibility}
    \item[] Question: Does the paper fully disclose all the information needed to reproduce the main experimental results of the paper to the extent that it affects the main claims and/or conclusions of the paper (regardless of whether the code and data are provided or not)?
    \item[] Answer: \answerYes{} % Replace by \answerYes{}, \answerNo{}, or \answerNA{}.
    \item[] Justification: Comprehensive details of all information necessary for reproducing the experimental results, including the experimental datasets, baselines, evaluation metrics, and relevant experimental settings, are provided in \textbf{Appendix~\ref{app:ps}}, ensuring the reproducibility of the research.
    \item[] Guidelines:
    \begin{itemize}
        \item The answer NA means that the paper does not include experiments.
        \item If the paper includes experiments, a No answer to this question will not be perceived well by the reviewers: Making the paper reproducible is important, regardless of whether the code and data are provided or not.
        \item If the contribution is a dataset and/or model, the authors should describe the steps taken to make their results reproducible or verifiable.
        \item Depending on the contribution, reproducibility can be accomplished in various ways. For example, if the contribution is a novel architecture, describing the architecture fully might suffice, or if the contribution is a specific model and empirical evaluation, it may be necessary to either make it possible for others to replicate the model with the same dataset, or provide access to the model. In general. releasing code and data is often one good way to accomplish this, but reproducibility can also be provided via detailed instructions for how to replicate the results, access to a hosted model (e.g., in the case of a large language model), releasing of a model checkpoint, or other means that are appropriate to the research performed.
        \item While NeurIPS does not require releasing code, the conference does require all submissions to provide some reasonable avenue for reproducibility, which may depend on the nature of the contribution. For example
        \begin{enumerate}
            \item If the contribution is primarily a new algorithm, the paper should make it clear how to reproduce that algorithm.
            \item If the contribution is primarily a new model architecture, the paper should describe the architecture clearly and fully.
            \item If the contribution is a new model (e.g., a large language model), then there should either be a way to access this model for reproducing the results or a way to reproduce the model (e.g., with an open-source dataset or instructions for how to construct the dataset).
            \item We recognize that reproducibility may be tricky in some cases, in which case authors are welcome to describe the particular way they provide for reproducibility. In the case of closed-source models, it may be that access to the model is limited in some way (e.g., to registered users), but it should be possible for other researchers to have some path to reproducing or verifying the results.
        \end{enumerate}
    \end{itemize}

\item {\bf Open access to data and code}
    \item[] Question: Does the paper provide open access to the data and code, with sufficient instructions to faithfully reproduce the main experimental results, as described in supplemental material?
    \item[] Answer: \answerYes{} % Replace by \answerYes{}, \answerNo{}, or \answerNA{}.
    \item[] Justification: We attach the code and datasets to the \textbf{Supplement Materials} to ensure the reproducibility of the experimental results. The opensource link will be provided after the acceptance of the paper.
    \item[] Guidelines:
    \begin{itemize}
        \item The answer NA means that paper does not include experiments requiring code.
        \item Please see the NeurIPS code and data submission guidelines (\url{https://nips.cc/public/guides/CodeSubmissionPolicy}) for more details.
        \item While we encourage the release of code and data, we understand that this might not be possible, so “No” is an acceptable answer. Papers cannot be rejected simply for not including code, unless this is central to the contribution (e.g., for a new open-source benchmark).
        \item The instructions should contain the exact command and environment needed to run to reproduce the results. See the NeurIPS code and data submission guidelines (\url{https://nips.cc/public/guides/CodeSubmissionPolicy}) for more details.
        \item The authors should provide instructions on data access and preparation, including how to access the raw data, preprocessed data, intermediate data, and generated data, etc.
        \item The authors should provide scripts to reproduce all experimental results for the new proposed method and baselines. If only a subset of experiments are reproducible, they should state which ones are omitted from the script and why.
        \item At submission time, to preserve anonymity, the authors should release anonymized versions (if applicable).
        \item Providing as much information as possible in supplemental material (appended to the paper) is recommended, but including URLs to data and code is permitted.
    \end{itemize}

\item {\bf Experimental setting/details}
    \item[] Question: Does the paper specify all the training and test details (e.g., data splits, hyperparameters, how they were chosen, type of optimizer, etc.) necessary to understand the results?
    \item[] Answer: \answerYes{} % Replace by \answerYes{}, \answerNo{}, or \answerNA{}.
    \item[] Justification: We elucidate the training and testing details required for understanding the results in \textbf{Section~\ref{ESet}}.
    \item[] Guidelines:
    \begin{itemize}
        \item The answer NA means that the paper does not include experiments.
        \item The experimental setting should be presented in the core of the paper to a level of detail that is necessary to appreciate the results and make sense of them.
        \item The full details can be provided either with the code, in appendix, or as supplemental material.
    \end{itemize}

\item {\bf Experiment statistical significance}
    \item[] Question: Does the paper report error bars suitably and correctly defined or other appropriate information about the statistical significance of the experiments?
    \item[] Answer: \answerYes{} % Replace by \answerYes{}, \answerNo{}, or \answerNA{}.
    \item[] Justification: All results are the average of 5 random runs on test sets with the standard deviation. See settings in \textbf{Section~\ref{ESet}}.
    \item[] Guidelines:
    \begin{itemize}
        \item The answer NA means that the paper does not include experiments.
        \item The authors should answer "Yes" if the results are accompanied by error bars, confidence intervals, or statistical significance tests, at least for the experiments that support the main claims of the paper.
        \item The factors of variability that the error bars are capturing should be clearly stated (for example, train/test split, initialization, random drawing of some parameter, or overall run with given experimental conditions).
        \item The method for calculating the error bars should be explained (closed form formula, call to a library function, bootstrap, etc.)
        \item The assumptions made should be given (e.g., Normally distributed errors).
        \item It should be clear whether the error bar is the standard deviation or the standard error of the mean.
        \item It is OK to report 1-sigma error bars, but one should state it. The authors should preferably report a 2-sigma error bar than state that they have a 96\% CI, if the hypothesis of Normality of errors is not verified.
        \item For asymmetric distributions, the authors should be careful not to show in tables or figures symmetric error bars that would yield results that are out of range (e.g. negative error rates).
        \item If error bars are reported in tables or plots, The authors should explain in the text how they were calculated and reference the corresponding figures or tables in the text.
    \end{itemize}

\item {\bf Experiments compute resources}
    \item[] Question: For each experiment, does the paper provide sufficient information on the computer resources (type of compute workers, memory, time of execution) needed to reproduce the experiments?
    \item[] Answer: \answerYes{} % Replace by \answerYes{}, \answerNo{}, or \answerNA{}.
    \item[] Justification: The computer resources can be referred to in the reproducibility settings outlined \textbf{Appendix~\ref{app:ps}}.
    \item[] Guidelines:
    \begin{itemize}
        \item The answer NA means that the paper does not include experiments.
        \item The paper should indicate the type of compute workers CPU or GPU, internal cluster, or cloud provider, including relevant memory and storage.
        \item The paper should provide the amount of compute required for each of the individual experimental runs as well as estimate the total compute.
        \item The paper should disclose whether the full research project required more compute than the experiments reported in the paper (e.g., preliminary or failed experiments that didn't make it into the paper).
    \end{itemize}

\item {\bf Code of ethics}
    \item[] Question: Does the research conducted in the paper conform, in every respect, with the NeurIPS Code of Ethics \url{https://neurips.cc/public/EthicsGuidelines}?
    \item[] Answer: \answerYes{} % Replace by \answerYes{}, \answerNo{}, or \answerNA{}.
    \item[] Justification: Our research is conducted in compliance with the NeurIPS Code of Ethics. Our experiments do not involve human subjects and potential harmful consequences for society.
    \item[] Guidelines:
    \begin{itemize}
        \item The answer NA means that the authors have not reviewed the NeurIPS Code of Ethics.
        \item If the authors answer No, they should explain the special circumstances that require a deviation from the Code of Ethics.
        \item The authors should make sure to preserve anonymity (e.g., if there is a special consideration due to laws or regulations in their jurisdiction).
    \end{itemize}

\item {\bf Broader impacts}
    \item[] Question: Does the paper discuss both potential positive societal impacts and negative societal impacts of the work performed?
    \item[] Answer: \answerYes{} % Replace by \answerYes{}, \answerNo{}, or \answerNA{}.
    \item[] Justification: Given that our study involves neuroscience analysis, we have outlined its potential negative societal impacts in practical applications in \textbf{Appendix~\ref{Societal}}.
    \item[] Guidelines:
    \begin{itemize}
        \item The answer NA means that there is no societal impact of the work performed.
        \item If the authors answer NA or No, they should explain why their work has no societal impact or why the paper does not address societal impact.
        \item Examples of negative societal impacts include potential malicious or unintended uses (e.g., disinformation, generating fake profiles, surveillance), fairness considerations (e.g., deployment of technologies that could make decisions that unfairly impact specific groups), privacy considerations, and security considerations.
        \item The conference expects that many papers will be foundational research and not tied to particular applications, let alone deployments. However, if there is a direct path to any negative applications, the authors should point it out. For example, it is legitimate to point out that an improvement in the quality of generative models could be used to generate deepfakes for disinformation. On the other hand, it is not needed to point out that a generic algorithm for optimizing neural networks could enable people to train models that generate Deepfakes faster.
        \item The authors should consider possible harms that could arise when the technology is being used as intended and functioning correctly, harms that could arise when the technology is being used as intended but gives incorrect results, and harms following from (intentional or unintentional) misuse of the technology.
        \item If there are negative societal impacts, the authors could also discuss possible mitigation strategies (e.g., gated release of models, providing defenses in addition to attacks, mechanisms for monitoring misuse, mechanisms to monitor how a system learns from feedback over time, improving the efficiency and accessibility of ML).
    \end{itemize}

\item {\bf Safeguards}
    \item[] Question: Does the paper describe safeguards that have been put in place for responsible release of data or models that have a high risk for misuse (e.g., pretrained language models, image generators, or scraped datasets)?
    \item[] Answer: \answerNA{} % Replace by \answerYes{}, \answerNo{}, or \answerNA{}.
    \item[] Justification: The paper poses no such risks.
    \item[] Guidelines:
    \begin{itemize}
        \item The answer NA means that the paper poses no such risks.
        \item Released models that have a high risk for misuse or dual-use should be released with necessary safeguards to allow for controlled use of the model, for example by requiring that users adhere to usage guidelines or restrictions to access the model or implementing safety filters.
        \item Datasets that have been scraped from the Internet could pose safety risks. The authors should describe how they avoided releasing unsafe images.
        \item We recognize that providing effective safeguards is challenging, and many papers do not require this, but we encourage authors to take this into account and make a best faith effort.
    \end{itemize}

\item {\bf Licenses for existing assets}
    \item[] Question: Are the creators or original owners of assets (e.g., code, data, models), used in the paper, properly credited and are the license and terms of use explicitly mentioned and properly respected?
    \item[] Answer: \answerYes{} % Replace by \answerYes{}, \answerNo{}, or \answerNA{}.
    \item[] Justification: The code uesd in the paper, we all explicitly cite original papers.
    \item[] Guidelines:
    \begin{itemize}
        \item The answer NA means that the paper does not use existing assets.
        \item The authors should cite the original paper that produced the code package or dataset.
        \item The authors should state which version of the asset is used and, if possible, include a URL.
        \item The name of the license (e.g., CC-BY 4.0) should be included for each asset.
        \item For scraped data from a particular source (e.g., website), the copyright and terms of service of that source should be provided.
        \item If assets are released, the license, copyright information, and terms of use in the package should be provided. For popular datasets, \url{paperswithcode.com/datasets} has curated licenses for some datasets. Their licensing guide can help determine the license of a dataset.
        \item For existing datasets that are re-packaged, both the original license and the license of the derived asset (if it has changed) should be provided.
        \item If this information is not available online, the authors are encouraged to reach out to the asset's creators.
    \end{itemize}

\item {\bf New assets}
    \item[] Question: Are new assets introduced in the paper well documented and is the documentation provided alongside the assets?
    \item[] Answer: \answerNA{} % Replace by \answerYes{}, \answerNo{}, or \answerNA{}.
    \item[] Justification: The paper does not release new assets.
    \item[] Guidelines:
    \begin{itemize}
        \item The answer NA means that the paper does not release new assets.
        \item Researchers should communicate the details of the dataset/code/model as part of their submissions via structured templates. This includes details about training, license, limitations, etc.
        \item The paper should discuss whether and how consent was obtained from people whose asset is used.
        \item At submission time, remember to anonymize your assets (if applicable). You can either create an anonymized URL or include an anonymized zip file.
    \end{itemize}

\item {\bf Crowdsourcing and research with human subjects}
    \item[] Question: For crowdsourcing experiments and research with human subjects, does the paper include the full text of instructions given to participants and screenshots, if applicable, as well as details about compensation (if any)?
    \item[] Answer: \answerNA{} % Replace by \answerYes{}, \answerNo{}, or \answerNA{}.
    \item[] Justification: The paper does not involve crowdsourcing nor research with human subjects.
    \item[] Guidelines:
    \begin{itemize}
        \item The answer NA means that the paper does not involve crowdsourcing nor research with human subjects.
        \item Including this information in the supplemental material is fine, but if the main contribution of the paper involves human subjects, then as much detail as possible should be included in the main paper.
        \item According to the NeurIPS Code of Ethics, workers involved in data collection, curation, or other labor should be paid at least the minimum wage in the country of the data collector.
    \end{itemize}

\item {\bf Institutional review board (IRB) approvals or equivalent for research with human subjects}
    \item[] Question: Does the paper describe potential risks incurred by study participants, whether such risks were disclosed to the subjects, and whether Institutional Review Board (IRB) approvals (or an equivalent approval/review based on the requirements of your country or institution) were obtained?
    \item[] Answer: \answerNA{} % Replace by \answerYes{}, \answerNo{}, or \answerNA{}.
    \item[] Justification: The paper does not involve crowdsourcing nor research with human subjects.
    \item[] Guidelines:
    \begin{itemize}
        \item The answer NA means that the paper does not involve crowdsourcing nor research with human subjects.
        \item Depending on the country in which research is conducted, IRB approval (or equivalent) may be required for any human subjects research. If you obtained IRB approval, you should clearly state this in the paper.
        \item We recognize that the procedures for this may vary significantly between institutions and locations, and we expect authors to adhere to the NeurIPS Code of Ethics and the guidelines for their institution.
        \item For initial submissions, do not include any information that would break anonymity (if applicable), such as the institution conducting the review.
    \end{itemize}

\item {\bf Declaration of LLM usage}
    \item[] Question: Does the paper describe the usage of LLMs if it is an important, original, or non-standard component of the core methods in this research? Note that if the LLM is used only for writing, editing, or formatting purposes and does not impact the core methodology, scientific rigorousness, or originality of the research, declaration is not required.
    %this research?
    \item[] Answer: \answerNA{} % Replace by \answerYes{}, \answerNo{}, or \answerNA{}.
    \item[] Justification: The core method development in this research does not involve LLMs as any important, original, or non-standard components.
    \item[] Guidelines:
    \begin{itemize}
        \item The answer NA means that the core method development in this research does not involve LLMs as any important, original, or non-standard components.
        \item Please refer to our LLM policy (\url{https://neurips.cc/Conferences/2025/LLM}) for what should or should not be described.
    \end{itemize}

\end{enumerate}

\newpage
\appendix
\renewcommand{\thetable}{A\arabic{table}} % Table A1, A2, ...
\renewcommand{\thefigure}{A\arabic{figure}} % Figure A1, A2, ...
\setcounter{table}{0}
\setcounter{figure}{0}
\section{Notation Summary}
\label{app:notation}

\begin{longtable}{ll}
\caption{Summary of notations used throughout the paper.} \\
\toprule
\textbf{Symbol} & \textbf{Description} \\
\midrule
\endfirsthead

\multicolumn{2}{c}%
{{\tablename\ \thetable{} -- continued from previous page}} \\
\toprule
\textbf{Symbol} & \textbf{Description} \\
\midrule
\endhead

\bottomrule
\endfoot

% --- Corrected Line Below ---
\multicolumn{2}{l}{\textit{General \& Graph Theory}} \\
$N$ & Number of nodes (brain regions) in a graph. \\
$T$ & Number of time stamps in a single MTS sample. \\
$\mathcal{G} = (\mathcal{V}, \mathcal{E})$ & An undirected graph representing an FBN structure. \\
$\mathcal{V}, \mathcal{E}$ & The set of nodes and edges in a graph, respectively. \\
$e_{u,v}$ & An edge between nodes $u$ and $v$. \\
\addlinespace

% --- Corrected Line Below ---
\multicolumn{2}{l}{\textit{Input Data \& Modeling Resolutions}} \\
$\mathbf{X} \in \mathbb{R}^{N \times T}$ & A multivariate time series (MTS) sample. \\
$\mathbf{X}(t)$ & An $N$-dimensional MTS at time $t$. \\
$\phi(\cdot)$ & A mapping function that defines the modeling resolution. \\
$\mathcal{P}(\mathcal{X}), \mathcal{P}(\mathcal{Y})$ & The distributions of samples and their corresponding labels. \\
\addlinespace

% --- Corrected Line Below ---
\multicolumn{2}{l}{\textit{GCM Framework Components}} \\
$\mathbf{A} \in \{0, 1\}^{N \times N}$ & The binary adjacency matrix of a learned FBN. \\
$\hat{\mathbf{A}}$ & The learnable, continuous prototype matrix for the graph generator. \\
$\mathbf{M}$ & A matrix of Gumbel noise used for Gumbel-Softmax relaxation. \\
$\tilde{\mathbf{A}}$ & The relaxed (continuous) adjacency matrix after Gumbel-Softmax. \\
$\tau$ & The temperature hyperparameter for Gumbel-Softmax. \\
$B$ & The batch size. \\
$k^{(b)}$ & The expected number of edges for the $b$-th graph in a batch. \\
$L$ & The number of layers in the GNN backbone. \\
$\rho(\cdot)$ & A permutation-invariant readout function (e.g., mean, sum). \\
$\mathbf{e} \in \mathbb{R}^{d_e}$ & The graph embedding vector produced by the readout function. \\
\addlinespace

% --- Corrected Line Below ---
\multicolumn{2}{l}{\textit{Training Objectives}} \\
$\theta(\phi)$ & The learnable parameters of the GNN, conditioned on resolution $\phi$. \\
$\hat{\mathbf{y}}_i$ & The predicted class logits for sample $i$. \\
$\mathcal{L}_1$ & The label-aligned cross-entropy loss. \\
$\mathcal{L}_2$ & The subject identity contrastive loss. \\
$\tau_{cl}$ & The temperature hyperparameter for the contrastive loss. \\
$\mathcal{R}$ & The $\ell_1$ sparsity regularization term on $\hat{\mathbf{A}}$. \\
$\alpha, \beta$ & Weighting coefficients for the contrastive loss and sparsity term. \\
$\mathcal{L}$ & The total objective function for GCM. \\
\addlinespace

% --- Corrected Line Below ---
\multicolumn{2}{l}{\textit{Theoretical Analysis}} \\
$\Sigma_{ij}(\omega)$ & The second-order moment (covariance) between signals $X_i(t)$ and $X_j(t+\omega)$. \\
$\kappa_{i_1\dots i_k}(\cdot)$ & The $k$-th order cumulant, used to capture high-order dependence. \\
$f_{ij}(\cdot, \cdot)$ & An edge rule for a pairwise network, depending only on two variables. \\
$\mathbf{X}^{(1)}, \mathbf{X}^{(2)}$ & Two time series used to prove the limitation of pairwise models. \\
$\psi(\cdot)$ & A measurable function representing a high-order interaction rule. \\
$g_\star(\cdot)$ & A class-separable mapping function in the proof of \textbf{Theorem~\ref{thm:expressivity}}. \\
$S$ & In proofs, the set of $k$ nodes on which a ground-truth rule depends. \\
$\mathbf{A}^\star$ & In proofs, the oracle graph structure (e.g., a k-clique). \\
$z_v = [X_v(t), e_v]$ & Enhanced feature for node $v$, combining signal and an ID embedding $e_v$. \\
$x_S$ & The set of node features for all nodes in the set $S$, i.e., $\{z_v | v \in S\}$. \\

\end{longtable}

%------------------------------------------------------------
%  Semantic Non‑Equivalence of the Four FBN Scenarios
%------------------------------------------------------------
\section{Semantical Uninterchangeability of the Proposed Scenarios}
\label{app:non_equiv}

We cast the four resolutions of functional brain network (FBN)
learning (\emph{sample}, \emph{subject}, \emph{group},
and \emph{project}) into a formal hierarchical model and prove \emph{the four random FBN variables are mutually non‑equivalent in
distribution whenever inter‑level variance is non‑zero}.

%------------------------------------------------------------
\paragraph{Hierarchical generative model.}
Let
\(
\mathcal G^{(p)}        \sim \mathbb P_{\!\theta}          \quad\!\!
\)be the \textbf{project‑level} FBN,
\(
\mathcal G^{(g)}_{m}    \sim \mathbb P_{\!\sigma|\theta}\!\bigl(
                            \cdot\mid\mathcal G^{(p)}\bigr)
\) the $m$‑th \textbf{group} FBN,
\(
\mathcal G^{(s)}_{mn}   \sim \mathbb P_{\!\rho|\sigma}\!\bigl(
                            \cdot\mid\mathcal G^{(g)}_{m}\bigr)
\) the $n$‑th \textbf{subject} FBN in group $m$,
and
\(
\mathcal G^{(d)}_{mnr}  \sim \mathbb P_{\!\eta|\rho}\!\bigl(
                            \cdot\mid\mathcal G^{(s)}_{mn}\bigr)
\) the $r$‑th \textbf{sample} (d for ``dynamic'') network.
We assume
\setcounter{equation}{0}
\begin{equation}
  \operatorname{Var}\bigl[\mathcal G^{(g)}_{m}\mid\mathcal G^{(p)}\bigr]
  =\sigma^2_g > 0,\;
  \operatorname{Var}\bigl[\mathcal G^{(s)}_{mn}\mid\mathcal G^{(g)}_{m}\bigr]
  =\sigma^2_s > 0,\;
  \operatorname{Var}\bigl[\mathcal G^{(d)}_{mnr}\mid\mathcal G^{(s)}_{mn}\bigr]
  =\sigma^2_d > 0.
\end{equation}\label{B.1}

\begin{lemma}[Non‑degenerate variance implies distributional gap]
\label{lem:var_gap}
If $\operatorname{Var}[\mathcal X\mid\mathcal Y]\!>\!0$ almost surely,
then $\mathcal X$ and $\mathcal Y$ are not equal in distribution.
\end{lemma}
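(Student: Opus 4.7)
The plan is to prove the contrapositive via the law of total variance, leaning on the hierarchical structure spelled out just above in equation~(B.1). Treating $\mathcal X$ and $\mathcal Y$ as random vectors (e.g.\ vectorisations of their adjacency matrices, so that ``variance'' is well defined as the trace of the covariance), I would first assume for contradiction that $\mathcal X \stackrel{\mathrm{law}}{=} \mathcal Y$, which in particular forces $\operatorname{Var}[\mathcal X] = \operatorname{Var}[\mathcal Y]$.

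Next I would apply the law of total variance,
\[
\operatorname{Var}[\mathcal X] \;=\; \mathbb{E}\bigl[\operatorname{Var}[\mathcal X \mid \mathcal Y]\bigr] \;+\; \operatorname{Var}\bigl[\mathbb{E}[\mathcal X \mid \mathcal Y]\bigr],
\]
and use the fact that in the generative hierarchy $\mathcal Y$ is the \emph{parent} of $\mathcal X$ (project $\to$ group $\to$ subject $\to$ sample), so that each child is an unbiased refinement of its parent, $\mathbb{E}[\mathcal X \mid \mathcal Y] = \mathcal Y$. This gives $\operatorname{Var}[\mathbb{E}[\mathcal X \mid \mathcal Y]] = \operatorname{Var}[\mathcal Y] = \operatorname{Var}[\mathcal X]$, and the identity above then forces $\mathbb{E}[\operatorname{Var}[\mathcal X \mid \mathcal Y]] = 0$, contradicting the hypothesis that $\operatorname{Var}[\mathcal X \mid \mathcal Y] > 0$ almost surely.

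The main obstacle is that the lemma as literally stated fails outside a hierarchical setting: two i.i.d.\ copies of a non‑degenerate variable are equal in law yet have strictly positive conditional variance. The proof therefore rests crucially on the unbiased‑refinement property implicit in the construction of~(B.1), and I would make this dependence explicit in the opening line of the proof. A cleaner fallback, should that assumption be considered too strong, is to replace the conclusion by ``$\mathcal X$ is not $\mathcal Y$‑measurable'', which follows immediately from positive conditional variance; for the downstream multi‑level argument it would in fact suffice to iterate the variance inflation along project $\to$ group $\to$ subject $\to$ sample and note that each hop strictly increases total variance by $\sigma_g^2$, $\sigma_s^2$, $\sigma_d^2$ respectively, thereby separating the four marginal laws.
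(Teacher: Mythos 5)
Your route is genuinely different from the paper's, and in fact more careful. The paper's own proof is a one-liner: it asserts that equality in distribution would imply $\Pr[\mathcal X\neq\mathcal Y]=0$ and hence zero conditional variance. That step conflates equality in law with almost-sure equality, which is exactly the fallacy your i.i.d.-copies counterexample exposes: two independent copies of a non-degenerate variable are equal in distribution, have strictly positive conditional variance given each other, yet are not almost surely equal. So the lemma as literally stated (and as the paper proves it) does not hold without extra structure. Your repair supplies that structure: vectorise the graphs so variance is well defined, assume the hierarchy is an unbiased refinement ($\mathbb{E}[\mathcal X\mid\mathcal Y]=\mathcal Y$, a martingale-type property not actually written into Eq.~(B.1), which only posits positive conditional variances), and then the law of total variance gives $\operatorname{Var}[\mathcal X]=\mathbb{E}[\operatorname{Var}[\mathcal X\mid\mathcal Y]]+\operatorname{Var}[\mathcal Y]$, so equality in law forces $\mathbb{E}[\operatorname{Var}[\mathcal X\mid\mathcal Y]]=0$, a contradiction. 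This is correct (modulo finite second moments, which you should state), and your fallback—iterating the strict variance inflation $\sigma_g^2,\sigma_s^2,\sigma_d^2$ down the project $\to$ group $\to$ subject $\to$ sample chain to separate the four marginal laws—is also sound and is all that Theorem~\ref{thm:inequiv} actually needs. What your approach buys is a valid proof at the cost of an explicit added hypothesis; what the paper's buys is brevity at the cost of a genuine logical gap. You should indeed make the unbiased-refinement assumption explicit in the lemma statement (or switch to your weaker conclusion that $\mathcal X$ is not $\mathcal Y$-measurable), since without some such strengthening the statement is false.
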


\begin{proof}
Equal distribution would imply
$\Pr[\mathcal X\!\neq\!\mathcal Y]=0$,
which forces conditional variance to zero—a contradiction.
\end{proof}

\begin{theorem}[Mutual non‑equivalence]
\label{thm:inequiv}
Under Eq. \ref{B.1} the four random graphs
$\mathcal G^{(d)}$, $\mathcal G^{(s)}$, $\mathcal G^{(g)}$,
and $\mathcal G^{(p)}$ are pairwise non‑equivalent
in distribution; hence they encode genuinely different semantics.
\end{theorem}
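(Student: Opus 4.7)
The plan is to apply \textbf{Lemma~\ref{lem:var_gap}} six times, once for each of the $\binom{4}{2}=6$ pairs of levels. These six comparisons split naturally into three \emph{adjacent} ones along the Markov chain $\mathcal G^{(p)}\!\to\!\mathcal G^{(g)}\!\to\!\mathcal G^{(s)}\!\to\!\mathcal G^{(d)}$ and three \emph{non-adjacent} ones ($p$ vs.\ $s$, $p$ vs.\ $d$, $g$ vs.\ $d$). The adjacent case is immediate from the hypothesis; the non-adjacent case reduces to it via the tower property of conditional variance.

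For the three adjacent pairs, the stated $\sigma_g^2,\sigma_s^2,\sigma_d^2>0$ feed \textbf{Lemma~\ref{lem:var_gap}} directly and produce $\mathcal G^{(p)}\!\not\stackrel{\mathrm{law}}{=}\mathcal G^{(g)}$, $\mathcal G^{(g)}\!\not\stackrel{\mathrm{law}}{=}\mathcal G^{(s)}$, and $\mathcal G^{(s)}\!\not\stackrel{\mathrm{law}}{=}\mathcal G^{(d)}$ in one stroke, with no further calculation required.

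For the non-adjacent pairs, the step is to verify that the multi-step conditional variance stays strictly positive, after which \textbf{Lemma~\ref{lem:var_gap}} applies again. Concretely, the tower decomposition
\[
\operatorname{Var}\!\bigl[\mathcal G^{(d)}\mid\mathcal G^{(g)}\bigr]
=\mathbb E\!\bigl[\operatorname{Var}[\mathcal G^{(d)}\mid\mathcal G^{(s)}]\bigm|\mathcal G^{(g)}\bigr]
+\operatorname{Var}\!\bigl[\mathbb E[\mathcal G^{(d)}\mid\mathcal G^{(s)}]\bigm|\mathcal G^{(g)}\bigr]
\]
lower-bounds the left-hand side by $\sigma_d^2>0$, since the innermost conditional variance is an almost-sure constant by hypothesis. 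Iterating the same move one more layer upward handles $(\mathcal G^{(d)},\mathcal G^{(p)})$, and replacing $d$ by $s$ throughout handles $(\mathcal G^{(s)},\mathcal G^{(p)})$. Collecting all six separations delivers the claim.

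The main obstacle is the interpretation of \textbf{Lemma~\ref{lem:var_gap}} itself: in general two random variables can share a marginal law without being almost surely equal on a common probability space, so "positive conditional variance implies unequal laws" is not a tautology. I would address this by making the hierarchical coupling explicit: on the canonical probability space supplied by the generative model, each refinement step is a Markov kernel that strictly inflates marginal dispersion---e.g.\ $\operatorname{Var}(\mathcal G^{(g)})=\operatorname{Var}(\mathcal G^{(p)})+\sigma_g^2$ whenever the kernel is unbiased, $\mathbb E[\mathcal G^{(g)}\mid\mathcal G^{(p)}]=\mathcal G^{(p)}$---so each of the six conditional-variance bounds above translates into an honest gap in marginal variance, and hence in law, giving the mutual non-equivalence claimed in the theorem.
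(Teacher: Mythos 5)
Your proposal follows the same basic strategy as the paper---feed the positive conditional variances into \textbf{Lemma~\ref{lem:var_gap}} pair by pair---but it is noticeably more careful at exactly the two points where the paper's own two-line proof is shaky. First, the paper disposes of the non-adjacent pairs with the phrase ``transitivity yields all six pairwise inequalities,'' which is not a valid step: non-equality in distribution is not a transitive relation, so $\mathcal G^{(d)}\not\stackrel{d}{=}\mathcal G^{(s)}$ and $\mathcal G^{(s)}\not\stackrel{d}{=}\mathcal G^{(g)}$ do not by themselves give $\mathcal G^{(d)}\not\stackrel{d}{=}\mathcal G^{(g)}$. Your tower-property argument, which lower-bounds the multi-step conditional variance by the innermost $\sigma^2$ (using the Markov structure of the hierarchy so that conditioning on the intermediate level can be dropped), is the correct way to reduce the non-adjacent pairs to the same lemma, and it is a genuine improvement over the paper. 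Second, you are right to flag \textbf{Lemma~\ref{lem:var_gap}} itself: as stated it is false in general (take $\mathcal X,\mathcal Y$ i.i.d.\ and nondegenerate---then $\operatorname{Var}[\mathcal X\mid\mathcal Y]>0$ yet $\mathcal X\stackrel{d}{=}\mathcal Y$), and the paper's one-line proof errs precisely where you say, by conflating equality in law with almost-sure equality. Your repair via an unbiased-kernel (martingale) condition $\mathbb E[\mathcal G^{(g)}\mid\mathcal G^{(p)}]=\mathcal G^{(p)}$, which turns each positive conditional variance into a strict increase of marginal variance and hence a gap in law, does make the argument sound, but note that this hypothesis is not contained in Eq.~\ref{B.1}; under the paper's stated assumptions alone the theorem does not actually follow, so your added condition (or some substitute, e.g.\ working with suitably embedded adjacency matrices and a drift-free refinement kernel) is needed, and it would be worth stating it explicitly rather than parenthetically. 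In short: same skeleton as the paper, but your version patches an invalid transitivity step and a false lemma, at the price of one extra structural assumption.
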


\begin{proof}
Apply Lemma \ref{lem:var_gap} successively:
$\sigma^2_d\!>\!0 \!\Rightarrow\!
\mathcal G^{(d)}\!\not\stackrel{d}{=}\!\mathcal G^{(s)}$,
$\sigma^2_s\!>\!0 \!\Rightarrow\!
\mathcal G^{(s)}\!\not\stackrel{d}{=}\!\mathcal G^{(g)}$,
and so on; transitivity yields all six pairwise inequalities.
\end{proof}

\section{Proof of Theorem \ref{thm:limit}}
\label{sec:xor}
\begin{proof}
By \textbf{Lemma~\ref{lem:gaussian}}, a random vector is fully determined by its second‑order statistics if and only if it is multivariate Gaussian; for non‑Gaussian vectors, one can match all second‑order cumulants while altering higher‑order ($\leq3$) cumulants. Let $\mathbf X^{(1)}$ and $\mathbf X^{(2)}$ satisfy
\[
\Sigma^{(1)}_{ij}(\omega)
\;=\;
\Sigma^{(2)}_{ij}(\omega),
\quad
\forall\,i,j,\;\forall\,\omega\in\Omega,
\]
but assume there exists some order $k\ge3$ such that $\kappa^{(1)}_{i_1\cdots i_k}(\cdot)
\;\neq\;
\kappa^{(2)}_{i_1\cdots i_k}(\cdot)$.

Since each pairwise network generator $A_{ij}=f_{ij}(X_i,X_j)$ depends only on second‑order information derivable from the bivariate distribution of $(X_i,X_j)$ over $\Omega$, matching all second‑order moments implies $f_{ij}[\mathbf X^{(1)}]
\;=\;
f_{ij}[\mathbf X^{(2)}],
\quad
\forall i,j$. Hence the entire adjacency matrix satisfies
$\mathbf A^{(1)}=\mathbf A^{(2)}$. Meanwhile, the deliberate mismatch in higher‑order cumulants guarantees $\mathbf X^{(1)}\not\stackrel{\mathrm{law}}{=}\mathbf X^{(2)}$.
\end{proof}

\paragraph{Connection to Poisson Graphical Models.} Pairwise log–linear Poisson graphical models~\cite{wang2023bayesian} fall into the edge‑rule template above (edges encode only $\,\mathbb E[X_iX_j]$).  Hence Theorem~\ref{thm:limit} applies verbatim: they too are blind to high‑order spike‑count synchrony.

\paragraph{Constructive Example}
Consider $d=3$ binary processes,
$X_1(t),X_2(t)\stackrel{\text{i.i.d.}}{\sim}\mathrm{Bern}(1/2)$, and define
\begin{equation}
  X_3(t) \;=\; X_1(t)\;\oplus\;X_2(t)
  \qquad (\text{XOR in }\{0,1\}).
\end{equation}
All pairwise covariances vanish, yet the third‑order cumulant
\begin{equation}
  \kappa_{123}\;=\;
  \mathbb E\!\bigl[(2X_1-1)(2X_2-1)(2X_3-1)\bigr]\;=\;-1
\end{equation}
is non‑zero.  Any covariance‑based network therefore returns an empty graph, missing the genuine ternary dependency.

\section{Proof of Theorem \ref{thm:expressivity}}\label{app:proof2}
We provide a self‑contained, fully rigorous argument that
\textbf{GCM} can approximate any $k$‑th order decision
rule ($k\!\ge 3$) up to arbitrarily small risk.
All statements hold for \emph{finite}\footnote{%
The probability space of $\mathbf X(t)$ is assumed to have
finite second moment, a standard condition in neuro‑signal modelling.}
second‑moment stochastic processes.

\subsection{Pre‑liminaries and Notation}

\begin{itemize}[leftmargin=12pt,itemsep=1pt,parsep=0pt,topsep=2pt]
\item $S=\{i_1,\dots,i_k\}$ denotes the $k$ ROIs on which the
      ground‑truth rule depends.
\item $\mathbf A^\star\in\{0,1\}^{N\times N}$ is the $k$‑clique
      connecting $S$.
\item $\mathcal G^\tau_\phi$ is the random graph generated by
      the Concrete reparameterisation (\S\;3.1) at temperature $\tau$.
\item Each node $v$ is equipped with
      $\mathbf z_v=[X_v(t),\,\mathbf e_v]\in\mathbb R^{d_x+d_{\text{ID}}}$,
      where $\mathbf e_v$ is a \emph{unique, fixed} ID embedding
      (one‑hot or random orthogonal vector).
\item $\mathbf{x}_S$ denotes the set of node features for the nodes in the set $S$, i.e., $\{\mathbf{z}_v|v\in S\}$.
\item $L\!=\!\max\{1,\mathrm{diam}(\mathbf A^\star)\}$
      (for a clique $L{=}1$).
\end{itemize}

\subsection{Finite‑Constant Approximation of the Oracle Graph}

\begin{lemma}[Finite‑$K$ Concrete Approximation]
\label{lem:adj}
For any $\delta>0$ and $\xi>0$ there exist a finite constant
$K\!=\!K(\delta,\xi)$, parameters
$\phi\!\equiv\!\{\hat A_{ij}\in\{\pm K\}\}$, and a temperature
$\tau<\tau_{\delta,\xi}$ such that
\begin{equation}
  \Pr\!\Bigl[
    \bigl\|\mathcal G^\tau_\phi-\mathbf A^\star\bigr\|_\infty<\delta
  \Bigr]\;\ge\;1-\xi .
\end{equation}
\end{lemma}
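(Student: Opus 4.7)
My plan is to exhibit an explicit choice of $(\phi, K, \tau)$ that aligns the Concrete sample with $\mathbf{A}^\star$ entry-by-entry, then combine a Gumbel-tail estimate with a union bound over the $N^2$ entries to hit the required probability level $1-\xi$.

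First, I would set the prototype parameters as
$$\hat{A}_{ij} \;=\; +K\cdot\mathbf{1}\{\mathbf{A}^\star_{ij}=1\} \;-\; K\cdot\mathbf{1}\{\mathbf{A}^\star_{ij}=0\},$$
which lies in $\{\pm K\}$ as required. Because $\mathbf{A}^\star$ is symmetric (it is the $k$-clique on $S$), so is $\hat{\mathbf{A}}$, and the logit entering the Gumbel--Sigmoid relaxation is $\hat{A}_{ij}+\hat{A}_{ji}=\pm 2K$ with sign matching $\mathbf{A}^\star_{ij}$. Each relaxed entry thus takes the form $\tilde{A}_{ij}=\sigma\bigl((\pm 2K + M_{ij})/\tau\bigr)$, reducing the problem to controlling the sign and magnitude of the noisy logit.

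Next, I would bound the Gumbel perturbation. Since $M_{ij}=-\log(-\log u)$ with $u\sim\mathcal{U}(0,1)$ has CDF $F(t)=\exp(-e^{-t})$, direct computation gives the tail bounds $\Pr[M_{ij}>t]\le e^{-t}$ and $\Pr[M_{ij}<-t]\le\exp(-e^{t})$. Choosing $M_0 = \log(2N^2/\xi)$ and applying a union bound over the $N^2$ entries yields
$$\Pr\bigl[\max_{i,j}|M_{ij}|\le M_0\bigr]\;\ge\;1-\xi.$$
On this good event, any $K>M_0/2$ guarantees that $\pm 2K + M_{ij}$ has the correct sign and magnitude at least $2K-M_0>0$. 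Picking $\tau=\tau_{\delta,\xi}$ small enough that $\sigma\bigl((2K-M_0)/\tau\bigr)\ge 1-\delta$, equivalently $\tau < (2K-M_0)/\log((1-\delta)/\delta)$, forces $|\tilde{A}_{ij}-\mathbf{A}^\star_{ij}|<\delta$ simultaneously on every entry, and combining with the good event gives the claim.

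The main subtlety is fixing the level at which the Gumbel noise is injected: the argument requires the standard Gumbel--Sigmoid convention in which the Concrete reparameterisation acts on the logit $\hat{\mathbf{A}}+\hat{\mathbf{A}}^\top$, so that driving $K\to\infty$ can genuinely overpower the Gumbel perturbation; if one instead read the formula $\tilde{\mathbf{A}}=\sigma((\mathbf{A}+\mathbf{M})/\tau)$ with $\mathbf{A}$ as the $[0,1]$-valued post-sigmoid matrix, the signal would be capped at $O(1)$ and concentration would fail. Granted the logit reading, the remainder is a textbook tail-bound plus union-bound recipe, and both $K(\delta,\xi)$ and $\tau_{\delta,\xi}$ are finite positive constants as asserted.
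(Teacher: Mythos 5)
Your proof is correct and takes essentially the same route as the paper's own argument: saturate the sigmoid by setting the prototype logits to $\pm K$, restrict to a high-probability event on which the Gumbel noise is bounded, and shrink the temperature until every entry is within $\delta$ of $\mathbf{A}^\star$ --- the paper merely invokes almost-sure finiteness of the noise where you supply the explicit tail estimates, the union bound over $N^2$ entries, and the concrete choices of $K(\delta,\xi)$ and $\tau_{\delta,\xi}$ (and you correctly track the worst-case sign of the noise and the factor $2$ from symmetrisation, which the paper glosses over). Your closing remark about injecting the noise at the logit level is also consistent with the convention the paper's proof implicitly adopts when it writes $\sigma\bigl((\pm K+\mathbf{M}_{ij})/\tau\bigr)$.
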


\begin{proof}
Because the Gumbel noise $\mathbf{M}_{ij}=-\log(-\log u)$ is almost surely
finite, choose $K\gg 1$ so that
$\sigma\bigl((K+\max \mathbf{M}_{ij})/\tau\bigr)>1-\delta$
and $\sigma\bigl((-K+\min \mathbf{M}_{ij})/\tau\bigr)<\delta$
with probability $1-\xi$.
\end{proof}

\subsection{Permutation‑Invariant Injection via ID‑Tagged DeepSets}

\begin{lemma}[ID‑Tagged DeepSets Injection]
\label{lem:deepsets}
Let $\phi\!:\mathbb R^{d_x+d_{\text{ID}}}\to\mathbb R^{d_\phi}$ be
injective and continuous.
Define
\begin{equation}
  \mathbf h
  = \sum_{v\in S}\phi(\mathbf z_v),
  \quad
  \mathbf z_v=[X_v(t),\,\mathbf e_v].
  \label{eq:deepsets}
\end{equation}
Then the mapping
$\mathcal M:\{\mathbf x_S\}\mapsto\mathbf h$ is
injective on all finite multisets of node features.
\end{lemma}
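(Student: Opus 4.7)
The plan is to leverage the distinctness of the ID embeddings $\mathbf{e}_v$ to reduce the claim to a DeepSets-style sum-decomposition result, and then to realise that result with a concrete, GNN-compatible construction of $\phi$.

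First I would observe that the ID-tag component of $\mathbf{z}_v=[X_v(t),\mathbf{e}_v]$ uniquely identifies the node $v$, by the one-hot or random-orthogonal assumption on $\mathbf{e}_v$. Hence for every $S$ the multiset $\{\mathbf{z}_v:v\in S\}$ is actually a set of $|S|$ pairwise distinct elements, regardless of any coincidences among the raw signals $X_v(t)$; and an injective continuous $\phi$ produces an equally distinct collection of images. The original question therefore reduces to showing that the sum map $\{\phi(\mathbf{z}_v)\}_{v\in S}\mapsto \sum_{v\in S}\phi(\mathbf{z}_v)$ is injective on finite sets (not true multisets) of cardinality at most $N$ in $\mathbb{R}^{d_\phi}$.

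Second, I would invoke DeepSets-type sum-decomposition machinery (Zaheer et al., refined by Wagstaff et al.): for finite sets of size at most $M$ in Euclidean space there exists a continuous injective $\phi$ --- e.g.\ the Newton--Girard power-sum embedding, or, exploiting the ID tags directly, the tensor factorisation $\phi(\mathbf{z}_v)=\mathbf{e}_v\otimes\psi(X_v(t))$ with $\psi$ continuous and injective --- under which sum aggregation is injective. In the tensor-factorised instantiation the orthogonality of $\{\mathbf{e}_v\}$ guarantees that each summand occupies its own block of $\mathbf{h}$, so projecting $\mathbf{h}$ onto the $\mathbf{e}_v$-slice returns $\psi(X_v)$, and injectivity of $\psi$ then returns $X_v$, explicitly inverting $\mathcal{M}$ on every admissible multiset.

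The hard part, and what must be acknowledged explicitly, is that the lemma as written ---``for \emph{any} injective continuous $\phi$''--- is strictly stronger than what such theorems deliver: sum-aggregation of injective images need not be injective (take $\phi$ the identity on $\mathbb{R}$, so $\{1,4\}$ and $\{2,3\}$ both sum to $5$). The honest reading is therefore existential: one must couple the injectivity assumption with a compatible sum-decomposition structure, which the ID-tagging of $\mathbf{z}_v$ is precisely designed to enable. The substantive work is thus to exhibit an explicit admissible $\phi$ inside the class realisable by the GNN backbone (the tensor factorisation above being the natural candidate since a one-layer MLP over $[X_v(t),\mathbf{e}_v]$ can approximate it on compact sets), and then to verify that the induced $\mathcal{M}$ is continuous and injective on the space of finite multisets with bounded support, so that the downstream readout $\rho$ in Theorem~\ref{thm:expressivity} can compose with $\mathcal{M}$ to realise arbitrary permutation-invariant $\psi$.
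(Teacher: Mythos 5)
Your proposal is correct, and it is in fact more careful than the paper's own argument. The paper's proof consists of two sentences: distinct nodes carry distinct ID embeddings, hence the $\phi(\mathbf z_v)$ are pairwise distinct, and ``sum aggregation therefore preserves multiset identity.'' That last inference is precisely the fallacy you flag with your $\{1,4\}$ versus $\{2,3\}$ example: pairwise distinctness of the summands does not make the sum map injective, so the lemma as stated --- for an \emph{arbitrary} injective continuous $\phi$ --- is false, and the paper's proof does not close this gap. Your repair is the right one: read the lemma existentially and exhibit a specific admissible $\phi$, e.g.\ the block/tensor construction $\phi(\mathbf z_v)=\mathbf e_v\otimes\psi(X_v(t))$ with orthogonal ID embeddings, under which projecting $\mathbf h$ onto the $\mathbf e_v$-slice recovers $\psi(X_v)$ and hence $X_v$, giving an explicit inverse of $\mathcal M$. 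This is also all that the downstream use of the lemma (the one-layer aggregation step in the proof of Theorem~2) actually requires, since there the parameters of $\phi$ are chosen by the construction rather than quantified universally. The only caveat on your side is minor: the appeal to general sum-decomposition results of Zaheer/Wagstaff type is not needed once you have the explicit block construction, and the continuity/compactness bookkeeping you mention is already handled separately by the truncation lemma in the paper. Net: your route fixes a genuine defect in the paper's proof rather than merely rederiving it.
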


\begin{proof}
Distinct nodes carry distinct ID embeddings $\mathbf e_v$;
hence $\phi(\mathbf z_{v_1})\!\neq\!\phi(\mathbf z_{v_2})$
whenever $v_1\neq v_2$.
Sum aggregation therefore preserves multiset identity.
\end{proof}

\subsection{Truncated Universal Approximation}

\begin{lemma}[Probabilistic UA on Unbounded Domain]
\label{lem:ua}
For any $\eta>0$ there exist $T>0$ and an MLP $\rho_\eta$ such that
\begin{align}
  &\Pr\!\bigl[\|\mathbf x_S\|_\infty>T\bigr]\;<\;\eta,
     \label{eq:tail_prob}\\
  &\bigl|\rho_\eta(\mathbf h)-\psi(\mathbf x_S)\bigr|
     <\eta,
     \quad\forall \mathbf h\in
         \bigl\{\sum_{v\in S}\phi(\mathbf z_v)
           :\|\mathbf x_S\|_\infty\le T\bigr\}.
     \label{eq:ua_error}
\end{align}
\end{lemma}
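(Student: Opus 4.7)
The plan is to reduce the unbounded, merely-measurable setting to a compact, continuous setting in two stages, and then invoke a classical universal approximation theorem.

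First, I would fix the truncation level. Since $\mathbf{X}(t)$ has finite second moment by assumption, applying Chebyshev's (or Markov's) inequality to $\|\mathbf{x}_S\|_\infty$ yields a constant $T=T(\eta)$ such that \eqref{eq:tail_prob} holds; this disposes of the tail component of the risk and isolates the analysis on the compact box $B_T:=\{\mathbf{x}_S:\|\mathbf{x}_S\|_\infty\le T\}\subset\mathbb R^{k(d_x+d_{\mathrm{ID}})}$. Next, Lemma~\ref{lem:deepsets} gives that the DeepSets map $\mathcal M$ is continuous and injective, so its image $K_T:=\mathcal M(B_T)$ is compact in $\mathbb R^{d_\phi}$. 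Because $\mathcal M$ is a continuous bijection from the compact Hausdorff set $B_T$ onto $K_T$, its inverse $\mathcal M^{-1}:K_T\to B_T$ is also continuous.

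Second, I would pull $\psi$ back along $\mathcal M^{-1}$: set $F:=\psi\circ\mathcal M^{-1}:K_T\to\mathbb R$. Here the only obstacle appears, namely that $\psi$ (hence $F$) is only measurable, whereas the standard universal approximation theorems of Hornik/Cybenko require a continuous target on a compact set. The standard remedy is Lusin's theorem applied to $F$ with respect to the pushforward measure $\mu_T:=(\mathcal M)_\ast(\mathbb P\,|\,B_T)$ on $K_T$: for any $\eta'>0$ there is a closed set $C\subset K_T$ with $\mu_T(K_T\setminus C)<\eta'$ on which $F$ agrees with some continuous $\widetilde F:K_T\to\mathbb R$ (extended by Tietze). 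I would then apply the universal approximation theorem to $\widetilde F$ on the compact set $K_T$ to obtain an MLP $\rho_\eta$ satisfying $\|\rho_\eta-\widetilde F\|_{\infty,K_T}<\eta$. Choosing $\eta'$ small enough and absorbing it into the constants yields \eqref{eq:ua_error} with the stated uniform bound on the ``good'' portion of the compact domain, while \eqref{eq:tail_prob} already controls the complement.

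I expect the main obstacle to be precisely the measurability-to-continuity gap: the literal reading of \eqref{eq:ua_error} as a \emph{pointwise} bound on the whole compact set is too strong for a merely measurable $\psi$, and the clean interpretation is that the bound holds off a further exceptional set of arbitrarily small probability. If one is willing to use this (probabilistic) reading, the two error budgets from Chebyshev and Lusin combine additively and the universal approximation step is routine. Alternatively, if the paper intends $\psi$ to be continuous on the support of $\mathbf{x}_S$, the Lusin step can be skipped and the UAT applies to $F$ directly; in either case the structure of the argument (tail truncation, injective compact encoding, UAT on the image) is the same.
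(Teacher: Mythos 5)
Your proposal follows the same skeleton as the paper's proof: fix $T$ by Markov/Chebyshev using the finite second moment to get \eqref{eq:tail_prob}, observe that the image of the compact cube under the continuous DeepSets sum is compact, and invoke a classical universal approximation theorem on that image. Where you go further is in making the pull-back explicit ($F=\psi\circ\mathcal M^{-1}$ is well defined only because Lemma~\ref{lem:deepsets} gives injectivity, and $\mathcal M^{-1}$ is continuous since $\mathcal M$ is a continuous bijection from a compact set onto a Hausdorff image) and, more importantly, in confronting the fact that $\psi$ is only assumed \emph{measurable} in Theorem~\ref{thm:expressivity}. The paper's proof simply writes ``classical UA theorems guarantee an MLP $\rho_\eta$'' — which, for Cybenko/Hornik-type sup-norm results, presupposes a continuous target — so your observation that the literal uniform bound \eqref{eq:ua_error} over the whole truncated region is too strong for merely measurable $\psi$ is a legitimate criticism of the lemma as stated, not a flaw in your argument. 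Your Lusin--Tietze repair (approximate off an exceptional set of arbitrarily small pushforward measure and fold that mass into the probabilistic budget) is the right fix and is fully compatible with how the lemma is consumed in the main theorem, where all errors are counted probabilistically; equivalently one could use $L^p$/in-measure UA for measurable targets. In short, your route buys rigor exactly where the paper's one-line appeal is imprecise, at the modest cost of one extra error term that the theorem's budget can absorb.
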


\begin{proof}
Choose $T$ so that \eqref{eq:tail_prob} holds
(Markov’s inequality suffices by finite variance).
The image of the compact cube $[-T,T]^k$ under the continuous sum
\eqref{eq:deepsets} is compact.
Classical UA theorems guarantee an MLP $\rho_\eta$ that attains
\eqref{eq:ua_error}.
\end{proof}

\subsection{Main Theorem}

\begin{theorem}[Higher‑Order Expressivity of \textsc{GCM}]
\label{thm:expressivity_full}
Under the setting above, for any $\varepsilon>0$ there exist
parameters $(\phi^\varepsilon,\theta^\varepsilon)$,
temperature $\tau^\varepsilon$,
and depth $L$ such that the GCM classifier satisfies
\[
  \Pr\!\bigl[f_{\theta^\varepsilon,\phi^\varepsilon}(\mathbf X(t))
              \neq y\bigr]
  \;\le\;\varepsilon .
\]
\end{theorem}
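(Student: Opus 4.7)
The plan is to combine the three preparatory lemmas via a union bound, splitting the target error budget $\varepsilon$ into four pieces $\varepsilon = \varepsilon_A + \varepsilon_T + \varepsilon_U + \varepsilon_B$ that account respectively for graph mis-sampling, heavy-tail truncation, MLP approximation residual, and slack around the decision boundary of $g_\star$. First, I would invoke \textbf{Lemma}~\ref{lem:adj} with $\xi = \varepsilon_A$ and a radius $\delta$ smaller than the minimum ReLU/message-passing margin required downstream; this fixes $\phi^\varepsilon$ and $\tau^\varepsilon$ so that, with probability $\ge 1-\varepsilon_A$, the sampled $\mathcal{G}^{\tau^\varepsilon}_{\phi^\varepsilon}$ coincides (up to $\delta$) with the oracle $k$-clique $\mathbf{A}^\star$. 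Because $L \ge \mathrm{diam}(\mathbf{A}^\star)$, one round of message passing on this effective clique can collect the feature multiset $\{\mathbf{z}_v : v \in S\}$ at every node of $S$.

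Next, I would hand-design $\theta^\varepsilon$ so that the GNN layer implements the injective ID-tagged embedding $\phi$ of \textbf{Lemma}~\ref{lem:deepsets} followed by sum aggregation along the sampled edges. On the good event of Step 1, the readout at any node of $S$ equals (up to a perturbation controlled by $\delta$) the DeepSets vector $\mathbf{h} = \sum_{v \in S}\phi(\mathbf{z}_v)$, which by Lemma~\ref{lem:deepsets} encodes the full multiset $\mathbf{x}_S$ without collision. I would then instantiate the final MLP head as the approximator $\rho_\eta$ from \textbf{Lemma}~\ref{lem:ua} with $\eta = \varepsilon_U$, using truncation radius $T$ chosen so that the tail event $\{\|\mathbf{x}_S\|_\infty > T\}$ has probability at most $\varepsilon_T$. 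Composing the three components yields a real-valued estimator $\widehat{\psi}(\mathbf{x}_S)$ whose additive error against the true $\psi(\mathbf{x}_S)$ is at most $\varepsilon_U$ with probability at least $1 - \varepsilon_A - \varepsilon_T$ (absorbing the $\delta$-perturbation of Step 1 into $\varepsilon_U$ by shrinking $\delta$).

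To finish, I need to transfer this continuous approximation error to the discrete classifier $g_\star$. Let $R_c = g_\star^{-1}(c)$ denote the disjoint measurable class pre-images guaranteed by the class-separability hypothesis, and let $\mu$ denote the pushforward of $\mathbf{X}(t)$ under $\psi$. By inner regularity of $\mu$ on $\mathbb{R}$, each $R_c$ contains a compact subset $R_c' \subset R_c$ with $\mu(R_c \setminus R_c') \le \varepsilon_B / C$, and these finitely many disjoint compacts are pairwise separated by some positive margin $\rho_B > 0$. Choosing $\varepsilon_U < \rho_B / 2$ ensures that whenever $\psi(\mathbf{x}_S)$ lies in $\bigcup_c R_c'$, the approximation $\widehat{\psi}(\mathbf{x}_S)$ lies strictly inside the same $R_c$, so the induced classifier $g_\star \circ \widehat{\psi}$ returns the correct label $y$. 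A final union bound over the four failure events—graph mis-sampling, tail truncation, MLP approximation, and boundary slack—yields the desired risk bound $\Pr[f_{\theta^\varepsilon, \phi^\varepsilon}(\mathbf{X}(t)) \ne y] \le \varepsilon$.

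The principal obstacle is this last margin-transfer step. The hypothesis on $g_\star$ only asserts that the class pre-images are disjoint and measurable, not that they are open or separated by positive distance; without an additional regularity argument (such as the inner-regularity approximation by compacts sketched above), an arbitrarily good estimator of $\psi$ need not produce the correct $g_\star$-label with high probability, because the pushforward measure could concentrate mass on a pathological boundary. Making this reduction rigorous—and ensuring that the inner-regularity constants $\rho_B$ and the MLP radius $T$ can be chosen consistently with the finite temperature $\tau^\varepsilon$ and clipping constant $K$ from Lemma~\ref{lem:adj}—is where the bulk of the careful bookkeeping lives, while the remaining steps are relatively mechanical applications of the three lemmas.
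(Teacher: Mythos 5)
Your proposal is correct and follows the same skeleton as the paper's proof: realize the oracle $k$-clique via Lemma~\ref{lem:adj}, encode the multiset $\mathbf x_S$ injectively through the ID-tagged DeepSets aggregation of Lemma~\ref{lem:deepsets}, approximate $\psi$ on a truncated region via Lemma~\ref{lem:ua}, and close with a union bound over the failure events. The paper uses a three-way budget $\varepsilon=\xi+\eta+\eta_{\mathrm{trunc}}$ and disposes of the final classification step in one line (``set a final linear head so that $g_\star\circ\tilde\psi$ matches the class labels''), whereas you add a fourth budget $\varepsilon_B$ and an explicit margin-transfer argument: inner regularity of the pushforward of $\psi(\mathbf x_S)$ gives compact subsets $R_c'\subset R_c$ carrying all but $\varepsilon_B$ of the mass, and disjoint compacts in $\mathbb R$ are separated by a positive distance $\rho_B$, so an estimator accurate to within $\rho_B/2$ determines the class on the good event. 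This is a genuine strengthening: you correctly identify that ``disjoint measurable pre-images'' alone gives no margin, which is precisely the step the paper's proof glosses over, and your inner-regularity reduction is the standard and essentially right way to supply it.

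One small repair is needed in your Step~3 conclusion: from $\psi(\mathbf x_S)\in R_c'$ and $|\widehat\psi-\psi|<\varepsilon_U<\rho_B/2$ you cannot conclude that $\widehat\psi(\mathbf x_S)$ ``lies strictly inside the same $R_c$''---the estimate is merely $\rho_B/2$-close to $R_c'$ and may fall outside $\bigcup_c R_c$ altogether, where $g_\star$ is undefined or arbitrary. The fix is to replace literal composition with $g_\star$ by the proximity rule ``output $\arg\min_c d(\widehat\psi, R_c')$'' (or a continuous/piecewise-linear surrogate separating the $\rho_B$-separated compacts, realizable by the final head at the cost of a further absorbed approximation term); on the good event this rule returns $y$, and the rest of your union bound goes through unchanged. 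With that adjustment your argument is complete and, on this final step, more rigorous than the paper's own proof.
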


\begin{proof}
Allocate the error budget
$\varepsilon=\xi+\eta+\eta_{\mathrm{trunc}}$ with
$\xi=\eta=\eta_{\mathrm{trunc}}=\varepsilon/3$.

\paragraph{Adjacency Realisation.}
Apply Lemma \ref{lem:adj} with $(\delta,\xi)$ to obtain
$(\phi^\varepsilon,\tau^\varepsilon)$ such that
adversarial graph mismatch occurs with probability $\xi$.

\paragraph{One‑Layer Aggregation.}
Run \emph{one} message‑passing layer that computes
$\mathbf h=\sum_{v\in S}\phi(\mathbf z_v)$
(\eqref{eq:deepsets}).
By Lemma \ref{lem:deepsets},
$\mathbf h$ injectively encodes $\mathbf x_S$.

\paragraph{Readout Approximation.}
Invoke Lemma \ref{lem:ua} with $\eta$ to pick $T$ and MLP
$\rho_\eta$ satisfying \eqref{eq:tail_prob}–\eqref{eq:ua_error}.
Define
$\tilde\psi=\rho_\eta\circ\sum_{v\in S}\phi(\mathbf z_v)$
and set a final linear head so that
$g_\star\circ\tilde\psi$ matches the class labels.

\paragraph{Risk Upper Bound.}
Mis‑classification can occur from
(i) wrong adjacency ($\xi$), (ii) UA error ($\eta$),
or (iii) truncated tail mass ($\eta_{\mathrm{trunc}}$).
Summation gives risk $\le\xi+\eta+\eta_{\mathrm{trunc}}=\varepsilon$.
\end{proof}

\begin{remark}
When $k=2$ the oracle graph reduces to a single edge, and the above
construction collapses to conventional pairwise modelling.
For $k\!\ge\!3$, joint gradient updates over all $\hat{\mathbf A}$
entries allow GCM to synthesise
$\mathbf A^\star$—thereby \emph{escaping} the impossibility bound
for static pairwise networks established in §2.
\end{remark}

\section{Pseudo-code}\label{app:pc}
In this section, we provide the pseudo-code of the proposed framework \textbf{GCM} and \textbf{BBA}.

\begin{algorithm}
\caption{\textbf{GCM}}
\begin{algorithmic}
\STATE \textbf{Input:} Samples $\{\mathcal{X}\}$, Level Mapping $\phi$, Graph Numbers $t$, Learning Rate $\eta$, Coefficient $\alpha$, $\beta$
\STATE \textbf{Output:} Predicted labels $\{\hat{\mathcal{Y}}\}$, Learned graphs $\{\mathcal{G}\}$
\STATE Init $\{\hat{\mathcal{A}}\}\sim\mathcal{U}(0,1))$, Other Trainable Parameters $\theta$
\WHILE{converge}
\FORALL{$\mathbf{X}_i$ in $\{\mathcal{X}\}$}
    \STATE Compute $\tilde{\mathbf{A}}$ according to $\tilde{\mathbf{A}} = \sigma\Bigl(\frac{\mathbf{A} + \mathbf{M}}{\tau}\Bigr)$
    \STATE Compute $\tilde{\mathbf{E}}$ for $\tilde{\mathbf{A}}$ according to $\tilde{E}
\,=\, \frac12 \sum_{i=1}^N \sum_{j\neq i} \tilde{\mathbf{A}}_{ij}$
    \STATE Binarize a batch of $\tilde{\mathbf{A}}$ using $\tilde{\mathbf{E}}$ using \textbf{BBA}
    \STATE Pair $\mathbf{X}_i$ with $\mathcal{G}_{\phi(\mathbf{X}_i)}$ using $\phi(\cdot)$
    \STATE Compute Hidden Layers$\mathbf{H}^k_i = \text{GNN}(\mathbf{X}_i, \mathcal{G}_i)$
    \STATE Compute $\mathbf{e}_i$ using ReadOut
    \STATE Predict Label $\hat{y}_i$ according to  $\hat{y}_i=\textrm{softmax}\left(\mathbf{W}^\top \mathbf{e}_i + \mathbf{b}\right)$
    \STATE Compute $\mathcal{L}_1$ according to $\mathcal{L}_{1}
\,=\,
- \sum_i y_i \log (\hat{y}_i)$
    \STATE Compute $\mathcal L_2
  = -\frac1{|\mathcal P_{\mathrm{pos}}|}
        \sum_{(i,j)\in\mathcal P_{\mathrm{pos}}}
        \log \sigma\!\Bigl(\tfrac{\mathbf e_i^\top\mathbf e_j}{\tau_{cl}}\Bigr)
    -\frac1{|\mathcal P_{\mathrm{neg}}|}
        \sum_{(i,k)\in\mathcal P_{\mathrm{neg}}}
        \log \sigma\!\Bigl(-\tfrac{\mathbf e_i^\top\mathbf e_k}{\tau_{cl}}\Bigr)$
    \STATE Compute $\mathcal{R}$ according to $\mathbf{A} = \sigma\bigl(\hat{\mathbf{A}} + \hat{\mathbf{A}}^\top\bigr)$
    \STATE $\mathcal{L}=\mathcal{L}_1+\alpha*\mathcal{L}_2+\beta*\mathcal{R}$
    \STATE Update $\mathbf{A}$ by $\hat{\mathbf{A}} \coloneqq \hat{\mathbf{A}}
\, - \, \eta
\, \frac{\partial \mathcal{L}}{\partial \tilde{\mathbf{A}}}
\, \frac{\partial \tilde{\mathbf{A}}}{\partial \mathbf{A}}
\, \frac{\partial \mathbf{A}}{\partial \hat{\mathbf{A}}}$
    \STATE Update $\theta$ by $\theta \coloneqq \theta
\, - \, \eta
\, \mathbb{E}_\tau\Bigl[\frac{\partial \mathcal{L}}{\partial \hat{\mathbf{A}}}
\, \frac{\partial \hat{\mathbf{A}}}{\partial \theta}\Bigr]$
\ENDFOR
\ENDWHILE
\end{algorithmic}
\label{alg}
\end{algorithm}

\begin{algorithm}[H]
\caption{\textbf{BBA}}
\label{alg:batch-topk}
\begin{algorithmic}[1]
\STATE \textbf{Input:} A batch of adjacency matrices $\mathbf{A}\in \mathbb{R}^{B\times N\times N}$, Thresholds $\mathbf{k}\in \mathbb{R}^{B}$ consists of the expected edge numbers $k^{(b)}$ of $b$-th adjacency matrix $\mathbf{A}^{(b)}\in \mathbb{R}^{N\times N}$ in the batch
\STATE \textbf{Output:} Batch of network structures $\{\mathcal{G}\}_{i=1}^B$
\STATE Transform $\mathbf{A}^{(b)}$ into $\mathbf{q}^{(b)}\in \mathbb{R}^{N^2}$
\STATE Construct $\mathbf{M}$ using $\mathbf{m}^{(b)}_j =
\begin{cases}
1 & \text{if } j \in \{\mathbf{t}^{(b)}_1, \dots, \mathbf{t}^{(b)}_{k^{(b)}}\}, \\
0 & \text{otherwise.}
\end{cases}$
\STATE Obtain $\mathring{\mathbf{Q}}$ by $\mathring{\mathbf{Q}}^{(b)} = \mathbf{M} \odot \mathbf{Q}$
\STATE Obtain $\{\mathcal{G}\}_{i=1}^B$ by reshape $\mathring{\mathbf{Q}}$ to the shape of $\mathbf{A}$
\end{algorithmic}
\end{algorithm}

\section{Revisiting the Role of Initial Structure}\label{app:ISA}
\begin{table}[!htpb]
\centering
\caption{Impact of Initial Structure on Classic Methods (Accuracy$_{\pm\text{Std}}$)}
\resizebox{\textwidth}{!}{
\begin{tabular}{l|cccccccccccc}
\toprule
Method & \multicolumn{2}{c}{DynHCP$_{\text{Activity}}$} & \multicolumn{2}{c}{DynHCP$_{\text{Age}}$} & \multicolumn{2}{c}{DynHCP$_{\text{Gender}}$} & \multicolumn{2}{c}{CogState} & \multicolumn{2}{c}{SLIM} \\
\cmidrule(r){2-3} \cmidrule(r){4-5} \cmidrule(r){6-7} \cmidrule(r){8-9} \cmidrule(r){10-11}
& Intra & Inter & Intra & Inter & Intra & Inter & Intra & Inter & Intra & Inter \\
\midrule
GCN & 0.774$_{\pm0.022}$ & 0.736$_{\pm0.068}$ & 0.423$_{\pm0.022}$ & 0.455$_{\pm0.053}$ & 0.650$_{\pm0.017}$ & 0.543$_{\pm0.045}$ & 0.290$_{\pm0.023}$ & 0.303$_{\pm0.014}$ & 0.552$_{\pm0.012}$ & 0.543$_{\pm0.072}$ \\
SAGE & 0.791$_{\pm0.005}$ & 0.794$_{\pm0.032}$ & 0.464$_{\pm0.019}$ & 0.411$_{\pm0.034}$ & 0.633$_{\pm0.025}$ & 0.594$_{\pm0.060}$ & 0.334$_{\pm0.012}$ & 0.301$_{\pm0.022}$ & 0.657$_{\pm0.013}$ & 0.596$_{\pm0.053}$ \\
GAT & 0.140$_{\pm0.000}$ & 0.138$_{\pm0.000}$ & 0.212$_{\pm0.000}$ & 0.202$_{\pm0.004}$ & 0.542$_{\pm0.003}$ & 0.532$_{\pm0.004}$ & 0.296$_{\pm0.010}$ & 0.305$_{\pm0.009}$ & 0.622$_{\pm0.029}$ & 0.602$_{\pm0.018}$ \\
GIN & 0.480$_{\pm0.062}$ & 0.495$_{\pm0.058}$ & 0.449$_{\pm0.013}$ & 0.401$_{\pm0.065}$ & 0.572$_{\pm0.012}$ & 0.565$_{\pm0.030}$ & 0.308$_{\pm0.017}$ & 0.280$_{\pm0.023}$ & 0.534$_{\pm0.023}$ & 0.465$_{\pm0.040}$ \\
\midrule
GCN$_{\text{w/o struc}}$ & 0.669$_{\pm0.003}$ & 0.612$_{\pm0.083}$ & 0.618$_{\pm0.023}$\textcolor{red}{$\uparrow$} & 0.408$_{\pm0.027}$ & 0.744$_{\pm0.032}$\textcolor{red}{$\uparrow$} & 0.571$_{\pm0.082}$\textcolor{red}{$\uparrow$} & 0.267$_{\pm0.015}$ & 0.269$_{\pm0.013}$ & 0.547$_{\pm0.018}$ & 0.560$_{\pm0.045}$\textcolor{red}{$\uparrow$} \\
SAGE$_{\text{w/o struc}}$ & 0.874$_{\pm0.012}$\textcolor{red}{$\uparrow$} & 0.823$_{\pm0.021}$\textcolor{red}{$\uparrow$} & 0.604$_{\pm0.011}$\textcolor{red}{$\uparrow$} & 0.410$_{\pm0.035}$ & 0.783$_{\pm0.006}$\textcolor{red}{$\uparrow$} & 0.656$_{\pm0.037}$\textcolor{red}{$\uparrow$} & 0.322$_{\pm0.027}$ & 0.326$_{\pm0.025}$\textcolor{red}{$\uparrow$} & 0.638$_{\pm0.030}$ & 0.613$_{\pm0.012}$\textcolor{red}{$\uparrow$} \\
GAT$_{\text{w/o struc}}$ & 0.940$_{\pm0.022}$\textcolor{red}{$\uparrow$} & 0.847$_{\pm0.062}$\textcolor{red}{$\uparrow$} & 0.826$_{\pm0.007}$\textcolor{red}{$\uparrow$} & 0.455$_{\pm0.027}$\textcolor{red}{$\uparrow$} & 0.894$_{\pm0.012}$\textcolor{red}{$\uparrow$} & 0.643$_{\pm0.047}$\textcolor{red}{$\uparrow$} & 0.210$_{\pm0.003}$ & 0.255$_{\pm0.004}$ & 0.606$_{\pm0.010}$ & 0.574$_{\pm0.025}$ \\
GIN$_{\text{w/o struc}}$ & 0.145$_{\pm0.012}$ & 0.152$_{\pm0.018}$ & 0.451$_{\pm0.015}$\textcolor{red}{$\uparrow$} & 0.442$_{\pm0.003}$\textcolor{red}{$\uparrow$} & 0.545$_{\pm0.007}$ & 0.476$_{\pm0.026}$ & 0.238$_{\pm0.013}$ & 0.247$_{\pm0.016}$ & 0.529$_{\pm0.027}$ & 0.528$_{\pm0.007}$\textcolor{red}{$\uparrow$} \\
\bottomrule
\end{tabular}}
\label{tab:initial_structure}
\end{table}

To highlight the limitations of pairwise methods, we first report our experiments conducted on GNNs in Table~\ref{tab:initial_structure}. We compared the performance of directly inputting a Pearson-based FBN versus using a fully connected network (FCN) as input for the FBN. The results are marked with a red upward arrow to highlight instances where the predictive performance improved after the replacement. Half of the predictions showed varying degrees of improvement, demonstrating that Pearson-based FBNs, as a representative pairwise approach, fail to fully capture complex relationships in brain activity. Furthermore, using static FCNs revealed their limitations in capturing the structural information inherent in brain networks. These findings emphasize the need for FBN structure learning, which allows for the discovery of richer, more context-sensitive representations of brain activity. We conducted same experiments with GSL and BGI SOTAs for further validation, and the corresponding results are provided in Table~\ref{tab:supplementary_multirow}. Again, over half of these SOTAs perform better without relying on original input structures, further underscoring the importance of exploring more reliable FBN construction techniques.

\begin{table}[!htpb]
\centering
\caption{Original vs. w/o Structure Settings (Accuracy$_{\pm\text{Std}}$)}
\resizebox{\textwidth}{!}{
\begin{tabular}{l|cccccccccc}
\toprule
\multirow{2}{*}{Method} & \multicolumn{2}{c}{DynHCP$_\text{Activity}$} & \multicolumn{2}{c}{DynHCP$_\text{Age}$} & \multicolumn{2}{c}{DynHCP$_\text{Gender}$} & \multicolumn{2}{c}{CogState} & \multicolumn{2}{c}{SLIM} \\
\cmidrule(r){2-3} \cmidrule(r){4-5} \cmidrule(r){6-7} \cmidrule(r){8-9} \cmidrule(r){10-11}
& Intra & Inter & Intra & Inter & Intra & Inter & Intra & Inter & Intra & Inter \\
\midrule
\multirow{2}{*}{VIB-GSL}
& 0.925$_{\pm0.006}$ & 0.772$_{\pm0.017}$ & 0.600$_{\pm0.060}$ & 0.429$_{\pm0.031}$ & 0.781$_{\pm0.029}$ & 0.574$_{\pm0.034}$ & 0.313$_{\pm0.014}$ & 0.303$_{\pm0.010}$ & 0.529$_{\pm0.007}$ & 0.537$_{\pm0.013}$ \\
& 0.930$_{\pm0.006}$\textcolor{red}{$\uparrow$} & 0.769$_{\pm0.012}$ & 0.568$_{\pm0.068}$ & 0.435$_{\pm0.035}$\textcolor{red}{$\uparrow$} & 0.791$_{\pm0.015}$\textcolor{red}{$\uparrow$} & 0.596$_{\pm0.043}$\textcolor{red}{$\uparrow$} & 0.315$_{\pm0.010}$\textcolor{red}{$\uparrow$} & 0.301$_{\pm0.008}$ & 0.512$_{\pm0.012}$ & 0.522$_{\pm0.009}$ \\
\midrule
\multirow{2}{*}{HGP-SL}
& 0.715$_{\pm0.010}$ & 0.647$_{\pm0.053}$ & 0.529$_{\pm0.012}$ & 0.404$_{\pm0.022}$ & 0.766$_{\pm0.006}$ & 0.612$_{\pm0.003}$ & 0.286$_{\pm0.005}$ & 0.270$_{\pm0.008}$ & 0.556$_{\pm0.013}$ & 0.552$_{\pm0.009}$ \\
& 0.909$_{\pm0.013}$\textcolor{red}{$\uparrow$} & 0.817$_{\pm0.017}$\textcolor{red}{$\uparrow$} & 0.712$_{\pm0.035}$\textcolor{red}{$\uparrow$} & 0.371$_{\pm0.014}$ & 0.852$_{\pm0.013}$\textcolor{red}{$\uparrow$} & 0.618$_{\pm0.019}$\textcolor{red}{$\uparrow$} & 0.336$_{\pm0.004}$\textcolor{red}{$\uparrow$} & 0.330$_{\pm0.006}$\textcolor{red}{$\uparrow$} & 0.522$_{\pm0.010}$ & 0.504$_{\pm0.009}$ \\
\midrule
\multirow{2}{*}{IBGNN}
& 0.908$_{\pm0.004}$ & 0.586$_{\pm0.278}$ & 0.768$_{\pm0.025}$ & 0.397$_{\pm0.008}$ & 0.858$_{\pm0.027}$ & 0.591$_{\pm0.025}$ & 0.343$_{\pm0.008}$ & 0.300$_{\pm0.012}$ & 0.648$_{\pm0.031}$ & 0.617$_{\pm0.027}$ \\
& 0.842$_{\pm0.056}$ & 0.804$_{\pm0.029}$\textcolor{red}{$\uparrow$} & 0.707$_{\pm0.032}$ & 0.396$_{\pm0.015}$ & 0.847$_{\pm0.044}$ & 0.619$_{\pm0.039}$\textcolor{red}{$\uparrow$} & 0.345$_{\pm0.023}$\textcolor{red}{$\uparrow$} & 0.318$_{\pm0.016}$\textcolor{red}{$\uparrow$} & 0.656$_{\pm0.016}$\textcolor{red}{$\uparrow$} & 0.620$_{\pm0.015}$\textcolor{red}{$\uparrow$} \\
\midrule
\multirow{2}{*}{DGCL}
& 0.831$_{\pm0.013}$ & 0.631$_{\pm0.017}$ & 0.758$_{\pm0.037}$ & 0.392$_{\pm0.033}$ & 0.860$_{\pm0.043}$ & 0.615$_{\pm0.017}$ & 0.316$_{\pm0.011}$ & 0.275$_{\pm0.007}$ & 0.552$_{\pm0.009}$ & 0.539$_{\pm0.017}$ \\
& 0.845$_{\pm0.072}$\textcolor{red}{$\uparrow$} & 0.712$_{\pm0.043}$\textcolor{red}{$\uparrow$} & 0.787$_{\pm0.023}$\textcolor{red}{$\uparrow$} & 0.406$_{\pm0.040}$\textcolor{red}{$\uparrow$} & 0.842$_{\pm0.057}$ & 0.588$_{\pm0.033}$ & 0.319$_{\pm0.011}$\textcolor{red}{$\uparrow$} & 0.297$_{\pm0.007}$\textcolor{red}{$\uparrow$} & 0.548$_{\pm0.022}$ & 0.518$_{\pm0.007}$ \\
\midrule
\multirow{2}{*}{IC-GNN}
& 0.878$_{\pm0.053}$ & 0.744$_{\pm0.092}$ & 0.678$_{\pm0.105}$ & 0.425$_{\pm0.024}$ & 0.851$_{\pm0.065}$ & 0.658$_{\pm0.067}$ & 0.318$_{\pm0.077}$ & 0.313$_{\pm0.053}$ & 0.638$_{\pm0.103}$ & 0.614$_{\pm0.072}$ \\
& 0.881$_{\pm0.024}$\textcolor{red}{$\uparrow$} & 0.767$_{\pm0.042}$\textcolor{red}{$\uparrow$} & 0.653$_{\pm0.046}$ & 0.433$_{\pm0.021}$\textcolor{red}{$\uparrow$} & 0.851$_{\pm0.033}$ & 0.652$_{\pm0.029}$ & 0.321$_{\pm0.035}$\textcolor{red}{$\uparrow$} & 0.314$_{\pm0.028}$\textcolor{red}{$\uparrow$} & 0.642$_{\pm0.053}$\textcolor{red}{$\uparrow$} & 0.634$_{\pm0.022}$\textcolor{red}{$\uparrow$} \\
\midrule
\multirow{2}{*}{BrainIB}
& 0.926$_{\pm0.043}$ & 0.839$_{\pm0.018}$ & 0.818$_{\pm0.035}$ & 0.445$_{\pm0.019}$ & 0.873$_{\pm0.011}$ & 0.662$_{\pm0.020}$ & 0.332$_{\pm0.013}$ & 0.325$_{\pm0.021}$ & 0.644$_{\pm0.023}$ & 0.631$_{\pm0.012}$ \\
& 0.947$_{\pm0.015}$\textcolor{red}{$\uparrow$} & 0.843$_{\pm0.011}$\textcolor{red}{$\uparrow$} & 0.808$_{\pm0.015}$ & 0.453$_{\pm0.017}$\textcolor{red}{$\uparrow$} & 0.853$_{\pm0.009}$ & 0.644$_{\pm0.010}$ & 0.346$_{\pm0.008}$\textcolor{red}{$\uparrow$} & 0.333$_{\pm0.014}$\textcolor{red}{$\uparrow$} & 0.652$_{\pm0.019}$\textcolor{red}{$\uparrow$} & 0.640$_{\pm0.018}$\textcolor{red}{$\uparrow$} \\
\bottomrule
\end{tabular}}
\label{tab:supplementary_multirow}
\end{table}

\section{Structure Learning Results Using Additional Metrics}\label{app:addm}
\begin{table}[!htpb]
\centering
\caption{Comprehensive Performance Metrics (SEN, SPE, AUC)}
\label{tab:simulated_all_metrics_final}
\resizebox{\textwidth}{!}{%
\begin{tabular}{@{}l|ccc|ccc|ccc|ccc|ccc|ccc|ccc|ccc|ccc|ccc@{}}
\toprule
& \multicolumn{6}{c}{\textbf{DynHCP$_{\text{Activity}}$}} & \multicolumn{6}{c}{\textbf{DynHCP$_{\text{Age}}$}} & \multicolumn{6}{c}{\textbf{DynHCP$_{\text{Gender}}$}} & \multicolumn{6}{c}{\textbf{CogState}} & \multicolumn{6}{c}{\textbf{SLIM}} \\
\cmidrule(lr){2-7} \cmidrule(lr){8-13} \cmidrule(lr){14-19} \cmidrule(lr){20-25} \cmidrule(lr){26-31}
& \multicolumn{3}{c}{Intra} & \multicolumn{3}{c}{Inter} & \multicolumn{3}{c}{Intra} & \multicolumn{3}{c}{Inter} & \multicolumn{3}{c}{Intra} & \multicolumn{3}{c}{Inter} & \multicolumn{3}{c}{Intra} & \multicolumn{3}{c}{Inter} & \multicolumn{3}{c}{Intra} & \multicolumn{3}{c}{Inter} \\
\cmidrule(lr){2-4} \cmidrule(lr){5-7} \cmidrule(lr){8-10} \cmidrule(lr){11-13} \cmidrule(lr){14-16} \cmidrule(lr){17-19} \cmidrule(lr){20-22} \cmidrule(lr){23-25} \cmidrule(lr){26-28} \cmidrule(lr){29-31}
\textbf{Method} & SEN & SPE & AUC & SEN & SPE & AUC & SEN & SPE & AUC & SEN & SPE & AUC & SEN & SPE & AUC & SEN & SPE & AUC & SEN & SPE & AUC & SEN & SPE & AUC & SEN & SPE & AUC & SEN & SPE & AUC \\
\midrule
\multicolumn{31}{@{}l}{\textit{--- Traditional (Sample-Level) ---}} \\
LR & 0.941 & 0.937 & 0.978 & 0.823 & 0.819 & 0.890 & 0.811 & 0.807 & 0.885 & 0.360 & 0.354 & 0.685 & 0.880 & 0.878 & 0.929 & 0.627 & 0.623 & 0.680 & 0.279 & 0.273 & 0.610 & 0.221 & 0.215 & 0.550 & 0.525 & 0.517 & 0.575 & 0.542 & 0.536 & 0.590 \\
RF & 0.947 & 0.943 & 0.984 & 0.828 & 0.822 & 0.895 & 0.815 & 0.809 & 0.888 & 0.412 & 0.406 & 0.720 & 0.826 & 0.822 & 0.881 & 0.631 & 0.627 & 0.685 & 0.291 & 0.285 & 0.620 & 0.268 & 0.260 & 0.595 & 0.638 & 0.632 & 0.695 & 0.629 & 0.623 & 0.688 \\
XGBoost & 0.961 & 0.957 & 0.990 & 0.835 & 0.829 & 0.901 & 0.746 & 0.740 & 0.820 & 0.391 & 0.385 & 0.705 & 0.800 & 0.796 & 0.858 & 0.613 & 0.609 & 0.670 & 0.310 & 0.304 & 0.635 & 0.285 & 0.279 & 0.610 & 0.626 & 0.620 & 0.685 & 0.613 & 0.607 & 0.670 \\
SVM & 0.954 & 0.950 & 0.987 & 0.845 & 0.839 & 0.910 & 0.830 & 0.824 & 0.898 & 0.398 & 0.390 & 0.710 & 0.885 & 0.879 & 0.933 & 0.634 & 0.628 & 0.690 & 0.324 & 0.318 & 0.645 & 0.273 & 0.267 & 0.600 & 0.570 & 0.564 & 0.620 & 0.557 & 0.551 & 0.610 \\
MLP & 0.953 & 0.949 & 0.986 & \underline{0.849} & \underline{0.845} & \underline{0.914} & 0.791 & 0.785 & 0.855 & 0.375 & 0.369 & 0.695 & 0.906 & 0.900 & 0.950 & 0.646 & 0.640 & 0.700 & \underline{0.379} & \underline{0.373} & \underline{0.702} & 0.276 & 0.270 & 0.605 & 0.521 & 0.515 & 0.570 & 0.552 & 0.546 & 0.605 \\
\midrule
\multicolumn{31}{@{}l}{\textit{--- GNN (Sample-Level) ---}} \\
GCN & 0.778 & 0.770 & 0.845 & 0.739 & 0.733 & 0.810 & 0.426 & 0.420 & 0.730 & \underline{0.458} & \underline{0.452} & 0.735 & 0.653 & 0.647 & 0.710 & 0.546 & 0.540 & 0.600 & 0.293 & 0.287 & 0.622 & 0.306 & 0.300 & 0.630 & 0.555 & 0.549 & 0.610 & 0.546 & 0.540 & 0.600 \\
SAGE & 0.795 & 0.787 & 0.860 & 0.797 & 0.791 & 0.865 & 0.467 & 0.461 & 0.760 & 0.414 & 0.408 & 0.725 & 0.636 & 0.630 & 0.690 & 0.597 & 0.591 & 0.650 & 0.337 & 0.331 & 0.658 & 0.304 & 0.298 & 0.628 & \underline{0.660} & \underline{0.654} & \underline{0.715} & 0.599 & 0.593 & 0.655 \\
GAT & 0.142 & 0.138 & 0.500 & 0.140 & 0.136 & 0.500 & 0.215 & 0.209 & 0.550 & 0.205 & 0.199 & 0.540 & 0.545 & 0.539 & 0.600 & 0.535 & 0.529 & 0.590 & 0.299 & 0.293 & 0.625 & 0.308 & 0.302 & 0.632 & 0.625 & 0.619 & 0.680 & 0.605 & 0.599 & 0.660 \\
GIN & 0.484 & 0.476 & 0.770 & 0.499 & 0.491 & 0.780 & 0.452 & 0.446 & 0.750 & 0.404 & 0.398 & 0.715 & 0.575 & 0.569 & 0.630 & 0.568 & 0.562 & 0.620 & 0.311 & 0.305 & 0.635 & 0.283 & 0.277 & 0.610 & 0.537 & 0.531 & 0.590 & 0.468 & 0.462 & 0.530 \\
\midrule
\multicolumn{31}{@{}l}{\textit{--- GSL \& BGI SOTAs (Sample-Level) ---}} \\
VIB-GSL & 0.927 & 0.923 & 0.970 & 0.775 & 0.769 & 0.845 & 0.603 & 0.597 & 0.820 & 0.432 & 0.426 & 0.735 & 0.784 & 0.778 & 0.845 & 0.577 & 0.571 & 0.630 & 0.316 & 0.310 & 0.640 & 0.306 & 0.300 & 0.630 & 0.532 & 0.526 & 0.585 & 0.540 & 0.534 & 0.590 \\
HGP-SL & 0.719 & 0.711 & 0.810 & 0.650 & 0.644 & 0.710 & 0.532 & 0.526 & 0.790 & 0.407 & 0.401 & 0.715 & 0.769 & 0.763 & 0.830 & 0.615 & 0.609 & 0.670 & 0.289 & 0.283 & 0.620 & 0.273 & 0.267 & 0.600 & 0.559 & 0.553 & 0.615 & 0.555 & 0.549 & 0.610 \\
DGCL & 0.834 & 0.828 & 0.900 & 0.634 & 0.628 & 0.690 & 0.761 & 0.755 & 0.830 & 0.395 & 0.389 & 0.710 & 0.863 & 0.857 & 0.920 & 0.618 & 0.612 & 0.675 & 0.319 & 0.313 & 0.640 & 0.278 & 0.272 & 0.605 & 0.555 & 0.549 & 0.610 & 0.542 & 0.536 & 0.595 \\
IBGNN & 0.910 & 0.906 & 0.960 & 0.589 & 0.583 & 0.650 & 0.771 & 0.765 & 0.840 & 0.400 & 0.394 & 0.710 & 0.861 & 0.855 & 0.920 & 0.594 & 0.588 & 0.650 & 0.346 & 0.340 & 0.665 & 0.303 & 0.297 & 0.625 & 0.651 & 0.645 & 0.705 & 0.620 & 0.614 & 0.675 \\
IC-GNN & 0.881 & 0.875 & 0.930 & 0.747 & 0.741 & 0.820 & 0.681 & 0.675 & 0.800 & 0.428 & 0.422 & 0.730 & 0.854 & 0.848 & 0.910 & 0.661 & 0.655 & 0.720 & 0.321 & 0.315 & 0.645 & 0.316 & 0.310 & 0.640 & 0.641 & 0.635 & 0.695 & 0.617 & 0.611 & 0.670 \\
BrainIB & 0.928 & 0.924 & 0.970 & 0.842 & 0.836 & 0.905 & 0.821 & 0.815 & 0.890 & 0.448 & 0.442 & \underline{0.740} & 0.876 & 0.870 & 0.925 & 0.665 & 0.659 & 0.725 & 0.335 & 0.329 & 0.655 & \underline{0.328} & \underline{0.322} & \underline{0.650} & 0.647 & 0.641 & 0.700 & \underline{0.634} & \underline{0.628} & \underline{0.690} \\
\midrule
\multicolumn{31}{@{}l}{\textit{--- High-Order SOTAs \& Proposed GCM (Sample-Level) ---}} \\
BNT & \textbf{0.976} & \textbf{0.972} & \textbf{0.998} & 0.842 & 0.836 & 0.908 & \textbf{0.948} & \textbf{0.942} & \textbf{0.985} & 0.389 & 0.383 & 0.705 & \textbf{0.979} & \textbf{0.973} & \textbf{0.995} & \textbf{0.681} & \textbf{0.675} & \textbf{0.735} & \textbf{0.399} & \textbf{0.393} & \textbf{0.720} & 0.340 & 0.334 & 0.665 & 0.556 & 0.550 & 0.610 & 0.532 & 0.526 & 0.580 \\
Hybrid & 0.805 & 0.797 & 0.870 & 0.769 & 0.763 & 0.840 & 0.528 & 0.522 & 0.790 & 0.445 & 0.439 & 0.495 & 0.791 & 0.785 & 0.850 & 0.595 & 0.589 & 0.645 & 0.296 & 0.290 & 0.625 & 0.289 & 0.283 & 0.615 & 0.540 & 0.534 & 0.590 & 0.525 & 0.519 & 0.575 \\
GCM$_\text{sample}$ & \underline{0.965} & \underline{0.961} & \underline{0.992} & \textbf{0.855} & \textbf{0.849} & \textbf{0.918} & \underline{0.856} & \underline{0.850} & \underline{0.915} & \textbf{0.466} & \textbf{0.460} & \textbf{0.752} & \underline{0.940} & \underline{0.934} & \underline{0.978} & \underline{0.671} & \underline{0.665} & \underline{0.728} & 0.355 & 0.349 & 0.680 & \textbf{0.359} & \textbf{0.353} & \textbf{0.685} & \textbf{0.683} & \textbf{0.677} & \textbf{0.745} & \textbf{0.661} & \textbf{0.655} & \textbf{0.720} \\
\midrule
\multicolumn{31}{@{}l}{\textit{--- Subject-Level Models ---}} \\
IGS & 0.865 & 0.857 & 0.925 & 0.859 & 0.853 & \textbf{0.920} & 0.811 & 0.805 & 0.885 & 0.463 & 0.457 & 0.510 & 0.885 & 0.879 & 0.932 & 0.611 & 0.605 & 0.665 & 0.329 & 0.323 & 0.652 & 0.326 & 0.320 & 0.650 & 0.544 & 0.538 & 0.595 & 0.655 & 0.649 & 0.710 \\
GCM$_\text{subject}$ & \textbf{0.963} & \textbf{0.965} & \textbf{0.994} & \textbf{0.860} & \textbf{0.854} & \textbf{0.920} & \textbf{0.892} & \textbf{0.886} & \textbf{0.942} & \textbf{0.476} & \textbf{0.470} & \textbf{0.525} & \textbf{0.942} & \textbf{0.936} & \textbf{0.980} & \textbf{0.673} & \textbf{0.667} & \textbf{0.730} & \textbf{0.416} & \textbf{0.410} & \textbf{0.740} & \textbf{0.367} & \textbf{0.361} & \textbf{0.695} & \textbf{0.684} & \textbf{0.678} & \textbf{0.745} & \textbf{0.677} & \textbf{0.671} & \textbf{0.735} \\
\midrule
\multicolumn{31}{@{}l}{\textit{--- Group-Level Models ---}} \\
IBGNN+ & 0.769 & 0.761 & 0.835 & 0.456 & 0.450 & 0.505 & 0.789 & 0.783 & 0.850 & 0.383 & 0.377 & 0.420 & 0.884 & 0.878 & 0.930 & 0.599 & 0.593 & 0.650 & 0.359 & 0.353 & 0.680 & 0.318 & 0.312 & 0.642 & \textbf{0.670} & \textbf{0.664} & \textbf{0.725} & \textbf{0.647} & \textbf{0.641} & \textbf{0.700} \\
GCM$_\text{project}$ & \textbf{0.971} & \textbf{0.967} & \textbf{0.996} & \textbf{0.872} & \textbf{0.866} & \textbf{0.928} & \textbf{0.881} & \textbf{0.875} & \textbf{0.935} & \textbf{0.465} & \textbf{0.459} & \textbf{0.515} & \textbf{0.936} & \textbf{0.930} & \textbf{0.978} & \textbf{0.677} & \textbf{0.671} & \textbf{0.731} & \textbf{0.440} & \textbf{0.434} & \textbf{0.761} & \textbf{0.422} & \textbf{0.416} & \textbf{0.748} & 0.658 & 0.652 & 0.710 & \textbf{0.647} & \textbf{0.641} & 0.698 \\
\midrule
\multicolumn{31}{@{}l}{\textit{--- Project-Level Model ---}} \\
GCM$_\text{group}$ & 0.973 & 0.969 & 0.997 & 0.858 & 0.852 & 0.915 & 0.894 & 0.888 & 0.945 & 0.492 & 0.486 & 0.540 & 0.961 & 0.955 & 0.989 & 0.689 & 0.683 & 0.745 & 0.496 & 0.490 & 0.805 & 0.457 & 0.451 & 0.775 & 0.695 & 0.689 & 0.760 & 0.697 & 0.691 & 0.765 \\
\bottomrule
\end{tabular}%
}
\end{table}
To provide a more comprehensive analysis, we present the comparative results under various evaluation metrics, and the findings are consistent with those reported in the main text. Overall, our proposed \textbf{GCM} framework for four different resolutions rank at the top across the new metrics (SEN, SPE, and AUC), which further validates the effectiveness and robustness of our approach. The analysis again reveals that at the sample-level, although BNT shows exceptionally strong performance in intra-subject tasks, \textbf{GCM}$_\text{sample}$ demonstrates more comprehensive advantages in the more challenging inter-subject generalization tasks. Furthermore, at higher resolutions, GCM$_\text{subject}$ and GCM$_\text{group}$ also exhibit outstanding performance, once again validating the value of our multi-resolution modeling framework.

\section{Sensitivity Analysis}\label{app:SA}

\begin{figure}[!htpb]
\centering
\begin{minipage}{0.46\textwidth}
  \centering
  \includegraphics[width=\textwidth]{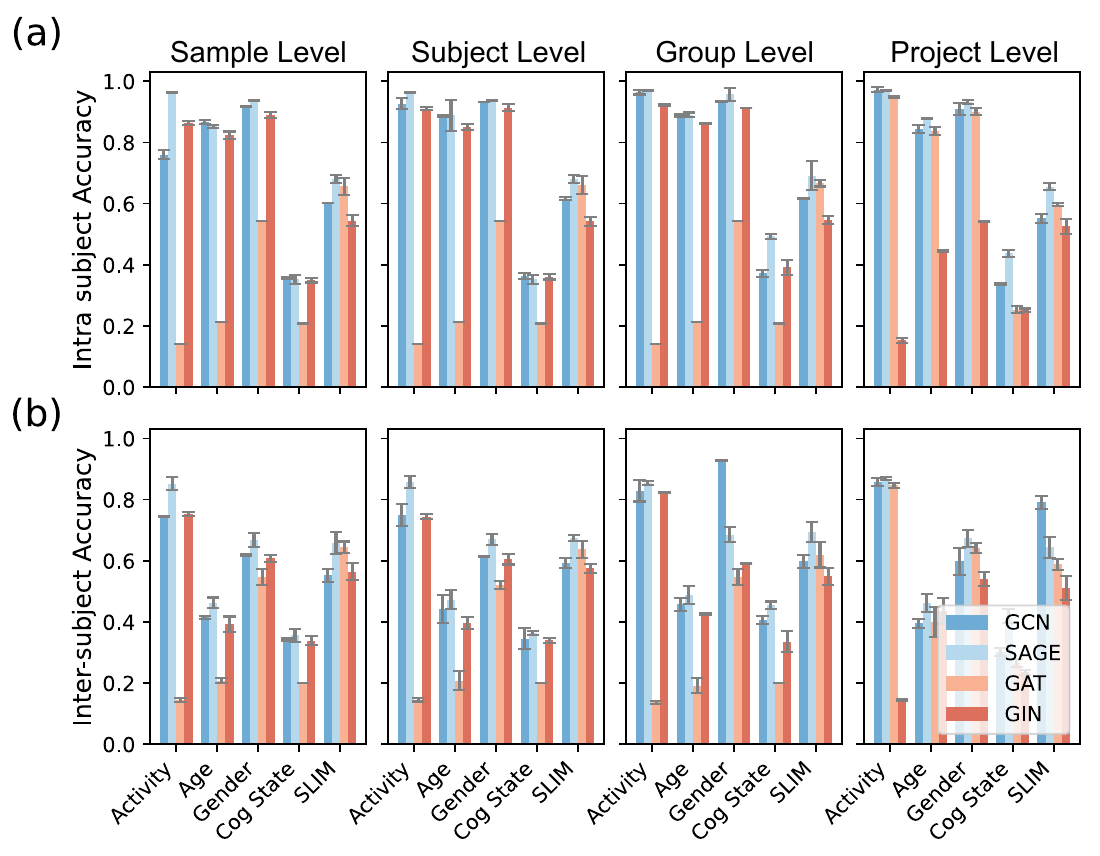} % 替换为你的图片文件名，路径包含figs文件夹
  \caption{Accuracy of \textbf{GCM} using different GNN backbones.}
  \label{backbone}
\end{minipage}\hfill
\begin{minipage}{0.5\textwidth}
  \centering
  \includegraphics[width=\textwidth]{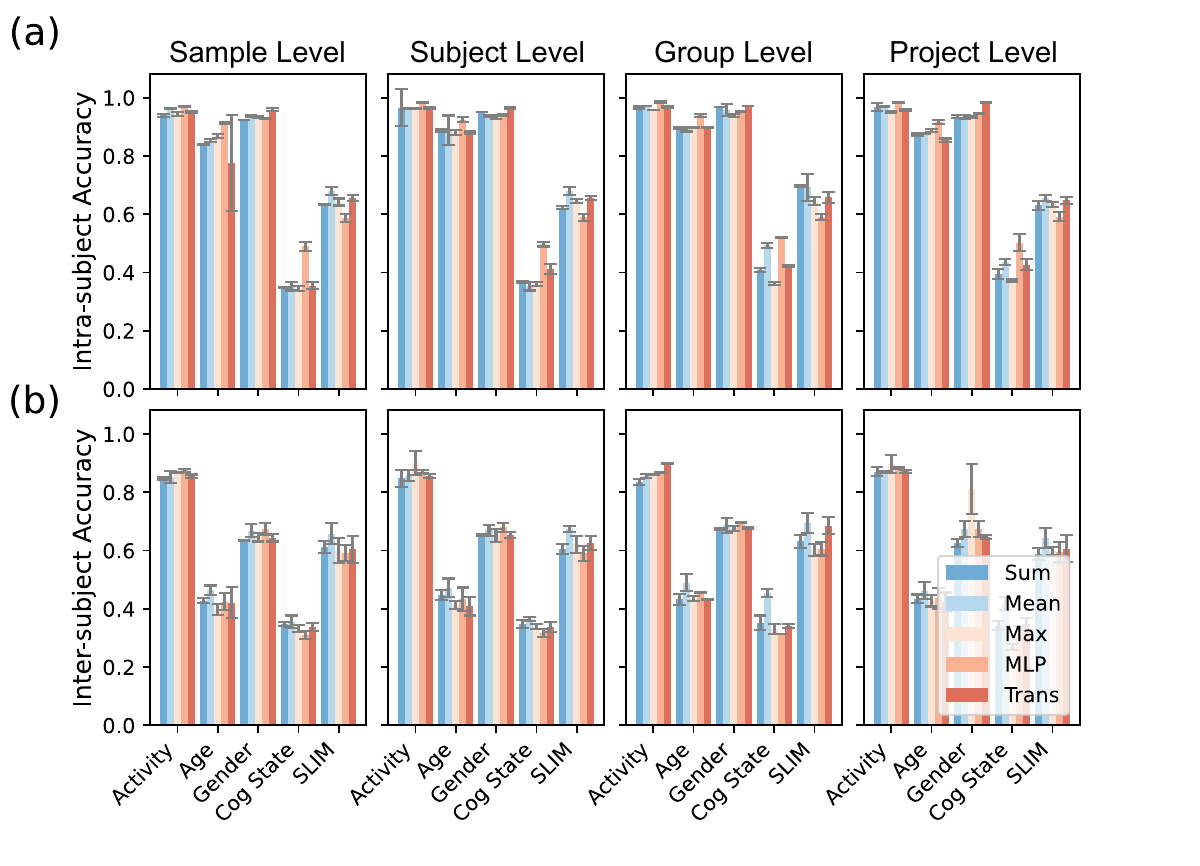} % 替换为你的图片文件名，路径包含figs文件夹
  \caption{Accuracy of \textbf{GCM} using different read out functions.}
  \label{readoutres}
\end{minipage}
\end{figure}
\paragraph{Choice of GNN Backbone}
The GNN module serves as the core of \textbf{GCM}, modeling local node interactions and extracting global representation of data. Figure~\ref{backbone} presents the performance of different GNN backbones. \textbf{SAGE} demonstrates superiority over other architectures under most of conditions, making it the preferred choice for our applications. Notably, further trials, such as using GCN on DynHCP$_{Activity}$ at the \textit{project-level}, indicate potential for higher accuracy.

% \begin{figure}[htpb]
%   \centering
%   \includegraphics[width=0.65\textwidth]{figs/GNNres.pdf} % 替换为你的图片文件名，路径包含figs文件夹
%   \caption{Accuracy of \textbf{GCM} using different GNN backbones.}
%   \label{backbone}
% \end{figure}

% \paragraph{Choice of Readout Function}
% \begin{figure}[!ht]
%   \centering
%   \includegraphics[width=0.7\textwidth]{figs/ROres.pdf} % 替换为你的图片文件名，路径包含figs文件夹
%   \caption{Accuracy of \textbf{GCM} using different read out functions.}
%   \label{readoutres}
% \end{figure}
The readout function determines how node embeddings are aggregated into graph-level representations. We evaluate the performance of various readout functions for \textbf{GCM}, as shown in Figure~\ref{readoutres}. MLP and Transformer are trainable, others are static. The results indicate that a simple mean readout is effective for most conditions. In some cases, trainable functions like MLP or Transformer provide improved precision, suggesting that the choice of readout function should align with specific application requirements.

\paragraph{Parameter Sensitivity}
Based on the training strategy, we further evaluate the prediction sensitivity to the hyperparameters $\alpha$ and $\beta$. The results are shown in Figure~\ref{sensitivity}. In general, the performance of \textbf{GCM} is relatively stable to different choices of these two hyperparameters. However, the results also show that the predictability of \textbf{GCM} can be enhanced with a proper coefficient set. For example, higher $\alpha$ and moderate $\beta$ show better predicting accuracy in group level conditions.
\begin{figure}[!ht]
  \centering
  \includegraphics[width=0.65\textwidth]{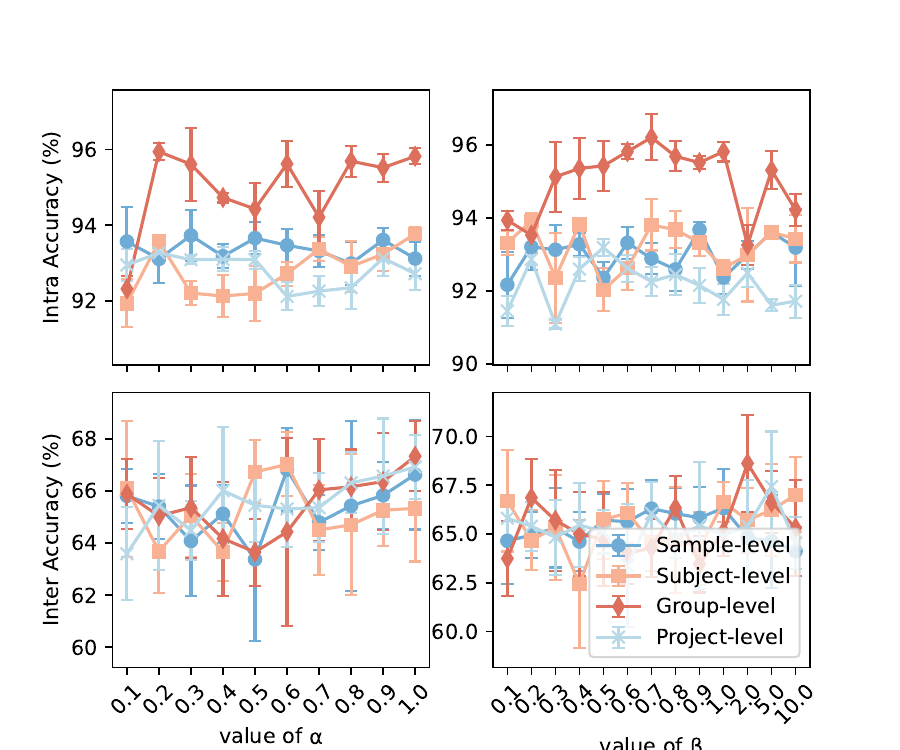} % 替换为你的图片文件名，路径包含figs文件夹
  \caption{Results using different values of $\alpha$ and $\beta$.}
  \label{sensitivity}
\end{figure}

\section{Visualization of FBN Structures}\label{app:VFS}
We visualize the top-100 weighted edges in male and female FBNs learned by competitors. Subgraph-based methods are omitted because we focus on FBN structure learning for the entire brain rather than sub-FBN extraction. Nodes are represented as colored segments along the circle, edges as curves, and segment widths indicating node degrees. Because our aim is to extract high-order patterns from MTS, and the co-activated brain regions are depicted in the main paper as \textbf{Fibure~\ref{VisComp}}, we mainly present the comparison on edge and node distributions of the learned FBNs between baselines. According to the visualizations, the averaged FBN structures of baselines represent more randomness compared to the structure learned by \textbf{GCM}$\text{group}$, emphasizing the inadequacy of simple averaging across research levels. Prominent hub nodes can be observed in the FBNs learned by \textbf{GCM}$\text{group}$, further enhancing the result that high-order interactions are not randomized~\cite{varley2023partial}. Notably, \textbf{GCM}$\text{group}$ reveals distinct patterns: female FBNs have more nodes and fewer hubs than male FBNs, which aligns with the conclusions of the previous study in neuroscience~\cite{ingalhalikar2014sex}. demonstrating that the learned structures are both meaningful and non-trivial.

\begin{figure}[tbp]
  \centering
  \includegraphics[width=\textwidth]{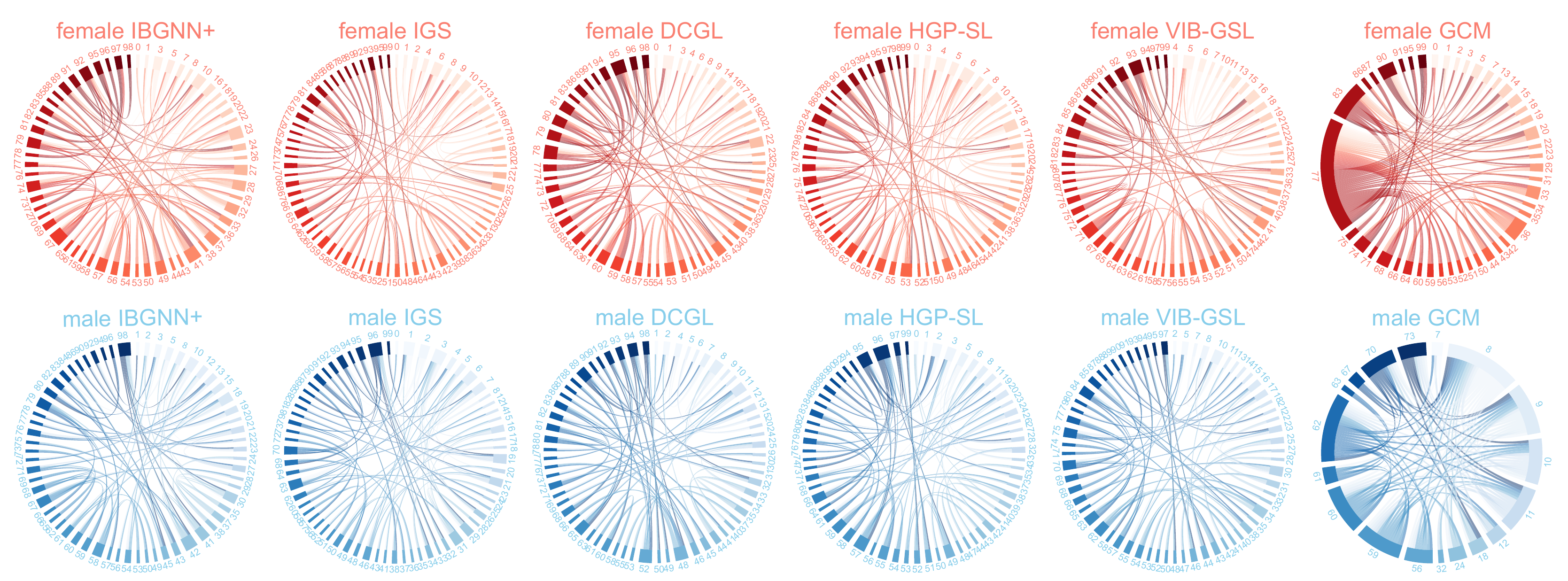} % 替换为你的图片文件名，路径包含figs文件夹
  \caption{Male (red) and female (blue) FBNs on DynHCP$_{\text{Gender}}$ show top 100 weighted edges obtained by different methods.}
  \label{net}
\end{figure}

\section{Reproduction Settings}\label{app:ps}
The detailed settings of our experiment are provided in this section.
\subsection{Parameter Settings}
For the traditional methods, they are implemented using the \texttt{scikit-learn} package with their default parameter settings. For the method needing input structures, we computed graph structures by selecting the top-$10\%$ entries of the Pearson correlation matrices calculated from samples. We also set a scenario where the input structures are fully connected networks, representing the same condition as our proposed \textbf{GCM} where there are no initial structures. We randomly selected $10$ samples for each subject on each data set for speeding up, except for SLIM. We shuffled and split each data set into training/validation/test sets according to each task by the ratio of $7:1:2$.

We set $\tau=1$ in the Gumbel-softmax, $\tau_{cl}=1$ in the contrastive loss, $\alpha$ and $\beta$ are selected by grid searching. \textbf{SAGE} is set as the backbone for \textbf{GCM} in comparison. Common parameter settings are summarized as follows:
\begin{itemize}
    \item \textbf{Learning Rate} is set as $0.001$ for each competitors.
    \item \textbf{Epochs} are set to $1,000$ for each competitors.
    \item \textbf{Early Stop} thresholds are set to $10$ for deep models.
    \item \textbf{Batch Size} is set to 32 for deep models.
    \item \textbf{Layers} are set as $2$ for deep models.
    \item \textbf{Hidden Dimensions} are set as $128$ for deep models.
    \item \textbf{Output Dimensions} are set as $2$ for deep models.
    \item \textbf{Backbone} is set for GSL and BGS models as default: GIN for VIB-GSL, GCN for HGP-SL, IBGNN and IGS, respectively.
    \item \textbf{Readout} is set as Mean pool in GNN-based methods for fair comparison except for HGP-SL, which is set as HGPSL-Pool by default.
    \item \textbf{Head} is set to $8$ for GAT.
    \item \textbf{Network Density} is set to $10\%$ for methods needing a fixed threshold.
    \item \textbf{The rest of the parameters} are set as default.
\end{itemize}

\subsection{Environment}
All the experiments were performed on a Linux cluster equipped with Intel(R) Xeon(R) Platinum 8358P CPU @ 2.60GHz, Nvidia A800 80G GPU and 520 GB RAM.

\subsection{Data Description and Preprocessing}
\paragraph{Human Connectome Project (HCP)} The Human Connectome Project\footnote{\url{https://www.humanconnectome.org}} offers a valuable, publicly accessible neuroimaging dataset that encompasses a comprehensive array of imaging, behavioural, and cognitive data. Our study leverages dynamic networks derived from both resting-state and seven distinct task-based fMRI paradigms. Thus, there are three distinct predicting objectives according to the dataset, namely \textit{Activity}, \textit{Age}, and \textit{Gender}, respectively. We use an open-source version standardized and processed by NeuroGraph\footnote{\url{anwar-said.github.io/anwarsaid/neurograph.html}}. The dynamic graphs are generated via a sliding window approach, characterized as a window length of $50$, a stride of $3$, and a dynamic length of $150$, which is applied to the preprocessed time series.

\paragraph{Cognitive State} Cognitive State\footnote{\url{openneuro.org/datasets/ds004148}} dataset (Cog State) collects EEG data from a group of $60$ healthy undergraduate subjects, resulting in a total of $900$ records. The primary objective is to validate the reproducibility of EEG metrics across different cognitive states. The EEG records are categorized into five cognitive states: Eyes Open (EO), Eyes Closed (EC), Math (Ma), Memory (Me), and Music (Mu). Each EEG record, originally sampled at a resolution of $500$ Hz over a period of $5$ minutes, thus comprises substantial $150,000$ data points across $61$ channels. These channels adhere to the extended $10$-$20$ system, a widely recognized standard in EEG studies. We initially down-sampled the data to $100$ Hz using EEGLAB\footnote{\url{sccn.ucsd.edu/eeglab/index.php}}, a comprehensive MATLAB\footnote{\url{mathworks.com/products/matlab.html}} plug-in designed for EEG data analysis. The down-sampling process resulted in a manageable $61\times 30,000$ matrix per record. Subsequently, we employed a non-overlapping sliding window technique to segment the data into manageable $3$-second fragments. This segmentation yielded approximately $100$ fragments per subject, each represented as a $61\times 300$ matrix and retained its corresponding cognitive state label.

\paragraph{SLIM} The SLIM\footnote{\url{https://fcon_1000.projects.nitrc.org/indi/retro/southwestuni_qiu_index.html}} dataset includes rest-state fMRI data from a cohort of $1,045$ subjects, using gender as the labels for analysis. We screened out the incomplete data and retained $541$ subjects. We constructed brain functional networks based on the Power $264$ atlas, a well-established brain parcellation template. We remove $28$ undefined regions resulting in $236$ ROIs as the nodes of FBN. The methodology for this construction is twofold:
\begin{enumerate}
\item Using DPABI\footnote{\url{rfmri.org/DPABI}}, a MATLAB plug-in, we delineate brain regions according to Power's cortical parcellation template, identifying $264$ regions. The time series signal for each region is derived by averaging voxel signals within a $5$mm radius sphere.
\item A $30$-second, non-overlapping sliding window technique is applied to segment time series into fragments (dimensions: $236 \times 30$). Pearson correlations between brain regions within each window are calculated, forming dynamic connectivity matrices ($236 \times 236$).
\end{enumerate}

\section{Introduction of Baselines}\label{app:ib}
{

\begin{table}[h]

\centering

\caption{Taxonomy of Baselines}\label{taxonomy}

\resizebox{0.6\textwidth}{!}{

\begin{tabular}{lccccc}

%{@{}m{1.4cm}m{1.2cm}m{1.2cm}m{1.2cm}m{1.2cm}@{}}

\toprule

Characteristic & Traditional & GNN & BGI & GSL & \textbf{GCM}   \\ \midrule

End-to-End & \color{red}\ding{55} & \color{blue}\ding{51} & \color{blue}\ding{51} & \color{blue}\ding{51} & \color{blue}\ding{51} \\

Structure Learning & \color{red}\ding{55} & \color{red}\ding{55} & \color{blue}\ding{51} & \color{blue}\ding{51} & \color{blue}\ding{51} \\

Global Constraint & \color{red}\ding{55} & \color{red}\ding{55} & \color{red}\ding{55} & \color{red}\ding{55} & \color{blue}\ding{51} \\

Multi Resolution & \color{red}\ding{55} & \color{red}\ding{55} & \color{red}\ding{55} & \color{red}\ding{55} & \color{blue}\ding{51} \\ % definition

Adaptive Threshold & \color{red}\ding{55} & \color{red}\ding{55} & \color{red}\ding{55} & \color{red}\ding{55} & \color{blue}\ding{51} \\ \bottomrule

\end{tabular}

}

\end{table}

}

To verify the suitability of \textbf{GCM} on FBN learning, we choose the following baselines. Property comparison between \textbf{GCM} and baselines are summarized in Tabel~\ref{taxonomy}.
\paragraph{Traditional Models}
To ensure a fair assessment, these methods are implemented using \texttt{scikit-learn} package\footnote{\url{scikit-learn.org}}.
\begin{itemize}
\item \textbf{LR}: logistic regression, using a logistic function to model the relationship between features and the target variable.
\item \textbf{SVM}: Support vector machine, which determines the decision plane through support vectors. We use \texttt{RBF} kernel for implementation.
\item \textbf{RF}: Random Forest, an ensemble method that operates by constructing a multitude of decision trees and outputs the mode of their predictions.
\item \textbf{XGBoost}: Extreme Gradient Boosting, an efficient implementation of gradient boosting trees that builds models in a sequential, stage-wise fashion.
\item \textbf{MLP}: feed-forward artificial neural network, features multiple layers of interconnected `neuron'.
\end{itemize}

\paragraph{GNN Models}
To ensure a fair assessment, these models are implemented in a standardized dense form as \textbf{GCM} using \texttt{PyTorch Geometric}\footnote{\url{pytorch-geometric.readthedocs.io/en/latest/}}.
\begin{itemize}
\item \textbf{GCN}~\cite{thomas2017gcn}: graph convolutional network, adapts convolutional processes to graph structures.
\item \textbf{SAGE}~\cite{hamilton2017inductive}: an inductive learning framework that creates node embeddings by sampling and aggregating features from the nodes' local neighbourhoods.
\item \textbf{GAT}~\cite{petar2018gat}: graph attention network, utilizes self-attention mechanisms to assign varying degrees of importance to different nodes in a graph.
\item \textbf{GIN}~\cite{xu2018how}: graph isomorphic network, aims to match the distinguishing power of the Weisfeiler-Lehman graph isomorphism test.
\end{itemize}

\paragraph{GSL SOTAs}
The open-source methods are experimented with using the original code released by their authors.
\begin{itemize}
\item \textbf{VIB-GSL}~\cite{sun2022graph}: a framework employing the information bottleneck principle to selectively filter out irrelevant information thus enhancing graph representation learning.
\item \textbf{HGP-SL}~\cite{zhang2019hierarchical}: combines structure learning with hierarchical graph pooling in a GNN to capture and preserve vital topological information from graphs.
\item \textbf{DGCL}~\cite{zong2024new}: learn FBNs based on pair-wise interactions between features learned by the encoder designed for DTI data. We re-implement it according to the original paper to suit EEG and fMRI data.
\end{itemize}

\paragraph{BGI SOTAs}
These methods are experimented with using the original code released by their authors.
\begin{itemize}
\item \textbf{IBGNN}~\cite{cui2022interpretable}: constructs FBNs by Pearson correlation and utilizes a two-layer MLP to merge topological information for prediction.
\item \textbf{IBGNN+}~\cite{cui2022interpretable}: learns a sparsified mask as a unified filter on the FBNs after IBGNN is trained.
\item \textbf{IGS}~\cite{li2023interpretable}: learns to iteratively remove task-irrelevant edges of FBNs during training for sparsification.
\item \textbf{IC-GNN}~\cite{zheng2024ci}: A Granger causality-inspired graph neural network using subgraph for interpretable brain network-based psychiatric diagnosis.
\item \textbf{BrainIB}~\cite{zheng2024brainib}: A mutual information-based framework for interpretable and robust brain network analysis in psychiatric diagnosis.
\end{itemize}

\paragraph{Additional SOTA Models}
These models were incorporated as suggested during the review process to further benchmark GCM's performance.
\begin{itemize}
    \item \textbf{BNT}~\cite{kan2022brain}: Brain Network Transformer, a state-of-the-art model that applies a Transformer architecture to brain networks, using attention mechanisms to achieve high predictive performance.
    \item \textbf{Hybrid}~\cite{qiu2024learning}: A conceptually related model that learns a fixed quantity of explicit high-order interactions (hyperedges) in an end-to-end manner.
\end{itemize}

\section{Limitations}\label{app:Limit}
The limitations of \textbf{GCM} framework are summarized as follow: (1) Though the framework is generalizable to other MTS-based tasks, further validation on diverse modalities (e.g., MEG) and clinical conditions is needed. (2) Although \textbf{GCM} captures high-order synchronization and yields interpretable results, the underlying neurobiological mechanisms remain underexplored. (3) While we formalize the semantic distinctions across four modeling resolutions, their joint modeling and integration remain unaddressed. (4) The experiments in this study are conducted on datasets with relatively balanced class distributions. The performance of the contrastive loss, in particular, on highly imbalanced clinical datasets is an important area for future investigation.

\section{Societal Impacts}\label{Societal}
Given that the research in this paper involves neurological analysis, it could be used as an approach to disease diagnosis. Thus, it is essential to declare the potential negative societal impacts of this study, even though it is currently in the research phase and has not yet been applied in practice. Specifically, in the process of AI-assisted disease diagnosis, erroneous results are inevitable. Such errors can have severe consequences for patients and society. Therefore, in real-world medical diagnostic scenarios, the final decision should always rest with the physician’s diagnosis.
%%%%%%%%%%%%%%%%%%%%%%%%%%%%%%%%%%%%%%%%%%%%%%%%%%%%%%%%%%%%
\end{document}